\documentclass{article}

\usepackage{microtype}
\usepackage{graphicx}
\usepackage{subcaption}
\usepackage{booktabs} % for professional tables

\usepackage{hyperref}

\usepackage[accepted]{icml2026}

\usepackage{amsmath}
\usepackage{amssymb}
\usepackage{mathtools}
\usepackage{amsthm}
\usepackage{enumitem}
\usepackage{lipsum}
\usepackage{adjustbox}
\usepackage[table]{xcolor}
\usepackage{multirow}
\usepackage{colortbl}
\usepackage{threeparttable}
\usepackage{fontawesome5}
\usepackage[capitalize,noabbrev]{cleveref}
\theoremstyle{plain}
\newtheorem{theorem}{Theorem}[section]
\newtheorem{proposition}[theorem]{Proposition}
\newtheorem{lemma}[theorem]{Lemma}

\theoremstyle{definition}
\newtheorem{definition}[theorem]{Definition}

\theoremstyle{remark}
\newtheorem{remark}[theorem]{Remark}

\usepackage[textsize=tiny]{todonotes}

\definecolor{boxcolor}{HTML}{f3fbf2}
\definecolor{boxbordercolor}{HTML}{92c2a6}
\definecolor{bestcolor}{HTML}{53bd7d}
\definecolor{secondcolor}{HTML}{a1e7b8} % d4e5ef
\definecolor{invertbestcolor}{HTML}{72c2ff}
\definecolor{invertsecondcolor}{HTML}{bfe9fe}

\icmltitlerunning{Frequency Matching in Spiking Neural Networks for mmWave Sensing}

\begin{document}

\twocolumn[
  \icmltitle{Frequency Matching in Spiking Neural Networks for mmWave Sensing}

  % It is OKAY to include author information, even for blind submissions: the
  % style file will automatically remove it for you unless you've provided
  % the [accepted] option to the icml2026 package.

  % List of affiliations: The first argument should be a (short) identifier you
  % will use later to specify author affiliations Academic affiliations
  % should list Department, University, City, Region, Country Industry
  % affiliations should list Company, City, Region, Country

  % You can specify symbols, otherwise they are numbered in order. Ideally, you
  % should not use this facility. Affiliations will be numbered in order of
  % appearance and this is the preferred way.
  \icmlsetsymbol{equal}{*}

  \begin{icmlauthorlist}
    \icmlauthor{Di Yu}{equal,zju-cs}
    \icmlauthor{Zhenyu Liao}{equal,zju-cs}
    \icmlauthor{Changze Lv}{fdu}
    \icmlauthor{Wentao Tong}{zju-cs}
    \icmlauthor{Linshan Jiang}{sch}
    \icmlauthor{Sijie Ji}{caltech}
    \icmlauthor{Xin Du}{zju-se}
    \icmlauthor{Hailiang Zhao}{zju-se}
    \icmlauthor{Xiaoqing Zheng}{fdu}
    \icmlauthor{Shuiguang Deng}{zju-cs}
  \end{icmlauthorlist}

  \icmlaffiliation{zju-cs}{College of Computer Science and Technology, Zhejiang University, Hangzhou, China}
  \icmlaffiliation{zju-se}{School of Software Technology, Zhejiang University, Ningbo, China}
  \icmlaffiliation{fdu}{School of Computer Science, Fudan University, Shanghai, China}
  \icmlaffiliation{sch}{Department of Computer Science and Engineering, Southern University of Science and Technology, Shenzhen, China}
  \icmlaffiliation{caltech}{California Institute of Technology, Pasadena, United States}

  \icmlcorrespondingauthor{Xin Du}{xindu@zju.edu.cn}
  \icmlcorrespondingauthor{Linshan Jiang}{linshanjiangcn@gmail.com}
  \icmlcorrespondingauthor{Shuiguang Deng}{dengsg@zju.edu.cn}

  % You may provide any keywords that you find helpful for describing your
  % paper; these are used to populate the "keywords" metadata in the PDF but
  % will not be shown in the document
  \icmlkeywords{Spiking Neural Networks, Wireless Sensing, Frequency Analysis}

  \vskip 0.3in
]

% this must go after the closing bracket ] following \twocolumn[ ...

% This command actually creates the footnote in the first column listing the
% affiliations and the copyright notice. The command takes one argument, which
% is text to display at the start of the footnote. The \icmlEqualContribution
% command is standard text for equal contribution. Remove it (just {}) if you
% do not need this facility.

% Use ONE of the following lines. DO NOT remove the command.
% If you have no special notice, KEEP empty braces:
% \printAffiliationsAndNotice{}  % no special notice (required even if empty)
% Or, if applicable, use the standard equal contribution text:
\printAffiliationsAndNotice{\icmlEqualContribution}

\begin{abstract}
Millimeter-wave (mmWave) sensing enables privacy-preserving, always-on edge perception, but its measurements are often sparse, temporally irregular, and corrupted by high-frequency noise. Existing mmWave pipelines predominantly rely on artificial neural networks (ANNs), which achieve robustness through extensive preprocessing or deep architectures, thereby limiting their efficiency on edge devices. In this work, we study spiking neural networks (SNNs) for mmWave sensing from a mechanism–data alignment perspective. By leveraging the low-pass filtering behavior of leaky integrate-and-fire (LIF) dynamics, we analyze how their implicit temporal filtering interacts with the frequency structure of mmWave signals. Our analysis shows that when discriminative information resides in low-to-mid frequencies, LIF dynamics can inherently suppress high-frequency noise, clarifying when and why SNNs outperform ANNs. Based on this insight, we derive a principled criterion for configuring the membrane decay factor by matching the effective bandwidth of LIF dynamics to the data’s discriminative spectral content. Experimental results across four widely used mmWave datasets validate the proposed frequency-matching hypothesis, yielding an average test-accuracy improvement of 6.22\% and a 3.64× reduction in theoretical energy consumption relative to ANN baselines, under a unified evaluation protocol.
\end{abstract}

\section{Introduction}
With the rapid expansion of the Internet of Things (IoT)~\cite{siam2025artificial}, wireless sensing has emerged as a critical enabler for applications such as human tracking~\cite{zhou2025mmmulti,liu2024pmtrack}, safety monitoring~\cite{chang2024mmecare,zolfagharian2024smarla}, and autonomous driving~\cite{guo2025disrupting,wang2024end}. Among existing sensing modalities, millimeter-wave (mmWave) sensing~\cite{li2025msac,choi2025mvdoppler} offers several distinctive advantages: robustness to illumination variations, strong penetration capability, and the ability to capture rich geometric and kinematic information (e.g., range, velocity, and angle). In addition, mmWave sensing provides stronger privacy guarantees than vision-based systems, making it particularly attractive for reliable, information-dense perception in diverse environments and for always-on deployment on resource-constrained edge devices~\cite{gong2025seradar}. 

Despite these advantages, mmWave measurements are inherently sparse, temporally irregular, and often corrupted by multipath-induced artifacts and high-frequency noise, such as phase jitter and interference~\cite{xing2024millimeter,cui2023milipoint}. These characteristics pose significant challenges to stable and accurate perception. 

Existing mmWave sensing pipelines predominantly rely on artificial neural networks (ANNs), achieving performance gains either through extensive handcrafted preprocessing or by adopting deeper and more complex architectures~\cite{duan2025argus,liu2025adonis}. Both strategies substantially increase computational cost and inference latency, thereby conflicting with the stringent efficiency constraints of edge platforms. This tension underscores the need for learning paradigms that better balance accuracy and efficiency by aligning with the intrinsic properties of mmWave signals.

This imperative for efficiency motivates the adoption of spiking neural networks (SNNs)~\cite{ghosh2009spiking} as a promising alternative to ANNs. By suppressing redundant activations and replacing multiply–accumulate (MAC) operations with lightweight spike-based computations, SNNs inherently favor energy-efficient processing~\cite{wei2025phi}. Indeed, SNNs have already been deployed in several edge-sensing benchmarks and demonstrate accuracy comparable to that of ANNs~\cite{yu2025ecc,lv2024efficient}. These properties position SNNs as a viable pathway toward energy-efficient, always-on mmWave inference on resource-limited edge devices~\cite{safa2021improving}.

While existing studies on SNN-based mmWave sensing have demonstrated the feasibility and empirical benefits of deploying SNNs in this domain, most work emphasizes energy efficiency or latency advantages~\cite{zheng2024neuromorphic,arsalan2023power}, or reports performance gains achieved through task-specific tuning~\cite{arsalan2022spiking,safa2021improving}. As a result, it remains unclear when and why SNNs are advantageous for mmWave sensing, and how their neuronal dynamics should be configured to effectively exploit signal characteristics. To address this gap, we study mmWave sensing through the lens of mechanism–data alignment, explicitly characterizing the correspondence between neuronal dynamics in SNNs and the spectral and temporal structure of mmWave measurements. 

\begin{figure}[!t]
  \centering
  \includegraphics[width=\columnwidth]{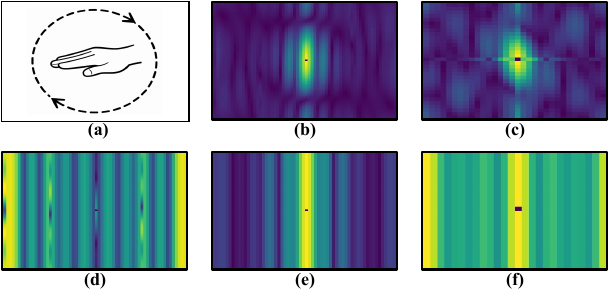}
  \vspace{-0.1in}
  \caption{(a) An mmWave input and (d) its corresponding spatial spectrum. (b,c) Feature spectra after the first and second convolutional layers of the ANN, and (e,f) feature spectra after the first and second spiking convolutional layers of the SNN, illustrate how each model progressively reshapes spectral energy across layers.}
  \vspace{-0.1in}
\label{fig:fig1}
\end{figure}

Specifically, we analyze mmWave datasets in the frequency domain using the Discrete Fourier Transform (DFT)~\cite{proakis2007dsp}, as illustrated in Figure~\ref{fig:fig1}. Based on this representation, we introduce the notion of a discriminative spectrum~\cite{fridovich2022spectral} to characterize the distribution of task-relevant information across frequency bands. 

Motivated by these data characteristics, we leverage the equivalence between the discrete-time dynamics of the leaky integrate-and-fire (LIF) neuron~\cite{gerstner2014neuronal} and a first-order infinite-impulse-response (IIR) low-pass filter~\cite{fang2025highfreq}. This connection enables an explicit characterization of the LIF neuron’s spectral response~\cite{he2021spectral} and effective bandwidth~\cite{chen2024frequency}, providing a mechanistic basis for analyzing how spiking dynamics interact with mmWave signal spectra.

Our analysis indicates that when discriminative information is concentrated in the \emph{low-to-mid} frequency range and \emph{high-frequency} components are predominantly noisy, SNNs can naturally suppress noise while preserving salient cues through their neuronal dynamics. Conventional ANNs, in contrast, lack such an intrinsic temporal-filtering inductive bias and therefore exhibit reduced robustness under noisy mmWave conditions. To quantify this mechanism–data correspondence, we define an average spectral matching score that measures the alignment between the LIF-induced filtering bias and the data's discriminative spectral structure. Building on this insight, we further derive a principled criterion for configuring the membrane decay factor: the effective bandwidth of LIF dynamics should be matched to the discriminative frequency content of the mmWave signals, balancing noise attenuation and information preservation.

We evaluate the proposed frequency-matching hypothesis on \emph{four} widely used mmWave sensing datasets\footnote{\faGithub\, Code is available at: \href{https://github.com/yudi-mars/Soul}{https://github.com/yudi-mars/Soul}}. Under a unified training and evaluation protocol, a LeNet-style SNN consistently outperforms a diverse set of commonly used ANN baselines for mmWave sensing~\cite{yang2023sensefi}, achieving an average test-accuracy improvement of 6.22\% while reducing theoretical energy consumption by approximately 3.64×.
Additional ablation studies and spectral analyses further corroborate the proposed decay-factor selection rule, demonstrating that aligning the effective bandwidth of LIF neuronal dynamics with the discriminative spectral band of mmWave signals leads to consistent and robust performance gains. Overall, these results suggest that the frequency-matching principle provides a unifying explanation for the empirical advantages of SNNs in noisy mmWave sensing scenarios and offers practical guidance for spiking model design under real-world signal conditions.

Our main contributions are summarized as follows:

\begin{itemize}[nosep, leftmargin=*]
  \item We provide a mechanistic analysis that explains when and why the temporal dynamics of spiking neurons are well matched to the spectral structure of mmWave signals.
  \item Building on this analysis, we propose a principled criterion for setting the membrane decay factor, aligning spiking neuron dynamics with the discriminative frequency bands of mmWave signals.
  \item Extensive experiments on four widely used mmWave datasets validate the frequency-matching hypothesis, showing that SNNs consistently achieve higher accuracy and greater energy efficiency than ANN baselines.
\end{itemize}

\section{Preliminaries}
\label{sec:preliminaries}

% \subsection{Spiking Neural Networks}
% \label{sec:pre-snn}

\textbf{Spiking Neural Networks.} 
SNNs~\cite{ghosh2009spiking} process information over discrete timesteps using event-driven binary spikes and are widely studied as energy-efficient alternatives to conventional ANNs~\cite{sze2017efficient}. Unlike ANNs, SNNs are explicitly governed by neuronal dynamics that integrate inputs over time, thereby introducing structured temporal inductive biases.
Among commonly used spiking neuron models, the LIF neuron~\cite{gerstner2014neuronal} is the de facto standard due to its simplicity and its ability to capture essential temporal integration behavior. In discrete time, the dynamics of a LIF neuron are given by $u_{t+1} = \beta\, u_t + (1-\beta)\, I_t - v_{\mathrm{th}}\cdot O_t$, where $u_t$ denotes the membrane potential at timestep $t$, $I_t$ is the synaptic input current, and $\beta \in [0,1)$ is the membrane decay factor controlling temporal integration. The spike output is defined as $O_t = \Theta[u_t \ge v_{\mathrm{th}}]$, where $v_{\mathrm{th}}$ is the firing threshold and $\Theta[\cdot]$ denotes the Heaviside step function~\cite{yu2025ecc}. % 
For linearized analysis, ignoring the reset term yields a first-order IIR low-pass filter~\cite{fang2025highfreq}. Consequently, LIF dynamics inherently preserve low-frequency signal components while attenuating high-frequency fluctuations, thereby inducing an implicit spectral bias that favors data whose discriminative information lies in the low-to-mid-frequency range.

% \subsection{Millimeter-Wave Sensing}
% \label{sec:pre-mmwave}

\textbf{Millimeter-Wave Sensing.} 
mmWave radar is widely adopted for edge sensing due to its high temporal resolution, robustness to illumination variations, and strong penetration capability, making it well-suited for resource-constrained deployments. In a typical sensing pipeline, the radar transmits mmWave signals and records target reflections, producing time-varying echo sequences. Following standard preprocessing procedures~\cite{palipana2021pantomime}, these measurements are organized into a supervised dataset $\mathcal{D}=\{(\mathbf{X}_i, y_i)\}_{i=1}^{M}$, where each sample $\mathbf{X}_i \in \mathbb{R}^{L \times C\times H \times W}$ consists of $L$ consecutive frames with $C$ channels and sensing-map resolution $H \times W$, and $y_i \in \{1,\ldots,\mathcal{C}\}$ is the corresponding class label.
In practice, mmWave measurements are often corrupted by multipath-induced artifacts and high-frequency noise, which can adversely affect recognition stability and accuracy~\cite{cui2023milipoint}. As LIF neurons impose an intrinsic low-pass filtering bias~\cite{fang2025highfreq}, SNNs may naturally suppress such perturbations while preserving task-relevant discriminative structure.

%a \emph{sensing-map (bin) resolution} of $H \times W$ (e.g., range--Doppler or range--angle bins)

\section{Methods}
\label{sec:method_dfma}

\subsection{Problem Statement}

To understand when and why spiking dynamics benefit mmWave sensing, we take a frequency-domain view that connects the spectral properties of mmWave signals with the temporal filtering behavior of spiking neurons. From this perspective, mmWave recognition amounts to selectively preserving task-relevant frequency components while suppressing pervasive high-frequency noise.
\begin{definition}[Frequency-Selective mmWave Recognition]\label{def:prolem}
Given an mmWave sample $\mathbf{X}_i\in\mathbb{R}^{L\times C\times H\times W}$ with $L$ frames, let $\mathcal{F}_t$ denote the DFT applied along the temporal axis, and define the temporal spectrum $\hat{\mathbf{X}}(\omega)\triangleq \mathcal{F}_t(\mathbf{X}_i)(\omega)$ indexed by discrete frequency $\omega$. We model:
\begin{align}
\hat{\mathbf{X}}(\omega)=\hat{\mathbf{S}}(\omega)+\hat{\mathbf{N}}(\omega)
\end{align}
where $\hat{\mathbf{S}}(\omega)$ and $\hat{\mathbf{N}}(\omega)$ denote task-relevant signal and noise components, respectively.
mmWave recognition aims to extract task-relevant components from a discriminative band $\Omega^\star$ and predict the label via
\begin{equation}
\hat{y}=g\!\left(\{\hat{\mathbf{X}}(\omega)\}_{\omega\in\Omega^\star}\right)
\end{equation}
where $g(\cdot)$ denotes the classifier applied to the selected spectral components, and $\Omega^\star$ denotes the task-relevant discriminative frequency subset (i.e., the band in which class-separating information concentrates).
\end{definition}
To ground $\Omega^\star$ in the data, we next characterize mmWave sequences in the frequency domain and identify where discriminative information concentrates.

% 由于mmWave的数据在中噪声和有用判别信息在时域上说混叠在一起的，没法精细分开，所以我们尝试从频域上对数据集的task-relevant signal进行分析。

\subsection{Frequency Analysis of mmWave Signals}
\label{sec:method-data}

mmWave measurements are discrete-time signals whose discriminative information is distributed across frequency components. Following prior work~\cite{zhao2023cubelearn,brighente2020estimation}, we apply the DFT to characterize their spectral structure and enable frequency-domain analysis.

For each sample $\mathbf{X}_i \in \mathcal{D}$, we compute discrete spectra for each frame $\mathbf{X}_i[l]$, $l \in \{0,\ldots, L-1\}$ on a DFT grid and define the one-sided frequency grid as:
\begin{align}
\Omega_L \triangleq 
\left\{
\omega_k = \frac{2\pi k}{L}
\;\middle|\;
k = 0,1,\dots,K
\right\},\text{\ } K \triangleq \Big\lfloor\frac{L}{2}\Big\rfloor
\label{eq:freq_grid_def}
\end{align}
where $\omega_k$ is the $k$-th DFT angular frequency and $K$ is the largest one-sided index.

To obtain a dataset-level, layout-agnostic diagnostic, we compute a discriminative index ($\mathrm{DI}$) on a scalar temporal proxy by averaging each frame over non-temporal axes, and evaluate it on the common one-sided grid $\Omega_L$~\cite{fukunaga2013introduction}. Concretely, for each sample $\mathbf{X}_i$, we construct a scalar temporal sequence $\mathbf{s}_i \in \mathbb{R}^{L}$ by averaging each frame $\mathbf{X}_i[l]$ over the non-temporal dimensions $(C, H, W)$, yielding $\mathbf{s}_i[l] \in \mathbb{R}$, and we apply a fixed, sample-wise de-meaning preprocessing rule to obtain $\bar{\mathbf{s}}_i[l]$. We then compute the one-sided DFT of $\bar{\mathbf{s}}_i$ on $\Omega_L$, yielding coefficients $\tilde{\mathbf{s}}_i[k]\in\mathbb{C}$; we discard phase and use the amplitude spectrum $A_i[k]\triangleq|\tilde{\mathbf{s}}_i[k]|\in\mathbb{R}_{\ge0}$ as the per-frequency feature.

% To obtain a dataset-level, layout-agnostic diagnostic, we compute a discriminative index ($\mathrm{DI}$) on a scalar temporal proxy by averaging each frame over non-temporal axes, and evaluate it over $\Omega_L$~\cite{fukunaga2013introduction}. Specifically, each sample $\mathbf{X}_i$ is uniformly partitioned into $N$ sub-sequences $\mathbf{S}[n], n={1,\cdots,N}$. For each $\mathbf{S}[n]$, we average all entries of each frame $\mathbf{X}_i[l] \in \mathbb{R}^{C \times H \times W}$ to form a 1D temporal signal $\mathbf{s}[n] \in \mathbb{R}^{F}$, where $F \triangleq \lfloor L/N \rfloor$. After sample-wise de-meaning, we apply the DFT to obtain $\tilde{s}[k]\in\mathbb{C}$ on $\Omega_{F}$. We obtain a real-valued per-frequency feature by discarding phase and retaining only the amplitude spectrum, $A[k] \triangleq \lvert \tilde{s}[k] \rvert \in \mathbb{R}_{\ge 0}$.

For each frequency bin $k$, we estimate the per-class mean $\mu_c[k]$ and the unbiased intra-class variance $\mathrm{Var}_c[k]$ of the scalar feature $A_i[k]$. Based on these statistics, we construct Fisher-style inter-class scatter $\mathrm{S_B}$ and intra-class scatter $\mathrm{S_W}$~\cite{pml1Book2022}, denoted as:
\begin{align}
\mathrm{S_B}[k]&=\sum_{c=1}^{\mathcal{C}}\pi_c\big(\mu_c[k]-\bar\mu[k]\big)^2
\label{eq:di_sb} \\
\mathrm{S_W}[k]&=\sum_{c=1}^{\mathcal{C}}\pi_c\,\mathrm{Var}_c[k]
\label{eq:di_sw}
\end{align}
where $\bar{\mu}[k] = \sum_{c} \pi_c \mu_c[k]$ denotes the global mean at frequency bin $k$, and $\pi_c$ is the empirical class prior estimated from the training set. We then define the per-bin $\mathrm{DI}$ as:
\begin{align}
\mathrm{DI}(\omega_k) := \frac{\mathrm{S_B}[k]}{\mathrm{S_W}[k] + \varepsilon}
\label{eq:di_def_formal}
\end{align}
where a small constant $\varepsilon > 0$ is added for numerical stability. Larger $\mathrm{DI}(\omega_k)$ implies stronger task-relevant cues at $\omega_k$ and thus has greater potential contribution to higher performance~\citep{chang2023spacefoldlda}. 
Finally, we normalize $\mathrm{DI}(\omega_k)$ over $\Omega_L$ to obtain a one-sided distribution:
\begin{align}
\mathrm{DI}_{\mathrm{norm}}(\omega_k)
= \frac{\mathrm{DI}(\omega_k)}{\sum_{\omega_k \in \Omega_L} \mathrm{DI}(\omega_k)}
\label{eq:di_norm_noj}
\end{align}
such that $\mathrm{DI}_{\mathrm{norm}}$ forms a probability mass function on $\Omega_L$ and is invariant to global rescaling. In Figure~\ref{fig:di_lif_alignment}(a), a representative example of the normalized discriminative information spectrum $\mathrm{DI}_{\mathrm{norm}}$ is illustrated.
The complete derivation of this section is provided in \emph{\textbf{Appendix~\ref{app:di}}}.

\begin{figure}[!t]
  \centering
  \includegraphics[width=\columnwidth]{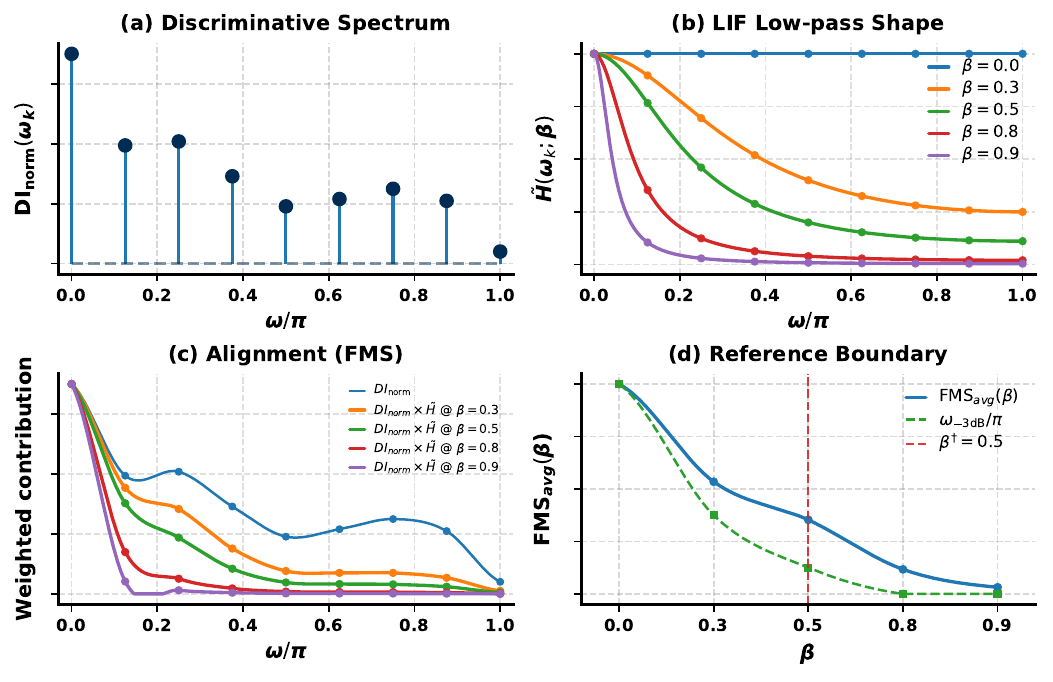}
  \vspace{-0.1in}
  \caption{
    (a) Normalized discriminative spectrum of a sample in AOPHand~\cite{zafar2023tiawr6843aop}, showing mass concentrated in low–mid bands with residual high-frequency cues. (b) LIF low-pass template $\tilde H(\cdot;\beta)$ under monotone bandwidth control; larger $\beta$ increases low-pass strength. (c) Mechanism–data alignment quantified by $\mathrm{FMS}{\mathrm{avg}}(\beta)$; decreases with $\beta$ due to reduced retention. (d) Reference boundary $\beta^\dagger$ (\textcolor{red}{red dashed line}) derived from $\mathrm{FMS}_{\mathrm{avg}}(\beta)$, marking onset of over-low-pass behavior.
  }
\label{fig:di_lif_alignment}
\end{figure}

\subsection{Spectral Response Study of LIF Dynamics}

Figure~\ref{fig:di_lif_alignment}(a) shows that although mmWave measurements are often dominated by high-frequency noise, they can still contain discriminative information at higher frequencies. Accordingly, prior work~\cite{han2024seeing,lee2019mutual} typically applies an explicit front-end low-pass filter for denoising. Nevertheless, such preprocessing performs hard thresholding in the frequency domain, indiscriminately suppressing both noise and informative high-frequency components, thereby degrading recognition performance.

In contrast, the LIF dynamics in SNNs impose an intrinsic low-pass bias~\cite{fang2025highfreq} that selectively attenuates high-frequency perturbations without uniformly eliminating the entire high-frequency band, thereby better preserving informative high-frequency structure. This contrast motivates our mechanism–data alignment analysis of SNNs for mmWave sensing.
As a first-order IIR filter, the LIF neuron admits the frequency response
$H(\omega_k; \beta) = (1 - \beta e^{-j\omega_k})^{-1}$~\cite{oppenheim2010dtsp,proakis2007dsp}, where $j$ is imaginary unit.
To factor out the overall amplitude, we define the Direct Current (DC)-normalized power template~\cite{pfister2017discrete} on the one-sided grid $\Omega_L \in [0,\pi]$ as:
\begin{equation}
\label{eq:htilde_closed}
\begin{aligned}
\tilde H(\omega_k; \beta) &\triangleq \frac{|H(\omega_k; \beta)|^2}{|H(0; \beta)|^2} \\
&= \frac{(1-\beta)^2}{(1-\beta)^2 + 2\beta(1 - \cos \omega_k)} 
\end{aligned}
\end{equation}
Building on the above analysis, we formalize the following lemma and definition characterizing LIF’s low-pass spectral property and monotone bandwidth control (see Figure~\ref{fig:di_lif_alignment}(b)).
\begin{lemma}[Low-pass shape and monotone bandwidth control of LIF]\label{lem:lif_lowpass}
Let $\beta\in[0,1)$ and consider
$\tilde H(\omega_k;\beta)$ over $\omega_k\in[0,\pi]$.
Then, for all $\omega_k\in[0,\pi]$:
\begin{enumerate}[label=(\roman*),nosep,leftmargin=*]
\item \textbf{Boundedness and DC normalization:} $\tilde H(\omega_k;\beta)\in(0,1]$ and $\tilde H(0;\beta)=1$;
\item \textbf{Monotonicity in frequency:} $\tilde H(\omega_k;\beta)$ is non-increasing in $\omega_k$;
\item \textbf{Monotonicity in leak:} for any fixed $\omega_k\in(0,\pi]$, $\tilde H(\omega_k;\beta)$ is non-increasing in $\beta$.
\end{enumerate}
Consequently, increasing $\beta$ suppresses all non-DC frequency components monotonically and narrows the effective passband (see \textbf{Appendix~\ref{app:lif}} for a formal proof).
\end{lemma}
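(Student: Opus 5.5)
The plan is to work directly from the closed form in \eqref{eq:htilde_closed}. First I verify that closed form. Since $|1-\beta e^{-j\omega}|^2 = 1 - 2\beta\cos\omega + \beta^2 = (1-\beta)^2 + 2\beta(1-\cos\omega)$ and $|H(0;\beta)|^2 = (1-\beta)^{-2}$, the ratio is exactly the stated expression. It is then convenient to write
\[
\tilde H(\omega;\beta) = \frac{1}{1 + g(\beta)\,(1-\cos\omega)},\qquad g(\beta)\triangleq \frac{2\beta}{(1-\beta)^2}.
\]
This is valid because $\beta<1$ gives $(1-\beta)^2>0$, so the denominator never vanishes. All three claims then reduce to sign statements about $g(\beta)\ge 0$ and $1-\cos\omega\ge 0$.

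For (i), I note $g(\beta)\ge 0$ for $\beta\in[0,1)$ and $1-\cos\omega\ge 0$. The denominator is therefore at least $1$ and finite, which gives $\tilde H\in(0,1]$. At $\omega=0$ we have $1-\cos 0=0$, so $\tilde H(0;\beta)=1$.

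For (ii), on $[0,\pi]$ the map $\omega\mapsto 1-\cos\omega$ is non-decreasing, since its derivative $\sin\omega\ge 0$ there. Multiplying by the nonnegative constant $g(\beta)$ preserves this, and composing with the decreasing map $x\mapsto 1/(1+x)$ on $x\ge 0$ makes $\tilde H$ non-increasing in $\omega$. The statement is only non-increasing, not strict, because when $\beta=0$ the function is constant.

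For (iii), I fix $\omega\in(0,\pi]$, so $c\triangleq 1-\cos\omega>0$. It suffices to show that $g$ is non-decreasing on $[0,1)$. Direct differentiation gives
\[
g'(\beta)=\frac{2(1-\beta)^2+4\beta(1-\beta)}{(1-\beta)^4}=\frac{2(1+\beta)}{(1-\beta)^3}>0.
\]
Hence $1+g(\beta)c$ is increasing and $\tilde H$ is decreasing in $\beta$, in fact strictly.

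For the ``consequently'' clause, I would state that (iii) gives pointwise suppression of every non-DC bin. To make ``narrows the effective passband'' precise, I would define a cutoff $\omega_c(\beta)=\inf\{\omega:\tilde H(\omega;\beta)\le\gamma\}$ for some level $\gamma\in(0,1)$, for example $1/2$. Combining (ii) and (iii), the sublevel set $\{\omega : \tilde H > \gamma\}$ is an interval $[0,\omega_c)$ that shrinks as $\beta$ increases. Solving explicitly gives $1-\cos\omega_c=(1/\gamma-1)/g(\beta)$, which is decreasing in $\beta$ wherever $\omega_c<\pi$. The only delicate point is this last clause: ``effective passband'' is not yet defined at this stage of the paper, so I would adopt this level-set definition and note that any threshold $\gamma$ gives the same monotone conclusion.
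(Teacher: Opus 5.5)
Your proof is correct. It differs from the paper's mainly in how the formula is packaged, and your treatment of the last clause is somewhat more general.

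The paper (Appendix~\ref{app:lif} and Appendix~\ref{app:eff_band}) works with $\tilde H=n(\beta)/d(\beta)$ directly:
\begin{itemize}
\item monotonicity in $\omega$ comes from $1-\cos\omega$ being non-decreasing, together with an explicit $\partial_\omega\tilde H$;
\item monotonicity in $\beta$ comes from a quotient-rule computation whose numerator factors as $T(\beta)=-(1-\beta)(1+\beta)(1-\cos\omega)$;
\item narrowing of the passband comes from the closed-form cutoff $\omega_c(\beta)=\arccos\big(\tfrac{4\beta-1-\beta^2}{2\beta}\big)$, whose argument has positive derivative.
\end{itemize}

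Your rewriting $\tilde H=1/\big(1+g(\beta)(1-\cos\omega)\big)$ puts all the $\beta$-dependence into one scalar. Items (i)--(iii) then become sign checks on $g$, $g'$ and $1-\cos\omega$. Indeed $-g'(\beta)c/(1+g(\beta)c)^2$ equals the paper's $2T(\beta)/d(\beta)^2$ exactly. Boundedness in $(0,1]$, which the paper leaves implicit, also follows immediately.

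For the ``consequently'' clause, your level-set argument works for any threshold $\gamma$. It gets the shrinking of $\{\omega:\tilde H>\gamma\}$ from (iii) by set inclusion alone, without differentiating an $\arccos$. At $\gamma=\tfrac12$, your $1-\cos\omega_c=(1-\beta)^2/(2\beta)$ agrees with the paper.

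What the paper's route adds is an explicit existence criterion. The half-power cutoff lies in $(0,\pi]$ iff $\beta\ge 3-2\sqrt2$; otherwise $B_{\mathrm{eff}}$ is set to $\pi$. You should handle this case explicitly rather than only writing ``wherever $\omega_c<\pi$''. When $\tilde H(\pi;\beta)>\gamma$, the set $\{\omega:\tilde H>\gamma\}$ is all of $[0,\pi]$ and your infimum is taken over the empty set. For example, at $\beta=0$ you have $g=0$, so your formula divides by zero. A minor terminology point: that set is a superlevel set, not a sublevel set.
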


\begin{definition}[Half-power cutoff and effective bandwidth]\label{def:hp_bw}
Fix $\beta\in[0,1)$ and let $\tilde H(\omega_k;\beta)$ denote the DC-normalized power template.
The \textbf{half-power} ($-3$\,dB~\citep{proakis2007dsp}) \textbf{cutoff} $\omega_c\in(0,\pi]$ is $\tilde H(\omega_c;\beta)=\tfrac12$, and the one-sided \textbf{effective bandwidth} is $B_{\mathrm{eff}}(\beta)\triangleq \omega_c$.
\end{definition}

Lemma~\ref{lem:lif_lowpass} and Definition~\ref{def:hp_bw} establish that LIF induces a DC-normalized monotone low-pass bias, with $\beta$ acting as a principled inverse-bandwidth control via the half-power cutoff $B_{\mathrm{eff}}(\beta)$ (see \emph{\textbf{Appendix~\ref{app:eff_band}}} for the complete derivation). Next, we connect this controllable spectral retention to the dataset’s discriminative spectrum to quantify mechanism-data alignment.

\subsection{Frequency-Guided Membrane Decay Criterion}
High-frequency bands often entangle high-$\mathrm{DI}(\omega_k)$ cues with interference, requiring a trade-off between information retention and noise suppression. Motivated by this finding, we leverage the $\beta$-controlled low-pass bias of LIF dynamics to selectively attenuate high-frequency noise without hard truncation. We next formalize this data--dynamics matching and derive actionable criteria for choosing $\beta$.

On the one-sided frequency grid $\Omega_L$, we quantify this alignment using the Frequency-Matching Score (FMS):
\begin{align}
\label{eq:FMS_avg}
\mathrm{FMS}_{\mathrm{avg}}(\beta)
=\sum_{\omega_k\in\Omega_L}\mathrm{DI}_{\mathrm{norm}}(\omega_k)\,\tilde H(\omega_k;\beta)
\end{align}
The core idea of $\mathrm{FMS}_{\mathrm{avg}}(\beta)$ is to measure mechanism--data alignment in the frequency domain by measuring how well the LIF neuron preserves the dataset's discriminative spectral content. Concretely, on the shared one-sided grid $\omega_k\in\Omega_L$, $\mathrm{DI}_{\mathrm{norm}}(\omega_k)$ assigns a \emph{class-separability} weight to each frequency bin (i.e., the normalized discriminative spectrum), computed once from the training fold and then held fixed for all $\beta$. In contrast, $\tilde H(\omega_k;\beta)$ is the LIF DC-normalized frequency-response magnitude at $\omega_k$, whose low-pass strength is controlled by $\beta$. We form per-bin contributions via the product $\mathrm{DI}_{\mathrm{norm}}(\omega_k)\tilde H(\omega_k;\beta)$ and sum them over $\Omega_L$ to obtain $\mathrm{FMS}_{\mathrm{avg}}(\beta)$ (see Figure~\ref{fig:di_lif_alignment}(c)). Since $\tilde H(\omega_k;\beta)\in[0,1]$, it follows that $\mathrm{FMS}_{\mathrm{avg}}(\beta)\in[0,1]$, which admits a simple interpretation: for a given $\beta$, it is the class-weighted fraction of discriminative spectral content retained by the LIF low-pass mechanism.

\begin{table*}[!t]
\centering
\caption{Overall test accuracy (\%) and parameter count (M) of baselines across datasets (mean over three random seeds). \textbf{Bold} denotes the best and \underline{underline} denotes the second-best on each dataset.} % The results support our hypothesis of frequency-matching between LIF-based dynamics and mmWave signals.
\label{tab:acc_params_ann_lenet_lif}
\resizebox{\textwidth}{!}{%
\begin{tabular}{lrrrrrrrr}
\toprule
\multicolumn{1}{l}{\multirow{2.5}{*}{\textbf{Baseline}}}& \multicolumn{2}{c}{\textbf{AOPHand}} & \multicolumn{2}{c}{\textbf{mmFiT}} & \multicolumn{2}{c}{\textbf{Pantomime}} & \multicolumn{2}{c}{\textbf{MMActivity}} \\
\cmidrule(lr){2-3} \cmidrule(lr){4-5} \cmidrule(lr){6-7} \cmidrule(lr){8-9}
 & \textbf{Accuracy (\%)} & \textbf{\#Params (M)} & \textbf{Accuracy (\%)} & \textbf{\#Params (M)} & \textbf{Accuracy (\%)} & \textbf{\#Params (M)} & \textbf{Accuracy (\%)} & \textbf{\#Params (M)} \\
\midrule
MLP & $69.70_{\pm0.99}$ & $4.327$ & $66.51_{\pm2.02}$ & $4.327$ & $66.81_{\pm0.57}$ & $4.328$ & $60.83_{\pm2.89}$ & $4.327$ \\
LeNet & $60.86_{\pm1.68}$ & $4.191$ & $62.36_{\pm0.58}$ & $4.192$ & $61.83_{\pm0.42}$ & $4.196$ & $59.17_{\pm3.82}$ & $4.191$ \\
VGG9 & \underline{$74.39_{\pm0.61}$} & $31.603$ & $69.36_{\pm2.15}$ & $31.605$ & $72.63_{\pm0.46}$ & $31.622$ & \underline{$70.00_{\pm4.33}$} & $31.601$ \\
VGG16 & $67.92_{\pm6.02}$ & $70.313$ & $65.43_{\pm0.58}$ & $70.315$ & $71.60_{\pm1.88}$ & $70.331$ & $62.50_{\pm4.33}$ & $70.311$ \\
ResNet18 & $72.47_{\pm1.30}$ & $11.173$ &$71.94_{\pm1.29}$ & $11.174$ & $72.63_{\pm0.59}$ & $11.178$ & $61.67_{\pm6.29}$ & $11.173$ \\
ResNet50 & $72.54_{\pm0.80}$ & $23.514$ & $71.84_{\pm1.97}$ & $23.516$ & $73.90_{\pm0.43}$ & $23.532$ & $61.67_{\pm2.89}$ & $23.512$ \\
ResNet101 & $69.70_{\pm0.72}$ & $42.506$ & \underline{$72.64_{\pm0.25}$} & $42.508$ & $73.85_{\pm0.29}$ & $42.525$ & $64.17_{\pm1.44}$ & $42.504$ \\
RNN & $26.40_{\pm9.25}$ & $\mathbf{0.026}$ & $22.56_{\pm5.93}$ & $\mathbf{0.026}$ & $20.84_{\pm0.88}$ & $\mathbf{0.027}$ & $48.33_{\pm1.44}$ & $\mathbf{0.025}$ \\
GRU & $67.52_{\pm5.73}$ & \underline{$0.075$} & $14.11_{\pm0.38}$ & \underline{$0.075$} & \underline{$75.45_{\pm0.99}$} & \underline{$0.076$} & $47.50_{\pm2.50}$ & \underline{$0.075$} \\
LSTM & $20.71_{\pm1.12}$ & $0.100$ & $13.14_{\pm0.52}$ & $0.100$ & $72.47_{\pm3.29}$ & $0.101$ & $54.17_{\pm7.64}$ & $0.100$ \\
BiLSTM & $46.14_{\pm1.40}$ & $0.200$ & $12.98_{\pm0.89}$ & $0.200$ & $73.67_{\pm1.39}$ & $0.203$ & $45.83_{\pm3.82}$ & $0.200$ \\
CNN-GRU & $61.98_{\pm9.44}$ & $0.463$ & $67.80_{\pm0.49}$ & $0.463$ & $72.77_{\pm2.36}$ & $0.464$ & $65.00_{\pm2.50}$ & $0.463$ \\
ViT & $21.39_{\pm3.31}$ & $2.176$ & $36.40_{\pm5.65}$ & $2.176$ & $42.16_{\pm6.07}$ & $2.178$ & ${65.83}_{\pm5.77}$ & $2.175$ \\
\midrule
\textbf{SpikingLeNet}
&  $\mathbf{83.70}_{\pm4.24}$ & $4.191$
& $\mathbf{73.67}_{\pm1.55}$ & $4.192$
&  $\mathbf{78.31}_{\pm0.50}$ & $4.196$
&  $\mathbf{75.00}_{\pm6.61}$ & $4.191$ \\
\bottomrule
\end{tabular}
}
\end{table*}

\begin{algorithm}[!t]
\caption{Criterion of the reference boundary $\beta^\dagger$}
\label{alg:knee}
\begin{algorithmic}[1]
\STATE {\bfseries Input:} dataset $\mathcal{D}=\{(X_i,y_i)\}_{i=1}^{M}$, training split $\mathcal{D}_{tr} \in \mathcal{D}$ ; candidate set $\mathcal{B}=\{\beta_r\}_{r=1}^{R}$ (sorted ascending).
\STATE Compute $\mathrm{DI}_{\mathrm{norm}}(\omega_k)$ on $\mathcal{D}_{tr}$. \hfill {\footnotesize  \texttt{// Eq.(\ref{eq:freq_grid_def})-(\ref{eq:di_norm_noj})}}
\FOR{$r=1$ to $R$}
    \STATE $f_r \gets \mathrm{FMS}_{\mathrm{avg}}(\beta_r)$. \hfill {\footnotesize \texttt{// Eq.(\ref{eq:FMS_avg})}}
\ENDFOR
\STATE $\tau_r \gets (1-\beta_r)^{-1}$.
\STATE $\phi_r \gets \mathrm{Norm}(\log \tau_r)$;\quad $\psi_r \gets \mathrm{Norm}(f_r)$. 
\STATE $\widehat{L}(\phi_r)\gets (1-\phi_r)\,\psi_1 + \phi_r\,\psi_R$.
\FOR{$r=1$ to $R$}
    \STATE $d_r \gets \big|\widehat{L}(\phi_r)-\psi_r\big|$.
\ENDFOR
\STATE $r^{\ast}\gets \arg\max_{r} d_r$. 
\STATE $\beta^\dagger \gets \beta_{r^{\ast}}$.
\STATE {\bfseries Output:} reference boundary $\beta^\dagger$.
\end{algorithmic}
\end{algorithm}

Therefore, in our analysis, $\mathrm{FMS}_{\mathrm{avg}}(\beta)$ is chiefly controlled by the leak parameter $\beta$. Although one might posit an optimal $\beta$ that maximizes mechanism--data alignment, accuracy-based tuning of $\beta$ typically relies on costly dataset-specific sweeps and offers limited mechanistic insight. Instead of tuning $\beta$ against accuracy, we define a fully specified, lower-bound–like \emph{reference boundary} on $\beta$ that marks the onset of over-low-pass behavior.
\begin{definition}[Maximum-deviation rule]
\label{def:knee}
Given a discrete candidate set $\mathcal{B}=\{\beta_r\}_{r=1}^{R}$, we re-parameterize each $\beta_r$ by
$\tau_r \triangleq (1-\beta_r)^{-1}$ and construct normalized coordinates
\begin{align}
\phi_r &\triangleq \mathrm{Norm}\big(\log \tau_r\big)\in[0,1], \  r\in[1,R]\\
\psi_r &\triangleq \mathrm{Norm}\big(\mathrm{FMS}_{\mathrm{avg}}(\beta_r)\big)\in[0,1], \  r\in[1,R]
\end{align}
where $\mathrm{Norm}(\cdot)$ is a fixed linear scaling over the sampled points (e.g., min-max), and all edge cases (degenerate ranges) and deterministic selection rules are fixed in advance. We use $\log \tau$ because $\tau$ spans orders of magnitude as $\beta\!\to\!1$; measuring deviation in $\log \tau$ therefore linearizes the sweep and prevents the high-$\beta$ regime from dominating the selection of $\beta^\dagger$.
% Given a discrete candidate set $\mathcal{B}=\{\beta_r\}_{r=1}^{R}$, we re-parameterize
% $\beta_r$ by $\tau_r \triangleq (1-\beta_r)^{-1}$ and form the normalized coordinates
% \begin{align}
% \phi_r &\triangleq \mathrm{Norm}\!\big(\log \tau_r\big)\in[0,1], r \in [1,R]\\
% \psi_r &\triangleq \mathrm{Norm}\!\big(\mathrm{FMS}_{\mathrm{avg}}(\beta_r)\big)\in[0,1], r\in[1,R]
% \end{align}
% where $\mathrm{Norm}(\cdot)$ is a fixed linear scaling over the sampled points (e.g., min-max), and all edge cases (degenerate ranges) and deterministic selection rules are fixed in advance. And We use $\log \tau$ because $\tau=(1-\beta)^{-1}$ spans orders of magnitude as $\beta\!\to\!1$; measuring deviation in $\log \tau$ linearizes this scale and avoids over-emphasizing the high-$\beta$ regime when locating $\beta^\dagger$. 
Let $\widehat{L}(\cdot)$ denote the reference diagonal defined by the line segment connecting the two endpoints in the normalized coordinate system, then we have:
\begin{align}
        \widehat{L}(\phi_r) \triangleq (1-\phi_r)\,\psi_1 + \phi_r\,\psi_R, r\in[1,R]
\end{align}
We define the deviation score $d(\beta_r)$ as the vertical distance from the reference diagonal $\widehat{L}$~\cite{satopaa2011finding} as:
\begin{align}
d(\beta_r)\triangleq \big|\widehat{L}(\phi_r)-\psi_r\big|,\ \beta^\dagger \triangleq \arg\max_{\beta_r\in\mathcal{B}} d(\beta_r)
\end{align}
\end{definition}
As depicted in Figure~\ref{fig:di_lif_alignment}(d), the maximum-deviation rule provides a fully specified, accuracy-agnostic method to identify a \emph{reference boundary} on the $\beta$ axis, derived solely from the geometry of the $\mathrm{FMS}_{\mathrm{avg}}(\beta)$ curve under increasing low-pass strength. Importantly, $\beta^\dagger$ is not intended to optimize performance; it indicates the point at which further low-pass bias produces diminishing retention of the discriminative spectrum. This deterministic boundary enables a mechanism-centric partitioning of $\beta$ into qualitatively distinct operating regimes, as summarized next:
\begin{proposition}[Three $\beta$ regimes (mechanism at a glance)]
\label{prop:three_regimes}
Let $\mathrm{FMS}_{\mathrm{avg}}(\beta)$ denote the dataset-level \emph{discriminative-spectrum retention} induced by the DC-normalized LIF low-pass template $\tilde H(\cdot;\beta)$. Let $\beta^\dagger$ be a deterministic \emph{reference (risk) boundary} that marks the onset of \emph{over-low-pass} behavior. Then $\beta$ admits three mechanism-centric regimes:
\begin{enumerate}[label=(\roman*), nosep, leftmargin=*]
\item \textbf{Under-filter} ($\beta\to 0$): $\tilde H(\cdot;\beta)$ is near-passband, so $\mathrm{FMS}_{\mathrm{avg}}(\beta)$ is high (\emph{approaching $1$}); nuisance suppression is weak, and performance can be less stable.
\item \textbf{Stability window} ($0<\beta<\beta^\dagger$): $\mathrm{FMS}_{\mathrm{avg}}(\beta)$ decreases moderately as the low-pass bias becomes effective; accuracy may peak in this region due to nuisance suppression/optimization stability.
\item \textbf{Over-low-pass} ($\beta\ge\beta^\dagger$): $\mathrm{FMS}_{\mathrm{avg}}(\beta)$ becomes low, indicating loss of non-DC discriminative content; accuracy degradations are empirically more frequent.
\end{enumerate}
\end{proposition}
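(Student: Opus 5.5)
The plan is to separate the mathematical content of Proposition~\ref{prop:three_regimes}, which is a statement about the curve $\beta\mapsto\mathrm{FMS}_{\mathrm{avg}}(\beta)$, from the empirical claims about accuracy and stability, which cannot be proved and will only be framed as interpretations. The mathematical core consists of three facts: $\mathrm{FMS}_{\mathrm{avg}}$ tends to $1$ as $\beta\to0$; it is non-increasing on $[0,1)$; and it collapses toward the DC mass $\mathrm{DI}_{\mathrm{norm}}(0)$ as $\beta\to1$. Once these are in place, the three regimes follow by splitting $[0,1)$ at $0$ and at the boundary $\beta^\dagger$ produced by Definition~\ref{def:knee}.

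\textbf{Under-filter regime.} First, $\mathrm{FMS}_{\mathrm{avg}}$ is a finite convex combination of the templates $\tilde H(\omega_k;\beta)$, with weights $\mathrm{DI}_{\mathrm{norm}}$ that sum to one by \eqref{eq:di_norm_noj}. From the closed form \eqref{eq:htilde_closed}, setting $\beta=0$ gives $\tilde H\equiv1$, so $\mathrm{FMS}_{\mathrm{avg}}(0)=1$. Because $\tilde H$ is continuous in $\beta$ and the grid $\Omega_L$ is finite, $\mathrm{FMS}_{\mathrm{avg}}(\beta)\to1$ as $\beta\to0$. The near-passband shape also follows directly: for small $\beta$, $B_{\mathrm{eff}}(\beta)$ from Definition~\ref{def:hp_bw} is close to $\pi$, or the template never reaches $1/2$ on $[0,\pi]$ at all. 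This establishes regime (i).

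\textbf{Monotone decay and the two remaining regimes.} Next I invoke Lemma~\ref{lem:lif_lowpass}(iii): each $\tilde H(\omega_k;\beta)$ with $\omega_k>0$ is non-increasing in $\beta$, and the DC term is fixed at $1$. Since the weights are nonnegative, $\mathrm{FMS}_{\mathrm{avg}}$ is non-increasing in $\beta$. It is strictly decreasing whenever some non-DC bin has positive $\mathrm{DI}_{\mathrm{norm}}$; this is the regime the paper implicitly assumes, and I would state it as a hypothesis. For $\beta\to1$, the closed form gives $\tilde H(\omega_k;\beta)\to0$ for every $\omega_k>0$, so $\mathrm{FMS}_{\mathrm{avg}}(\beta)\to\mathrm{DI}_{\mathrm{norm}}(0)$. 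De-meaning makes the DC amplitude essentially zero, so this limit is small; this is the formal sense of ``loss of non-DC discriminative content.''

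Combining monotonicity with the definition of $\beta^\dagger$ as a point of the sweep, every $\beta\ge\beta^\dagger$ satisfies $\mathrm{FMS}_{\mathrm{avg}}(\beta)\le\mathrm{FMS}_{\mathrm{avg}}(\beta^\dagger)$, which gives regime (iii). Every $\beta\in(0,\beta^\dagger)$ has $\mathrm{FMS}_{\mathrm{avg}}$ between $\mathrm{FMS}_{\mathrm{avg}}(\beta^\dagger)$ and $1$, which gives regime (ii).

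\textbf{Main obstacle.} The hard step is making ``decreases moderately'' in (ii) and ``becomes low'' in (iii) precise, since $\beta^\dagger$ is only a discrete knee rule. My first attempt would be a convexity argument in the coordinate $\phi=\mathrm{Norm}(\log\tau)$. If the normalized curve $\psi(\phi)$ is concave before the knee and convex after it, then the maximum vertical deviation from the chord $\widehat L$ occurs where the slope of $\psi$ equals the slope of the chord. Before that point the curve then declines more slowly than the average rate, which is the ``moderate'' decrease. After it, the remaining retention is bounded above by the chord value.

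If this concavity cannot be shown for a general $\mathrm{DI}_{\mathrm{norm}}$, I would fall back on a weaker claim. I would prove only the monotone ordering and the endpoint limits, and present the accuracy statements (stability, peak performance, degradation frequency) explicitly as empirical interpretations rather than consequences of the proof.
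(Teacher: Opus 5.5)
Your rigorous core is the same argument the paper relies on. The paper gives no standalone proof of Proposition~\ref{prop:three_regimes}. It treats it as a qualitative summary of Lemma~\ref{lem:lif_lowpass} (with the strict pointwise decrease of Proposition~\ref{prop:Htilde_pointwise_beta}), the limits $\tilde H(\cdot;0)\equiv1$ and $\tilde H(\omega;\beta)\to0$ for $\omega>0$ in Appendix~\ref{app:lif_sanity}, and the empirical sweeps of Figure~\ref{fig:acc_fms}. It leaves ``moderately'', ``low'', and all accuracy statements interpretive, so your fallback is the right endpoint.

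One sharpening is available. With the default de-meaning and rectangular window, $\tilde{\mathbf{s}}_i[0]=\sum_l\bar{\mathbf{s}}_i[l]=0$ exactly, so $\mathrm{S_B}[0]=0$ and $\mathrm{DI}(\omega_0)=0$. Hence $\mathrm{FMS}_{\mathrm{avg}}(\beta)\to0$ exactly as $\beta\to1^-$, and your strict-decrease hypothesis holds automatically whenever $\mathrm{DI}_{\mathrm{norm}}$ is defined.

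Your convexity refinement, however, fails as stated and should be dropped. In normalized coordinates $\psi_1=1$ and $\psi_R=0$, so $\widehat L(\phi)=1-\phi$ and $\psi-\widehat L$ vanishes at both ends. If $\psi$ is concave before the knee and convex after it, the knee is the inflection point, where $\psi'$ is smallest. If $\psi'=-1$ there, then $\psi'\ge-1$ everywhere, so $\psi-\widehat L$ is non-decreasing with zero boundary values, which forces $\psi$ to be linear.

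For a genuine concave-then-convex curve the steepest slope is strictly below $-1$. The chord slope can then only be met in one of two places:
\begin{enumerate}
\item at an upper knee inside the concave part, right after which $\psi>\widehat L$, contradicting your chord bound;
\item at a lower knee inside the convex part, which is preceded by the steepest descent, contradicting ``moderate''.
\end{enumerate}
Whichever point the rule selects, one of your two conclusions breaks.

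The shape assumption is also not generic. With $u=\log\tau$, each bin contributes $\tilde H=\bigl(1+a_k(e^{2u}-e^{u})\bigr)^{-1}$ with $a_k=2(1-\cos\omega_k)$. This is convex on all of $u\ge0$ once $a_k\ge3/2$, i.e.\ $\omega_k\ge\arccos\frac14$, so a high-frequency-weighted $\mathrm{DI}_{\mathrm{norm}}$ has no concave phase at all.

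The maximum-deviation rule simply does not certify that $\mathrm{FMS}_{\mathrm{avg}}(\beta^\dagger)$ is low in any absolute sense. That is why the paper keeps that qualifier empirical.
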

As summarized in Algorithm~\ref{alg:knee}, we first characterize (i) where discriminative information resides in mmWave data via $\mathrm{DI}_{\mathrm{norm}}$ and (ii) how LIF neurons selectively preserve frequency components via $\tilde H(\cdot;\beta)$, with the half-power cutoff defining an effective bandwidth $B_{\mathrm{eff}}(\beta)$ that acts as a principled bandwidth control. Their interaction defines the frequency-matching score $\mathrm{FMS}_{\mathrm{avg}}(\beta)$, quantifying the alignment between mmWave discriminative content and LIF’s low-pass bias. For high-frequency ranges containing both noise and informative signal, $\beta$ mediates a trade-off between noise suppression and signal retention. To capture mechanism–data alignment independently of accuracy, we define a reference boundary $\beta^\dagger$ that marks the onset of over-low-pass behavior and serves as an anchor for regime analysis and evaluation.

\section{Experiments}
\label{sec:experiments}
\subsection{Experimental Settings}

\noindent\textbf{Datasets.} We conduct experiments on \emph{four} widely used mmWave-based wireless sensing datasets: \textit{AOPHand}~\cite{zafar2023tiawr6843aop}, \textit{mmFiT}~\cite{tiwari2023mmfit}, \textit{Pantomime}~\cite{palipana2021pantomime}, and \textit{MMActivity}~\cite{singh2019radhar}. For all datasets, we use a uniform 80/20 train–test split. Further details on the datasets and their preprocessing protocols are provided in \emph{\textbf{Appendix~\ref{app:datasets}}}.

\noindent\textbf{Baselines.} We use a simple LeNet-style SNN, termed \textit{SpikingLeNet}, as our primary baseline and compare it against a broad set of ANN models commonly used in wireless sensing~\cite{yang2023sensefi}, including \textit{MLP}, convolutional models (\textit{LeNet}, \textit{VGG}), residual networks (\textit{ResNet}), recurrent models (\textit{RNN}, \textit{GRU}, \textit{LSTM}, \textit{BiLSTM}), \textit{CNN-GRU}, and vision transformers (\textit{ViT}). To ensure fair comparison, all models are evaluated under a unified training and evaluation protocol (\emph{\textbf{Appendix~\ref{app:training}}}).

\noindent\textbf{Metrics.} We assess the advantages of SNNs over ANNs on mmWave edge sensing tasks using multiple metrics, including top-1 accuracy, model parameter count, end-to-end inference latency, number of operations per sample, and theoretical energy consumption per sample. Details of the measurement protocol are provided in \emph{\textbf{Appendix~\ref{app:measure}}}.

\begin{table}[!t]
\centering
\scriptsize
\setlength{\tabcolsep}{3.2pt}
\renewcommand{\arraystretch}{1.08}
\caption{Theoretical energy cost ($\mu$J) of baselines across datasets.} % , demonstrating the substantially higher energy efficiency of SNNs compared to ANN counterparts
\label{tab:energy_ann_snn}
\resizebox{\columnwidth}{!}{%
\begin{tabular}{lrrrr}
\toprule
\textbf{Baseline} & \textbf{AOPHand} & \textbf{mmFiT} & \textbf{Pantomime} & \textbf{MMActivity} \\
\midrule
MLP      & $19.90$ & $19.90$ & $19.91$ & $19.90$\\
LeNet   & $251.08$ & $251.08$ & $251.10$ & $251.08$ \\
VGG9     & $3352.70$ & $3352.71$ & $3352.79$ & $3352.69$ \\
VGG16    & $6017.25$ & $6017.26$ & $6017.34$ & $6017.24$ \\
ResNet18 & $655.24$ & $655.22$ & $655.24$ & $655.22$ \\
ResNet50 & $1522.02$ & $1522.02$ & $1522.10$ & $1522.01$ \\
ResNet101& $2923.68$ & $2923.69$ & $2923.76$ & $2923.67$ \\
RNN      & $7.35$ & $7.35$ & $7.36$ & $7.35$ \\
GRU      & $22.20$  & $22.20$  & $22.20$  & $22.20$  \\
LSTM     & $29.55$  & $29.55$  & $29.55$  & $29.54$  \\
BiLSTM   & $59.09$  & $59.10$  & $59.10$  & $59.10$  \\
CNN-GRU  & $128.55$ & $128.55$ & $128.56$ & $128.55$ \\
ViT      & $82.61$  & $82.61$  & $82.62$  & $82.61$  \\

\midrule
\textbf{SpikingLeNet}    & $\textbf{2.53}$& $\textbf{2.04}$& $\textbf{2.44}$ & $\textbf{1.45}$ \\
\bottomrule
\end{tabular}%
}
\end{table}

\subsection{Overall Performance}

Table~\ref{tab:acc_params_ann_lenet_lif} summarizes the overall recognition accuracy of SpikingLeNet compared with ANN baselines. Across all datasets, SpikingLeNet consistently outperforms the strongest ANN counterparts, achieving an average accuracy improvement of approximately 6.22\%. Notably, these gains are achieved with the same parameter budget as LeNet ($\approx$4.19M) and substantially fewer parameters than high-capacity CNNs such as VGG9 (31.6M) and ResNet101 (42.5M), indicating superior parameter efficiency. Moreover, SpikingLeNet exhibits more stable performance across datasets, whereas several ANN baselines suffer pronounced degradation on specific datasets (e.g., RNN/GRU on mmFiT). Since these improvements cannot be attributed solely to model capacity, they are consistent with our hypothesis that LIF dynamics introduce an implicit low-pass temporal integration that better matches the spectral characteristics of mmWave signals. Performance comparisons with \emph{a broader set of SNN architectures} are reported in \emph{\textbf{Appendix~\ref{app:all_snn_acc}}}, further corroborating the proposed frequency-matching hypothesis.

\subsection{Efficiency Analysis}

As reported in Table~\ref{tab:energy_ann_snn}, SpikingLeNet is consistently the most energy-efficient model across all datasets. Among ANN baselines, the best-performing model is the RNN, which consumes 7.35--7.36~$\mu$J per sample, whereas SpikingLeNet requires only 1.45--2.53~$\mu$J. This corresponds to an average reduction of $\sim$3.64$\times$ (approximately $\sim$75\% lower) when compared with the per-dataset best ANN. The advantage becomes substantially more pronounced relative to mainstream CNN and Transformer baselines: LeNet consumes about 251 $\mu$J ($\sim$119$\times$ higher than SpikingLeNet on average), while ResNet and VGG variants span $\approx$655 to $\approx$6017~$\mu$J, corresponding to energy costs that are hundreds to thousands of times higher. Owing to the event-driven nature of SNN inference, sparse spike activations replace dense MAC-heavy floating-point operations, enabling temporal integration with far fewer high-cost computations than conventional ANN forward passes. Detailed computing operation statistics for theoretical energy cost analysis are provided in \emph{\textbf{Appendix~\ref{app:all_snn_energy}}}.

\begin{figure}[!t]
  \centering
  \includegraphics[width=\columnwidth]{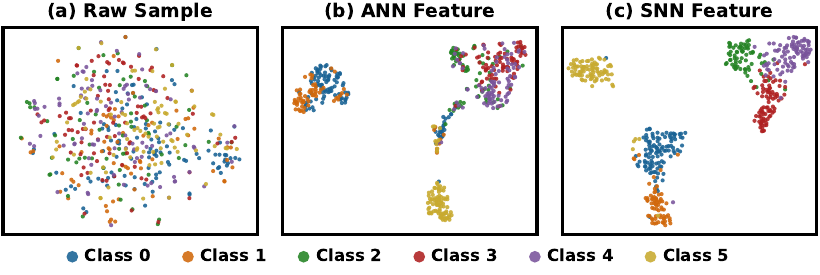}
  \vspace{-0.2in}
  \caption{t-SNE visualization of raw inputs from AOPHand and the corresponding features extracted from the penultimate layer by ANN and SNN with the same LeNet backbone.} % Compared to the ANN, the SNN forms tighter and more clearly separated clusters, consistent with better data--model alignment and higher accuracy.
  \vspace{-0.1in}
\label{fig:t-SNE}
\end{figure}

\subsection{Mechanistic Analysis of Frequency Matching}
\textbf{t-SNE Visualization.}
Figure~\ref{fig:t-SNE} presents a t-SNE~\cite{maaten2008visualizing} visualization of the learned embeddings after the local update for the 6-class AOPHand task, where tighter intra-class clusters and larger inter-class margins indicate higher discriminability. ANN features exhibit pronounced overlap, particularly among Classes 2–4, suggesting ambiguous decision boundaries. In contrast, SNN embeddings form substantially more compact clusters with clearer class separation. We attribute this improvement to a closer alignment between the spectral–temporal structure of mmWave signals and the frequency-selective dynamics of spiking neurons, which suppresses nuisance high-frequency noise while preserving discriminative components. This mechanism yields more separable feature representations than those produced by dense-computation ANNs.

\begin{figure}[!t]
  \centering
  \includegraphics[width=\columnwidth]{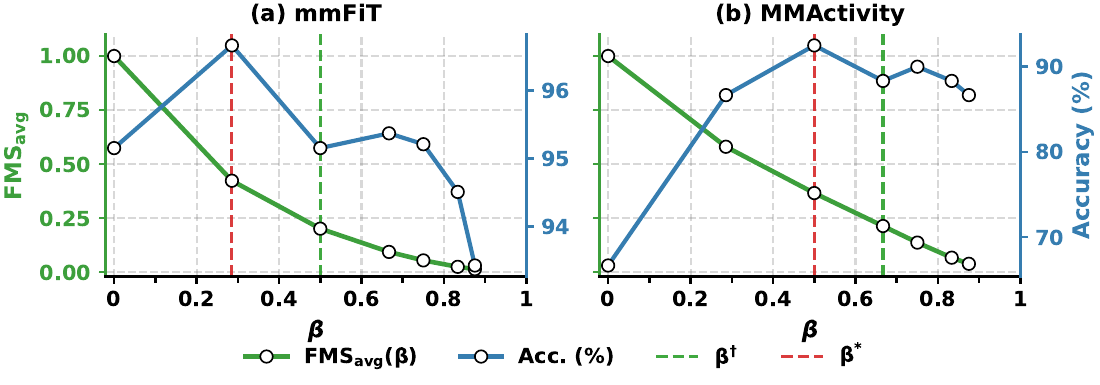}
  \vspace{-0.2in}
  \caption{Accuracy–FMS alignment under $\beta$ tuning. Increasing $\beta_r$ leads to a monotonic decrease in $\mathrm{FMS}_{\mathrm{avg}}$, while accuracy is non-monotonic and attains its maximum at $\beta^{\ast} < \beta^{\dagger}$.}
  \vspace{-0.1in}
\label{fig:acc_fms}
\end{figure}

\textbf{Accuracy-FMS Alignment.}
To verify our proposed membrane decay criteria, we examine the correlation between accuracy and $\mathrm{FMS}_{\mathrm{avg}}$ across different $\beta$ settings in Figure~\ref{fig:acc_fms}. 
The observations illustrate that as $\beta$ increases, $\mathrm{FMS}_{\text{avg}}$ decreases approximately monotonically from near $1$ toward $0$, indicating progressively stronger low-pass attenuation induced by LIF dynamics. In contrast, accuracy is distinctly non-monotonic: it improves at moderate $\beta$ and then deteriorates under overly strong temporal smoothing. On Figure~\ref{fig:acc_fms}(a), accuracy increases to a clear maximum at an intermediate $\beta^{\ast}$ and then declines for large $\beta$; on Figure~\ref{fig:acc_fms}(b), a similar rise-and-fall trend is observed. Importantly, the optimal setting $\beta^{\ast}$ consistently occurs \emph{before} the reference boundary $\beta^{\dagger}$ on both datasets, in agreement with Proposition~\ref{prop:three_regimes}. This behavior admits a spectral explanation: increasing $\beta$ strengthens the effective LIF low-pass filtering and suppresses high-frequency components, which is beneficial initially due to substantial high-frequency noise in mmWave signals; however, discriminative cues are not confined to the lowest band, so excessive attenuation removes class-relevant details and degrades recognition. 

\begin{figure}[!t]
    \centering
    \includegraphics[width=\columnwidth]{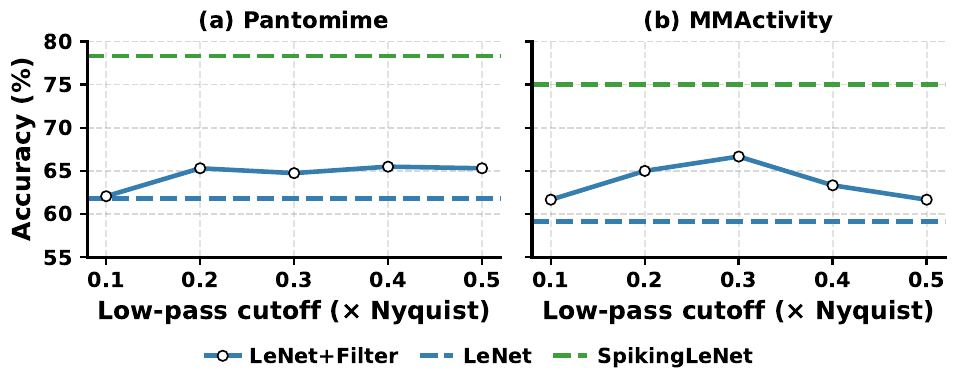}
    \vspace{-0.2in}
    \caption{Impact of low-pass filter preprocessing with varying cutoff frequencies (normalized by Nyquist) for LeNet-based ANNs versus SpikingLeNet. See \emph{\textbf{Appendix~\ref{app:low-pass-app}}} for implementation details.}
    \vspace{-0.1in}
\label{fig:lowpass}
\end{figure}

\subsection{Ablation Study}

\textbf{Impact of Low-pass Filter.}
To isolate the effect of LIF’s intrinsic low-pass dynamics, we compare SpikingLeNet with a LeNet augmented by an explicit low-pass filter. As shown in Figure~\ref{fig:lowpass}, the filter improves LeNet relative to its unfiltered version but still underperforms SpikingLeNet, since it indiscriminately removes all spectral components above the cutoff, discarding high-frequency discriminative cues along with noise. In contrast, LIF dynamics provide a tunable low-pass filter that selectively attenuates high-frequency content, achieving a superior trade-off between noise suppression and information preservation, and consequently higher accuracy.

% \textbf{Impact of an Explicit Spatial Low-pass Filter.}
% To assess whether simple low-pass smoothing can explain the gains, we compare SpikingLeNet with LeNet preceded by an explicit spatial Low-pass Filter applied to each $H\times W$ sensing map (Fig.~\ref{fig:lowpass}). While the Low-pass Filter improves LeNet over its unfiltered variant by suppressing high spatial-frequency noise, it still underperforms SpikingLeNet. This is because the spatial LPF applies a uniform, input-agnostic attenuation that can also remove fine-grained discriminative structures, and it provides neither temporal integration nor event-driven sparsity. In contrast, LIF neurons induce temporal low-pass dynamics via membrane integration and thresholding, yielding a better accuracy--efficiency trade-off.

\begin{figure}[!t]
    \centering
    \includegraphics[width=\columnwidth]{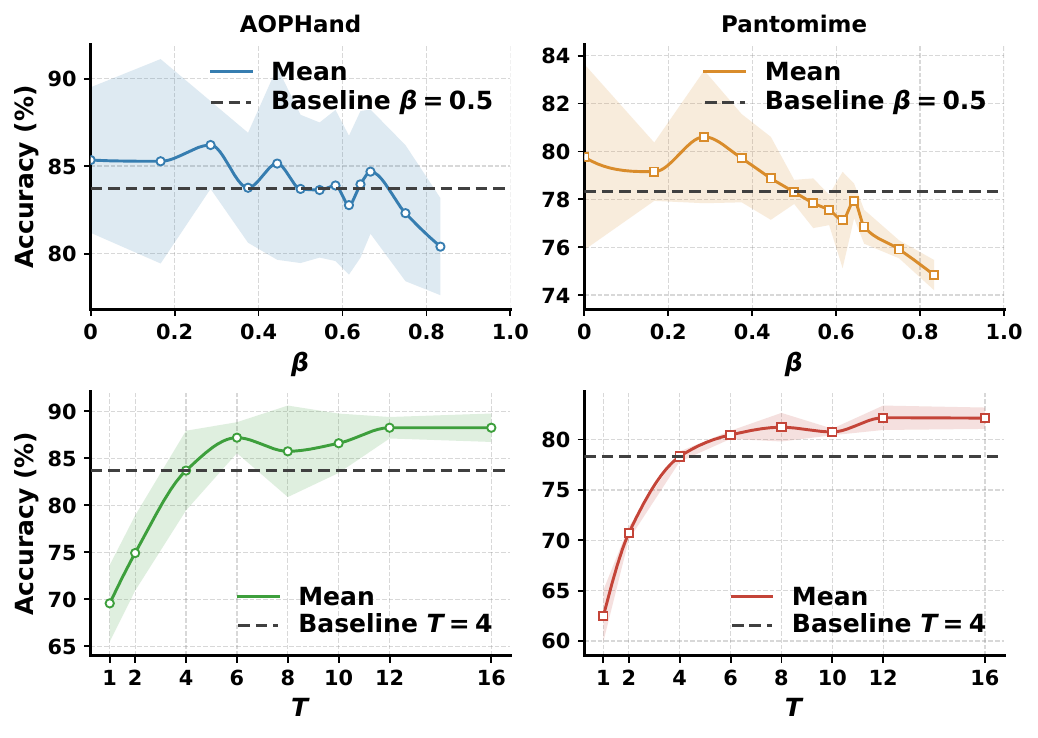}
    \vspace{-0.2in}
    \caption{Ablation of SNN hyperparameters.} % showing a non-monotonic relationship between accuracy and the decay factor $\beta$, and diminishing returns with increasing timesteps $T$.
    \vspace{-0.1in}
\label{fig:param1}
\end{figure}

\textbf{Hyper-Parameters.}
Figure~\ref{fig:param1} ablates two key SNN hyperparameters, the membrane decay factor $\beta$ and the number of simulation timesteps $T$, on AOPHand and Pantomime. Test accuracy shows a clear peak as a function of $\beta$: performance improves up to a dataset-specific optimum and then declines. This aligns with our frequency-matching hypothesis: optimal accuracy occurs when LIF-induced low-pass dynamics suppress high-frequency noise while preserving task-relevant spectral components. By contrast, increasing $T$ boosts accuracy at small values but quickly saturates, indicating diminishing returns from longer temporal integration. Overall, these results confirm that SNN performance is driven by temporal dynamics: $\beta$ should match the dataset’s spectral structure, while $T$ mainly stabilizes predictions.

\subsection{Practical Implementation}

Figure~\ref{fig:latency-res} reports inference latency measured on five deployment platforms. On GPU-equipped Jetson devices, SpikingLeNet incurs an approximately $4{\times}$ latency overhead relative to LeNet, consistent with executing a similar feed-forward graph over 4 simulation timesteps ($T{=}4$) when spiking operations are realized as dense GPU kernels. This scaling does not fully transfer to the CPU-only Raspberry Pi, where SpikingLeNet is only marginally slower than LeNet, as latency is dominated by system-level factors such as control flow and memory bandwidth rather than raw MAC throughput. In contrast, on the neuromorphic chip Darwin3~\cite{ma2024darwin3}, SpikingLeNet achieves latency comparable to LeNet on conventional edge hardware, highlighting the benefit of hardware–algorithm co-design: event-driven execution can exploit spike sparsity and avoid redundant computation. Overall, these results indicate that the current latency gap between SNNs and mainstream accelerators is largely a systems artifact, and that neuromorphic platforms offer a promising path toward efficient, real-time mmWave inference.
\emph{\textbf{Appendix~\ref{app:device_latency}}} provides device specifications and the full latency measurement protocol.

\begin{figure}[!t]
  \centering
  \includegraphics[width=\columnwidth]{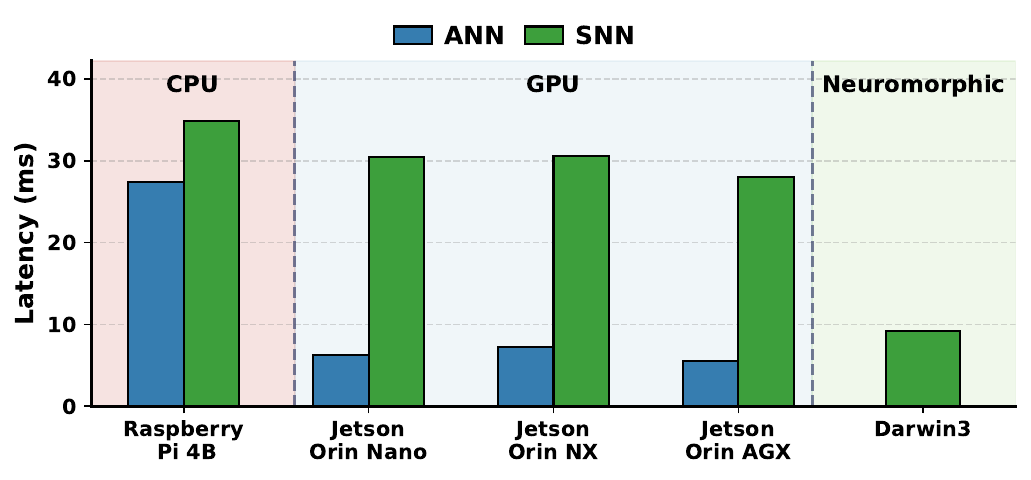}
  \caption{Per-sample on-device inference latency (ms).} % SNN inference latency on conventional hardware increases with timesteps due to recurrent dynamics, whereas neuromorphic platforms mitigate this overhead and enable ANN-comparable latency.
  \vspace{-0.2in}
\label{fig:latency-res}
\end{figure}

\section{Related Work}
\label{sec:rw}

\subsection{Spiking Neural Networks for mmWave Sensing}
\label{sec:rw_mmwave}

% \textbf{Spiking Neural Networks for mmWave Sensing.} 
Existing SNN-based mmWave studies largely fall into two categories. The first~\cite{wu2025efficient,li2024hardware,zhang2024federated,arsalan2022radarsnn} treats SNNs as energy-efficient alternatives to ANNs, demonstrating theoretical low-power and low-latency benefits on edge mmWave tasks such as gesture and gait recognition. The second~\cite{hu2025energy,shaaban2024rt,arsalan2023power,liu2022general} focuses on improving recognition accuracy, typically through spike encoding strategies, training recipes, and extensive hyperparameter tuning. However, these prior studies primarily emphasize metric improvements and offer limited insight into how spiking dynamics interact with mmWave signals, leaving it unclear under what conditions SNNs surpass ANNs. Addressing this gap is critical for designing SNNs that achieve both high accuracy and energy efficiency in mmWave sensing.

\subsection{Mechanistic Analysis of Spiking Neural Networks}
\label{sec:rw_interpretability}

% \textbf{Mechanistic Analysis of Spiking Neural Networks.} 
Prior work has also pursued a mechanistic understanding of SNNs. One line of research~\cite{fang2025highfreq,mani2025leaky,kiessling2024extraction,kim2023exploring} interprets individual spiking neurons or shallow SNNs through the lens of explainable algorithmic and signal-processing operators, clarifying how membrane integration and thresholded spiking impose specific inductive biases and shape information flow. Another line of analysis~\cite{lv2025toward,zhang2025staa,cao2024spiking,lv2024advancing} examines how SNNs represent sequential structure and positional information, thereby motivating mechanism-aware designs for sequence modeling. However, most mechanistic studies remain task-centric: they characterize the biases induced by spiking dynamics but rarely connect them to modality-specific signal structure. As a result, it remains difficult to anticipate when and why SNNs outperform ANNs in particular sensing modalities, motivating our interpretable mechanism–data alignment framework.

\section{Conclusion}
We present a mechanistic frequency-domain analysis of SNNs for mmWave sensing, linking mmWave spectral structure with the intrinsic low-pass dynamics of LIF neurons to explain when and why spiking models are effective. This alignment yields a deterministic, accuracy-agnostic guideline for configuring the membrane decay factor to match neuronal bandwidth to discriminative frequency content. Extensive experiments validate the frequency-matching hypothesis, demonstrating consistent accuracy and energy-efficiency gains over ANN baselines. Limitations and future work are discussed in \emph{\textbf{Appendix~\ref{app:discussion}}}.

\section*{Impact Statement}

This work advances the understanding and deployment of spiking neural networks for mmWave sensing. By characterizing the interaction between dataset-specific spectral structure and LIF temporal dynamics, we provide actionable guidance for configuring SNNs to improve accuracy and energy efficiency on edge hardware, thereby reducing the compute and power requirements of real-time sensing systems. Besides, we do not think our work will have a negative impact on ethical considerations or future societal consequences.

% Acknowledgements should only appear in the accepted version.
\section*{Acknowledgements}

The work of this paper is supported by the National Key Research and Development Program of China under Grant No. 2022YFB4500100, the National Natural Science Foundation of China under Grant No. 62502443, No. 62125206, and the Zhejiang Provincial Natural Science Foundation of China under Grant No. LD24F020014.

% \textbf{Do not} include acknowledgements in the initial version of the paper
% submitted for blind review.

% If a paper is accepted, the final camera-ready version can (and usually should)
% include acknowledgements.  Such acknowledgements should be placed at the end of
% the section, in an unnumbered section that does not count towards the paper
% page limit. Typically, this will include thanks to reviewers who gave useful
% comments, to colleagues who contributed to the ideas, and to funding agencies
% and corporate sponsors that provided financial support.

% In the unusual situation where you want a paper to appear in the
% references without citing it in the main text, use \nocite
\nocite{langley00}

\bibliography{example_paper}

@inproceedings{langley00,
 author    = {P. Langley},
 title     = {Crafting Papers on Machine Learning},
 year      = {2000},
 pages     = {1207--1216},
 editor    = {Pat Langley},
 booktitle     = {Proceedings of the 17th International Conference
              on Machine Learning (ICML 2000)},
 address   = {Stanford, CA},
 publisher = {Morgan Kaufmann}
}

@inproceedings{singh2019radhar,
  title     = {{RadHAR}: Human Activity Recognition from Point Clouds Generated through a Millimeter-wave Radar},
  author    = {Singh, Akash Deep and Sandha, Sandeep Singh and Garcia, Luis and Srivastava, Mani B.},
  booktitle = {Proceedings of the 3rd ACM Workshop on Millimeter-Wave Networks and Sensing Systems (mmNets)},
  pages     = {51--56},
  year      = {2019}
}

@article{palipana2021pantomime,
  title={Pantomime: Mid-air gesture recognition with sparse millimeter-wave radar point clouds},
  author={Palipana, Sameera and Salami, Dariush and Leiva, Luis A and Sigg, Stephan},
  journal={Proceedings of the ACM on interactive, mobile, wearable and ubiquitous technologies},
  volume={5},
  number={1},
  pages={1--27},
  year={2021},
  publisher={ACM New York, NY, USA}
}

@book{pml1Book2022,
 author = "Kevin P. Murphy",
 title = "Probabilistic Machine Learning: An Introduction",
 publisher = "MIT Press",
 year = 2022,
}

@inproceedings{fang2025highfreq,
  title     = {Spiking Neural Networks Need High-Frequency Information},
  author    = {Fang, Yuetong and Zhou, Deming and Wang, Ziqing and Ren, Hongwei and Zeng, Zecui and Li, Lusong and Zhou, Shibo and Xu, Renjing},
  booktitle = {Advances in Neural Information Processing Systems(NeurIPS)},
  year      = {2025},
}

@inproceedings{zhou2024qkformer,
author = {Zhou, Chenlin and Zhang, Han and Zhou, Zhaokun and Yu, Liutao and Huang, Liwei and Fan, Xiaopeng and Yuan, Li and Ma, Zhengyu and Zhou, Huihui and Tian, Yonghong},
title = {QKFormer: hierarchical spiking transformer using Q-K attention},
year = {2024},
isbn = {9798331314385},
publisher = {Curran Associates Inc.},
address = {Red Hook, NY, USA},
booktitle = {Proceedings of the 38th International Conference on Neural Information Processing Systems},
articleno = {416},
numpages = {25},
location = {Vancouver, BC, Canada},
series = {NIPS '24}
}

@inproceedings{zhou2023spikformer,
  title     = {{Spikformer}: When {Spiking} {Neural} {Network} Meets {Transformer}},
  author    = {Zhou, Zhaokun and Zhu, Yuesheng and He, Chao and Wang, Yaowei and Yan, Shuicheng and Tian, Yonghong and Yuan, Li},
  booktitle = {International Conference on Learning Representations(ICLR)},
  year      = {2023}
}

@inproceedings{fridovich2022spectral,
  title        = {Spectral Bias in Practice: The Role of Function Frequency in Generalization},
  author       = {Fridovich-Keil, Sara and Gontijo-Lopes, Raphael and Roelofs, Rebecca},
  booktitle    = {Advances in Neural Information Processing Systems(NeurIPS)},
  volume       = {35},
  year         = {2022},
  doi          = {10.5555/3600270.3600805}
}

@article{chang2023spacefoldlda,
  title        = {Fisher's Linear Discriminant Analysis With Space-Folding Operations},
  author       = {Chang, Chin-Chun},
  journal      = {IEEE Transactions on Pattern Analysis and Machine Intelligence},
  year         = {2023},
  doi          = {10.1109/TPAMI.2022.3233572}
}

@book{oppenheim2010dtsp,
  title     = {Discrete-Time Signal Processing},
  author    = {Oppenheim, Alan V. and Schafer, Ronald W.},
  edition   = {3},
  year      = {2010},
  publisher = {Pearson},
  isbn      = {9780131988422}
}

@book{proakis2007dsp,
  title     = {Digital Signal Processing: Principles, Algorithms, and Applications},
  author    = {Proakis, John G. and Manolakis, Dimitris G.},
  edition   = {4},
  year      = {2007},
  publisher = {Pearson},
  address   = {Upper Saddle River, NJ},
  isbn      = {9780131873742}
}

@book{gerstner2014neuronal,
  title     = {Neuronal Dynamics: From Single Neurons to Networks and Models of Cognition},
  author    = {Gerstner, Wulfram and Kistler, Werner M. and Naud, Richard and Paninski, Liam},
  year      = {2014},
  publisher = {Cambridge University Press},
  address   = {Cambridge, UK},
  isbn      = {9781107635197}
}

@article{jin2024rodar,
  title   = {{RoDAR}: Robust Gesture Recognition Based on mmWave Radar Under Human Activity Interference},
  author  = {Jin, Cong and Meng, Xiangzhu and He, Jianhua and Zhang, Xianan and Wang, Wei},
  journal = {IEEE Transactions on Mobile Computing},
  year    = {2024}
}

@article{zheng2024neuromorphic,
  title={A Neuromorphic Radar Sensor for Low-Power IoT Systems},
  author={Zheng, Kai and Qian, Kun and Woodford, Timothy and Zhang, Xinyu},
  journal={GetMobile: Mobile Computing and Communications},
  volume={28},
  number={3},
  pages={36--39},
  year={2024},
  publisher={ACM New York, NY, USA}
}

@misc{zafar2023tiawr6843aop,
  author = {Ahtsham Zafar},
  title = {TI-AWR-6843-AOP Gesture Language Data},
  year = {2023},
  publisher = {Kaggle},
}

@misc{tiwari2023mmfit,
  author = {Tiwari, Girish},
  title = {{mmFiT}: {mmWave} Radar Point-Cloud Dataset of Human Fitness Activities},
  year = {2023},
  publisher = {Mendeley Data},
  version = {V1},

}

@article{yang2023sensefi,
  title={SenseFi: A Library and Benchmark on Deep-Learning-Empowered WiFi Human Sensing},
  author={Yang, Jianfei and Chen, Xinyan and Wang, Dazhuo and Zou, Han and Lu, Chris Xiaoxuan and Sun, Sumei and Xie, Lihua},
  journal={Patterns},
  volume={4},
  number={3},
  publisher={Elsevier},
  year={2023}
}

@inproceedings{simonyan2015vgg,
  title     = {Very Deep Convolutional Networks for Large-Scale Image Recognition},
  author    = {Simonyan, Karen and Zisserman, Andrew},
  booktitle = {International Conference on Learning Representations (ICLR)},
  year      = {2015},


}

@inproceedings{horowitz2014computing,
  title        = {Computing's Energy Problem (and What We Can Do About It)},
  author       = {Horowitz, Mark},
  booktitle    = {2014 IEEE International Solid-State Circuits Conference Digest of Technical Papers (ISSCC)},
  pages        = {10--14},
  year         = {2014},
  organization = {IEEE},

}

@article{yao2023attentionsnn,
  title   = {Attention Spiking Neural Networks},
  author  = {Yao, Man and Zhao, Guangshe and Zhang, Hengyu and Hu, Yifan and Deng, Lei and Tian, Yonghong and Xu, Bo and Li, Guoqi},
  journal = {IEEE Transactions on Pattern Analysis and Machine Intelligence},
  volume  = {45},
  number  = {8},
  pages   = {9393--9410},
  year    = {2023},

}

@misc{lv2024efficient,
      title={Efficient and Effective Time-Series Forecasting with Spiking Neural Networks}, 
      author={Changze Lv and Yansen Wang and Dongqi Han and Xiaoqing Zheng and Xuanjing Huang and Dongsheng Li},
      year={2024},
      eprint={2402.01533},
      archivePrefix={arXiv},
      primaryClass={cs.NE},

}

@article{siam2025artificial,
  title={Artificial intelligence of things: A survey},
  author={Siam, Shakhrul Iman and Ahn, Hyunho and Liu, Li and Alam, Samiul and Shen, Hui and Cao, Zhichao and Shroff, Ness and Krishnamachari, Bhaskar and Srivastava, Mani and Zhang, Mi},
  journal={ACM Transactions on Sensor Networks},
  volume={21},
  number={1},
  pages={1--75},
  year={2025},
  publisher={ACM New York, NY}
}

@article{zolfagharian2024smarla,
  title={Smarla: A safety monitoring approach for deep reinforcement learning agents},
  author={Zolfagharian, Amirhossein and Abdellatif, Manel and Briand, Lionel C and others},
  journal={IEEE Transactions on Software Engineering},
  year={2024},
  publisher={IEEE}
}

@article{cui2023milipoint,
  title={Milipoint: A point cloud dataset for mmwave radar},
  author={Cui, Han and Zhong, Shu and Wu, Jiacheng and Shen, Zichao and Dahnoun, Naim and Zhao, Yiren},
  journal={Advances in Neural Information Processing Systems},
  volume={36},
  pages={62713--62726},
  year={2023}
}

@inproceedings{gong2025seradar,
  title={SeRadar: Embracing Secondary Reflections for Human Sensing with mmWave Radar},
  author={Gong, Danei and Zheng, Naiyu and Xie, Binbin and Xiong, Jie and Wang, Shuai and Fang, Yuguang and Yin, Zhimeng},
  booktitle={Proceedings of the 31st Annual International Conference on Mobile Computing and Networking},
  pages={231--246},
  year={2025}
}

@article{ghosh2009spiking,
  title={Spiking neural networks},
  author={Ghosh-Dastidar, Samanwoy and Adeli, Hojjat},
  journal={International journal of neural systems},
  volume={19},
  number={04},
  pages={295--308},
  year={2009},
  publisher={World Scientific}
}

@inproceedings{yu2025ecc,
  title     = {ECC-SNN: Cost-Effective Edge-Cloud Collaboration for Spiking Neural Networks},
  author    = {Yu, Di and Lv, Changze and Du, Xin and Jiang, Linshan and Tong, Wentao and Liao, Zhenyu and Zheng, Xiaoqing and Deng, Shuiguang},
  booktitle = {Proceedings of the Thirty-Fourth International Joint Conference on
               Artificial Intelligence, {IJCAI-25}},
  pages     = {6904--6912},
  year      = {2025},
}

@inproceedings{wei2025phi,
  title={Phi: Leveraging Pattern-based Hierarchical Sparsity for High-Efficiency Spiking Neural Networks},
  author={Wei, Chiyue and Duan, Bowen and Guo, Cong and Zhang, Jingyang and Song, Qingyue and Li, Hai and Chen, Yiran},
  booktitle={Proceedings of the 52nd Annual International Symposium on Computer Architecture},
  pages={930--943},
  year={2025}
}

@article{arsalan2023power,
  title={Power-efficient gesture sensing for edge devices: mimicking fourier transforms with spiking neural networks},
  author={Arsalan, Muhammad and Santra, Avik and Issakov, Vadim},
  journal={Applied Intelligence},
  volume={53},
  number={12},
  pages={15147--15162},
  year={2023},
  publisher={Springer}
}

@article{arsalan2022spiking,
  title={Spiking neural network-based radar gesture recognition system using raw ADC data},
  author={Arsalan, Muhammad and Santra, Avik and Issakov, Vadim},
  journal={IEEE Sensors Letters},
  volume={6},
  number={6},
  pages={1--4},
  year={2022},
  publisher={IEEE}
}

@inproceedings{chen2024frequency,
  title={Frequency-adaptive dilated convolution for semantic segmentation},
  author={Chen, Linwei and Gu, Lin and Zheng, Dezhi and Fu, Ying},
  booktitle={Proceedings of the IEEE/CVF Conference on Computer Vision and Pattern Recognition},
  pages={3414--3425},
  year={2024}
}

@article{he2021spectral,
  title={Spectral response function-guided deep optimization-driven network for spectral super-resolution},
  author={He, Jiang and Li, Jie and Yuan, Qiangqiang and Shen, Huanfeng and Zhang, Liangpei},
  journal={IEEE Transactions on Neural Networks and Learning Systems},
  volume={33},
  number={9},
  pages={4213--4227},
  year={2021},
  publisher={IEEE}
}

@article{shaaban2024rt,
  title={RT-SCNNs: real-time spiking convolutional neural networks for a novel hand gesture recognition using time-domain mm-wave radar data},
  author={Shaaban, Ahmed and Strobel, Maximilian and Furtner, Wolfgang and Weigel, Robert and Lurz, Fabian},
  journal={International Journal of Microwave and Wireless Technologies},
  volume={16},
  number={5},
  pages={783--795},
  year={2024},
  publisher={Cambridge University Press}
}

@article{zhang2024federated,
  title={Federated learning for radar gesture recognition based on spike timing-dependent plasticity},
  author={Zhang, Mengjun and Li, Bin and Liu, Hongfu and Zhao, Chenglin},
  journal={IEEE Transactions on Aerospace and Electronic Systems},
  volume={60},
  number={2},
  pages={2379--2393},
  year={2024},
  publisher={IEEE}
}

@article{arsalan2022radarsnn,
  title={RadarSNN: A resource efficient gesture sensing system based on mm-wave radar},
  author={Arsalan, Muhammad and Santra, Avik and Issakov, Vadim},
  journal={IEEE Transactions on Microwave Theory and Techniques},
  volume={70},
  number={4},
  pages={2451--2461},
  year={2022},
  publisher={IEEE}
}

@article{wu2025efficient,
  title={An Efficient Radar-Based Gesture Recognition Method Using Enhanced GMM and Hybrid SNN},
  author={Wu, Yifan and Wu, Li and Hu, Taiyang and Xiao, Zelong and Xiao, Mengxuan and Li, Lei},
  journal={IEEE Sensors Journal},
  year={2025},
  publisher={IEEE}
}

@article{liu2022general,
  title={General spiking neural network framework for the learning trajectory from a noisy mmWave radar},
  author={Liu, Xin and Yan, Mingyu and Deng, Lei and Wu, Yujie and Han, De and Li, Guoqi and Ye, Xiaochun and Fan, Dongrui},
  journal={Neuromorphic Computing and Engineering},
  volume={2},
  number={3},
  pages={034013},
  year={2022},
  publisher={IOP Publishing}
}

@inproceedings{kim2023exploring,
  title={Exploring temporal information dynamics in spiking neural networks},
  author={Kim, Youngeun and Li, Yuhang and Park, Hyoungseob and Venkatesha, Yeshwanth and Hambitzer, Anna and Panda, Priyadarshini},
  booktitle={Proceedings of the AAAI conference on artificial intelligence},
  volume={37},
  pages={8308--8316},
  year={2023}
}

@article{mani2025leaky,
  title={The leaky integrate-and-fire neuron is a change-point detector for compound Poisson processes},
  author={Mani, Shivaram and Hurley, Paul and van Schaik, Andr{\'e} and Monk, Travis},
  journal={Neural computation},
  volume={37},
  number={5},
  pages={926--956},
  year={2025},
  publisher={MIT Press 255 Main Street, 9th Floor, Cambridge, Massachusetts 02142, USA~…}
}

@article{kiessling2024extraction,
  title={Extraction of parameters of a stochastic integrate-and-fire model with adaptation from voltage recordings},
  author={Kiessling, Lilli and Lindner, Benjamin},
  journal={Biological Cybernetics},
  volume={119},
  number={1},
  pages={2},
  year={2024},
  publisher={Springer}
}

@article{lv2024advancing,
  title={Advancing spiking neural networks for sequential modeling with central pattern generators},
  author={Lv, Changze and Han, Dongqi and Wang, Yansen and Zheng, Xiaoqing and Huang, Xuanjing and Li, Dongsheng},
  journal={Advances in Neural Information Processing Systems},
  volume={37},
  pages={26915--26940},
  year={2024}
}

@article{cao2024spiking,
  title={Spiking neural network as adaptive event stream slicer},
  author={Cao, Jiahang and Sun, Mingyuan and Wang, Ziqing and Cheng, Hao and Zhang, Qiang and Zhou, Shibo and Xu, Renjing},
  journal={Advances in Neural Information Processing Systems},
  volume={37},
  pages={75064--75094},
  year={2024}
}

@inproceedings{
lv2025toward,
title={Toward Relative Positional Encoding in Spiking Transformers},
author={Changze Lv and Yansen Wang and Dongqi Han and Yifei Shen and Xiaoqing Zheng and Xuanjing Huang and Dongsheng Li},
booktitle={The Thirty-ninth Annual Conference on Neural Information Processing Systems},
year={2025}
}

@article{sze2017efficient,
  title={Efficient processing of deep neural networks: A tutorial and survey},
  author={Sze, Vivienne and Chen, Yu-Hsin and Yang, Tien-Ju and Emer, Joel S},
  journal={Proceedings of the IEEE},
  volume={105},
  number={12},
  pages={2295--2329},
  year={2017},
  publisher={Ieee}
}

@article{xing2024millimeter,
  title={Millimeter-Wave Radar Detection and Localization of a Human in Indoor Complex Environments},
  author={Xing, Zhixuan and Chen, Penghui and Wang, Jun and Bai, Yujing and Song, Jinhao and Tian, Liuyang},
  journal={Remote Sensing},
  volume={16},
  number={14},
  pages={2572},
  year={2024},
  publisher={MDPI}
}

@inproceedings{zhang2025staa,
  title={STAA-SNN: Spatial-Temporal Attention Aggregator for Spiking Neural Networks},
  author={Zhang, Tianqing and Yu, Kairong and Zhong, Xian and Wang, Hongwei and Xu, Qi and Zhang, Qiang},
  booktitle={Proceedings of the Computer Vision and Pattern Recognition Conference},
  pages={13959--13969},
  year={2025}
}

@article{hu2024advancing1,
  title={Advancing spiking neural networks toward deep residual learning},
  author={Hu, Yifan and Deng, Lei and Wu, Yujie and Yao, Man and Li, Guoqi},
  journal={IEEE transactions on neural networks and learning systems},
  volume={36},
  number={2},
  pages={2353--2367},
  year={2024},
  publisher={IEEE}
}

@inproceedings{
yao2024spikedriven,
title={Spike-driven Transformer V2: Meta Spiking Neural Network Architecture Inspiring the Design of Next-generation Neuromorphic Chips},
author={Man Yao and JiaKui Hu and Tianxiang Hu and Yifan Xu and Zhaokun Zhou and Yonghong Tian and Bo XU and Guoqi Li},
booktitle={The Twelfth International Conference on Learning Representations},
year={2024}
}

@inproceedings{shi2024spikingresformer,
  title={Spikingresformer: Bridging resnet and vision transformer in spiking neural networks},
  author={Shi, Xinyu and Hao, Zecheng and Yu, Zhaofei},
  booktitle={Proceedings of the IEEE/CVF conference on computer vision and pattern recognition},
  pages={5610--5619},
  year={2024}
}

@article{fang2021deep,
  title={Deep residual learning in spiking neural networks},
  author={Fang, Wei and Yu, Zhaofei and Chen, Yanqi and Huang, Tiejun and Masquelier, Timoth{\'e}e and Tian, Yonghong},
  journal={Advances in Neural Information Processing Systems},
  volume={34},
  pages={21056--21069},
  year={2021}
}

@article{zhao2023cubelearn,
  title={Cubelearn: End-to-end learning for human motion recognition from raw mmwave radar signals},
  author={Zhao, Peijun and Lu, Chris Xiaoxuan and Wang, Bing and Trigoni, Niki and Markham, Andrew},
  journal={IEEE Internet of Things Journal},
  volume={10},
  number={12},
  pages={10236--10249},
  year={2023},
  publisher={IEEE}
}

@article{brighente2020estimation,
  title={Estimation of wideband dynamic mmWave and THz channels for 5G systems and beyond},
  author={Brighente, Alessandro and Cerutti, Mattia and Nicoli, Monica and Tomasin, Stefano and Spagnolini, Umberto},
  journal={IEEE Journal on Selected Areas in Communications},
  volume={38},
  number={9},
  pages={2026--2040},
  year={2020},
  publisher={IEEE}
}

@book{fukunaga2013introduction,
  title={Introduction to statistical pattern recognition},
  author={Fukunaga, Keinosuke},
  year={2013},
  publisher={Elsevier}
}

@article{lee2019mutual,
  title={Mutual interference suppression using wavelet denoising in automotive FMCW radar systems},
  author={Lee, Seongwook and Lee, Jung-Yong and Kim, Seong-Cheol},
  journal={IEEE Transactions on Intelligent Transportation Systems},
  volume={22},
  number={2},
  pages={887--897},
  year={2019},
  publisher={IEEE}
}

@article{han2024seeing,
  title={Seeing the invisible: Recovering surveillance video with cots mmwave radar},
  author={Han, Mingda and Yang, Huanqi and Jia, Mingda and Xu, Weitao and Yang, Yanni and Huang, Zhijian and Luo, Jun and Cheng, Xiuzhen and Hu, Pengfei},
  journal={IEEE Transactions on Mobile Computing},
  year={2024},
  publisher={IEEE}
}

@inproceedings{satopaa2011finding,
  title={Finding a" kneedle" in a haystack: Detecting knee points in system behavior},
  author={Satopaa, Ville and Albrecht, Jeannie and Irwin, David and Raghavan, Barath},
  booktitle={2011 31st international conference on distributed computing systems workshops},
  pages={166--171},
  year={2011},
  organization={IEEE}
}

@article{wang2024end,
  title={End-to-end target liveness detection via mmwave radar and vision fusion for autonomous vehicles},
  author={Wang, Shuai and Mei, Luoyu and Yin, Zhimeng and Li, Hao and Liu, Ruofeng and Jiang, Wenchao and Lu, Chris Xiaoxuan},
  journal={ACM Transactions on Sensor Networks},
  volume={20},
  number={4},
  pages={1--26},
  year={2024},
  publisher={ACM New York, NY}
}

@article{zhou2025mmmulti,
  title={mmMulti: Multi-person Action Recognition Based on Multi-task Learning Using Millimeter Waves},
  author={Zhou, Rui and Li, Songlin and Zhang, Hongwang and Liu, Chenxu and Sun, Jiajun},
  journal={Proceedings of the ACM on Interactive, Mobile, Wearable and Ubiquitous Technologies},
  volume={9},
  number={2},
  pages={1--25},
  year={2025},
  publisher={ACM New York, NY, USA}
}

@article{liu2024pmtrack,
  title={Pmtrack: Enabling personalized mmwave-based human tracking},
  author={Liu, Hankai and Liu, Xiulong and Xie, Xin and Tong, Xinyu and Li, Keqiu},
  journal={Proceedings of the ACM on Interactive, Mobile, Wearable and Ubiquitous Technologies},
  volume={7},
  number={4},
  pages={1--30},
  year={2024},
  publisher={ACM New York, NY, USA}
}

@article{chang2024mmecare,
  title={Mmecare: Enabling fine-grained vital sign monitoring for emergency care with handheld mmwave radars},
  author={Chang, Zhaoxin and Zhang, Fusang and Ma, Xujun and Wang, Pei and Chen, Weiyan and Zhang, Duo and Jouaber, Badii and Zhang, Daqing},
  journal={Proceedings of the ACM on Interactive, Mobile, Wearable and Ubiquitous Technologies},
  volume={8},
  number={4},
  pages={1--24},
  year={2024},
  publisher={ACM New York, NY, USA}
}

@inproceedings{li2025msac,
  title={mSAC: Enhancing Localization with mmWave Sensing and Orthogonal Signals},
  author={Li, Mengning and Zhu, Haocheng and Wang, Wenye and Ekici, Eylem},
  booktitle={IEEE INFOCOM 2025-IEEE Conference on Computer Communications},
  pages={1--10},
  year={2025},
  organization={IEEE}
}

@inproceedings{guo2025disrupting,
  title={Disrupting In-Car mmWave Sensing Through IRS Manipulation},
  author={Guo, Hanqing and Li, Dong and Liu, Ruofeng and Zheng, Yao},
  booktitle={2025 IEEE Security and Privacy Workshops (SPW)},
  pages={226--228},
  year={2025},
  organization={IEEE}
}

@inproceedings{choi2025mvdoppler,
  title={MVDoppler-Pose: Multi-Modal Multi-View mmWave Sensing for Long-Distance Self-Occluded Human Walking Pose Estimation},
  author={Choi, Jaeho and Hor, Soheil and Yang, Shubo and Arbabian, Amin},
  booktitle={Proceedings of the Computer Vision and Pattern Recognition Conference},
  pages={27750--27759},
  year={2025}
}

@inproceedings{duan2025argus,
  title={Argus: Multi-view egocentric human mesh reconstruction based on stripped-down wearable mmwave add-on},
  author={Duan, Di and Lyu, Shengzhe and Yuan, Mu and Xue, Hongfei and Li, Tianxing and Xu, Weitao and Wu, Kaishun and Xing, Guoliang},
  booktitle={Proceedings of the 23rd ACM Conference on Embedded Networked Sensor Systems},
  pages={1--14},
  year={2025}
}

@inproceedings{liu2025adonis,
  title={Adonis: Neural-enhanced Fine-grained Leaf Wetness Sensing with Efficient mmWave Imaging},
  author={Liu, Yimeng and Gan, Maolin and Li, Gen and Dong, Younsuk and Cao, Zhichao},
  booktitle={IEEE INFOCOM 2025-IEEE Conference on Computer Communications},
  pages={1--10},
  year={2025},
  organization={IEEE}
}

@article{safa2021improving,
  title={Improving the accuracy of spiking neural networks for radar gesture recognition through preprocessing},
  author={Safa, Ali and Corradi, Federico and Keuninckx, Lars and Ocket, Ilja and Bourdoux, Andr{\'e} and Catthoor, Francky and Gielen, Georges GE},
  journal={IEEE Transactions on Neural Networks and Learning Systems},
  volume={34},
  number={6},
  pages={2869--2881},
  year={2021},
  publisher={IEEE}
}

@article{hu2025energy,
  title={Energy-Efficient Wireless Technology Recognition Method Using Time-Frequency Feature Fusion Spiking Neural Networks},
  author={Hu, Lifan and Wang, Yu and Fu, Xue and Guo, Lantu and Lin, Yun and Gui, Guan},
  journal={IEEE Transactions on Information Forensics and Security},
  year={2025},
  publisher={IEEE}
}

@article{li2024hardware,
  title={A Hardware and Software Co-Design for Energy-Efficient Neural Network Accelerator With Multiplication-Less Folded-Accumulative PE for Radar-Based Hand Gesture Recognition},
  author={Li, Fan and Guan, Yunqi and Ye, Wenbin},
  journal={IEEE Transactions on Very Large Scale Integration (VLSI) Systems},
  volume={32},
  number={10},
  pages={1964--1968},
  year={2024},
  publisher={IEEE}
}

@article{maaten2008visualizing,
  title={Visualizing data using t-SNE},
  author={Maaten, Laurens van der and Hinton, Geoffrey},
  journal={Journal of machine learning research},
  volume={9},
  number={Nov},
  pages={2579--2605},
  year={2008}
}

@article{ma2024darwin3,
  title={Darwin3: a large-scale neuromorphic chip with a novel ISA and on-chip learning},
  author={Ma, De and Jin, Xiaofei and Sun, Shichun and Li, Yitao and Wu, Xundong and Hu, Youneng and Yang, Fangchao and Tang, Huajin and Zhu, Xiaolei and Lin, Peng and others},
  journal={National Science Review},
  volume={11},
  number={5},
  pages={nwae102},
  year={2024},
  publisher={Oxford University Press}
}

@article{kokonendji2007discrete,
  title={Discrete triangular distributions and non-parametric estimation for probability mass function},
  author={Kokonendji, CC and Senga Kiesse, Tristan and Zocchi, Silvio S},
  journal={Journal of Nonparametric Statistics},
  volume={19},
  number={6-8},
  pages={241--254},
  year={2007},
  publisher={Taylor \& Francis}
}

@article{kang2017neurosurgeon,
  title={Neurosurgeon: Collaborative intelligence between the cloud and mobile edge},
  author={Kang, Yiping and Hauswald, Johann and Gao, Cao and Rovinski, Austin and Mudge, Trevor and Mars, Jason and Tang, Lingjia},
  journal={ACM SIGARCH Computer Architecture News},
  volume={45},
  number={1},
  pages={615--629},
  year={2017},
  publisher={ACM New York, NY, USA}
}

@inproceedings{reddi2020mlperf,
  title={Mlperf inference benchmark},
  author={Reddi, Vijay Janapa and Cheng, Christine and Kanter, David and Mattson, Peter and Schmuelling, Guenther and Wu, Carole-Jean and Anderson, Brian and Breughe, Maximilien and Charlebois, Mark and Chou, William and others},
  booktitle={2020 ACM/IEEE 47th Annual International Symposium on Computer Architecture (ISCA)},
  pages={446--459},
  year={2020},
  organization={IEEE}
}

@inproceedings{fang2021incorporating,
  title={Incorporating learnable membrane time constant to enhance learning of spiking neural networks},
  author={Fang, Wei and Yu, Zhaofei and Chen, Yanqi and Masquelier, Timoth{\'e}e and Huang, Tiejun and Tian, Yonghong},
  booktitle={Proceedings of the IEEE/CVF international conference on computer vision},
  pages={2661--2671},
  year={2021}
}

@article{yao2022glif,
  title={Glif: A unified gated leaky integrate-and-fire neuron for spiking neural networks},
  author={Yao, Xingting and Li, Fanrong and Mo, Zitao and Cheng, Jian},
  journal={Advances in Neural Information Processing Systems},
  volume={35},
  pages={32160--32171},
  year={2022}
}

@article{fang2023parallel,
  title={Parallel spiking neurons with high efficiency and ability to learn long-term dependencies},
  author={Fang, Wei and Yu, Zhaofei and Zhou, Zhaokun and Chen, Ding and Chen, Yanqi and Ma, Zhengyu and Masquelier, Timoth{\'e}e and Tian, Yonghong},
  journal={Advances in Neural Information Processing Systems},
  volume={36},
  pages={53674--53687},
  year={2023}
}

@article{yao2024spike,
  title={Spike-based dynamic computing with asynchronous sensing-computing neuromorphic chip},
  author={Yao, Man and Richter, Ole and Zhao, Guangshe and Qiao, Ning and Xing, Yannan and Wang, Dingheng and Hu, Tianxiang and Fang, Wei and Demirci, Tugba and De Marchi, Michele and others},
  journal={Nature Communications},
  volume={15},
  number={1},
  pages={4464},
  year={2024},
  publisher={Nature Publishing Group UK London}
}

@article{xiao2022online,
  title={Online training through time for spiking neural networks},
  author={Xiao, Mingqing and Meng, Qingyan and Zhang, Zongpeng and He, Di and Lin, Zhouchen},
  journal={Advances in neural information processing systems},
  volume={35},
  pages={20717--20730},
  year={2022}
}

@article{neftci2019surrogate,
  title={Surrogate gradient learning in spiking neural networks: Bringing the power of gradient-based optimization to spiking neural networks},
  author={Neftci, Emre O and Mostafa, Hesham and Zenke, Friedemann},
  journal={IEEE Signal Processing Magazine},
  volume={36},
  number={6},
  pages={51--63},
  year={2019},
  publisher={IEEE}
}

@article{lin2002divergence,
  title={Divergence measures based on the Shannon entropy},
  author={Lin, Jianhua},
  journal={IEEE Transactions on Information theory},
  volume={37},
  number={1},
  pages={145--151},
  year={2002},
  publisher={IEEE}
}

@article{miao2025wifi,
  title={Wi-Fi sensing techniques for human activity recognition: Brief survey, potential challenges, and research directions},
  author={Miao, Fucheng and Huang, Youxiang and Lu, Zhiyi and Ohtsuki, Tomoaki and Gui, Guan and Sari, Hikmet},
  journal={ACM Computing Surveys},
  volume={57},
  number={5},
  pages={1--30},
  year={2025},
  publisher={ACM New York, NY}
}

@article{yang2022tarf,
  title={TARF: Technology-agnostic RF sensing for human activity recognition},
  author={Yang, Chao and Wang, Xuyu and Mao, Shiwen},
  journal={IEEE journal of biomedical and health informatics},
  volume={27},
  number={2},
  pages={636--647},
  year={2022},
  publisher={IEEE}
}

@article{lapins2024n2n,
  title={DAS-N2N: machine learning distributed acoustic sensing (DAS) signal denoising without clean data},
  author={Lapins, Sacha and Butcher, Antony and Kendall, J-Michael and Hudson, Thomas S and Stork, Anna L and Werner, Maximilian J and Gunning, Jemma and Brisbourne, Alex M},
  journal={Geophysical Journal International},
  volume={236},
  number={2},
  pages={1026--1041},
  year={2024},
  publisher={Oxford University Press}
}

@article{pan2025deep,
  title={Deep Learning-Based Denoising of Noisy Vibration Signals from Wavefront Sensors Using BiL-DCAE},
  author={Pan, Yun and Luo, Quan and Fan, Yiyou and Chen, Haoming and Zhou, Donghua and Luo, Hongsheng and Jiang, Wei and Su, Jinshan},
  journal={Sensors},
  volume={25},
  number={16},
  pages={5012},
  year={2025},
  publisher={MDPI}
}

@article{johnston2025method,
  title={Method for Target Detection in a High Noise Environment Through Frequency Analysis Using an Event-Based Vision Sensor},
  author={Johnston, Will and Young, Shannon and Howe, David and Oliver, Rachel and Theis, Zachry and McReynolds, Brian and Dexter, Michael},
  journal={Signals},
  volume={6},
  number={3},
  pages={39},
  year={2025},
  publisher={MDPI}
}

@manual{raspi4_datasheet,
  title        = {Raspberry Pi 4 Model B Datasheet},
  organization = {Raspberry Pi (Trading) Ltd.},
  year         = {2019},
  note         = {Datasheet for Raspberry Pi 4 Model B}
}

@manual{nvidia2024Platform,
  title        = {Platform Power and Performance --- NVIDIA Jetson Linux Developer Guide (r35.3.1)},
  organization = {NVIDIA},
  year         = {2024},
  note         = {Documents nvpmodel and power/performance management for Jetson platforms}
}

@inproceedings{wu2023neural,
  title={Neural fourier filter bank},
  author={Wu, Zhijie and Jin, Yuhe and Yi, Kwang Moo},
  booktitle={Proceedings of the IEEE/CVF Conference on Computer Vision and Pattern Recognition},
  pages={14153--14163},
  year={2023}
}

@article{jia2016dynamic,
  title={Dynamic filter networks},
  author={Jia, Xu and De Brabandere, Bert and Tuytelaars, Tinne and Gool, Luc V},
  journal={Advances in neural information processing systems},
  volume={29},
  year={2016}
}

@article{dwivedi2023benchmarking,
  title={Benchmarking graph neural networks},
  author={Dwivedi, Vijay Prakash and Joshi, Chaitanya K and Luu, Anh Tuan and Laurent, Thomas and Bengio, Yoshua and Bresson, Xavier},
  journal={Journal of Machine Learning Research},
  volume={24},
  number={43},
  pages={1--48},
  year={2023}
}

@book{hsu2011signals,
  title={Signals and systems},
  author={Hsu, Hwei P},
  year={2011},
  publisher={New York}
}

@article{pfister2017discrete,
  title={Discrete-time signal processing},
  author={Pfister, Henry},
  journal={Lecture Note, pfister. ee. duke. edu/courses/ece485/dtsp. pdf},
  year={2017}
}
\bibliographystyle{icml2026}

%%%%%%%%%%%%%%%%%%%%%%%%%%%%%%%%%%%%%%%%%%%%%%%%%%%%%%%%%%%%%%%%%%%%%%%%%%%%%%%
%%%%%%%%%%%%%%%%%%%%%%%%%%%%%%%%%%%%%%%%%%%%%%%%%%%%%%%%%%%%%%%%%%%%%%%%%%%%%%%
% APPENDIX
%%%%%%%%%%%%%%%%%%%%%%%%%%%%%%%%%%%%%%%%%%%%%%%%%%%%%%%%%%%%%%%%%%%%%%%%%%%%%%%
%%%%%%%%%%%%%%%%%%%%%%%%%%%%%%%%%%%%%%%%%%%%%%%%%%%%%%%%%%%%%%%%%%%%%%%%%%%%%%%
\newpage
\appendix
\onecolumn

\section{DI Estimation Details and Robustness}
\label{app:di}

This appendix specifies the \emph{train-only} pipeline used to estimate the dataset-level discriminative spectrum
$\mathrm{DI}_{\mathrm{norm}}(\omega_k)$.
Throughout, we strictly follow the notation in the symbol table and \emph{only compare quantities defined on the same
one-sided grid}
\begin{equation}
\Omega_L=\left\{\omega_k=\frac{2\pi k}{L}\ \middle|\ k=0,1,\ldots,K\right\},
\qquad
K=\left\lfloor\frac{L}{2}\right\rfloor.
\label{eq:app_grid_omegaL}
\end{equation}
This one-sided construction retains the DC bin ($k=0$) and all nonnegative DFT frequencies up to the largest one-sided index $K=\lfloor L/2\rfloor$. When $L$ is even, $K=L/2$ corresponds to the Nyquist bin at $\omega=\pi$, which is included exactly once in $\Omega_L$. When $L$ is odd, there is no Nyquist bin; the highest retained frequency is $k=(L-1)/2$, i.e., $\omega_{K}=2\pi K/L<\pi$.

% =========================================================
\subsection{Deterministic Input Processing and Scalarization Scope}
\label{app:di_e1}

\subsubsection{Data and indexing}

We start from the supervised mmWave dataset
\begin{equation}
\mathcal{D}=\{(\mathbf{X}_i,y_i)\}_{i=1}^{M},
\qquad
y_i\in\{1,\ldots,\mathcal{C}\}.
\end{equation}
Each sample is a real-valued tensor with an explicit temporal axis of length $L$:
\begin{equation}
\mathbf{X}_i\in\mathbb{R}^{L\times C\times H\times W},
\qquad
\mathbf{X}_i[l]\in\mathbb{R}^{C\times H\times W},\ \ l\in\{0,\ldots,L-1\}.
\label{eq:app_Xi_layout}
\end{equation}
All $\mathrm{DI}$ statistics are computed \emph{once} using only the training split $\mathcal{D}_{\mathrm{tr}}\subseteq\mathcal{D}$
and then frozen (cf.\ \S\ref{app:di_e3}). For implementation alignment, we use $0$-based indexing for $l$ and for the
one-sided bin index $k$; all definitions are invariant to this convention.

\subsubsection{Scalarization and scope}

To make this comparison well-defined across heterogeneous mmWave layouts,
we reduce each frame to a scalar by averaging over \emph{all} non-temporal axes:
\begin{equation}
\mathbf{s}_i[l]
\;\triangleq\;
\frac{1}{CHW}\sum_{c=1}^{C}\sum_{h=1}^{H}\sum_{w=1}^{W}\mathbf{X}_i[l,c,h,w]
\in\mathbb{R}.
\label{eq:app_scalarization_mean}
\end{equation}
This aggregation is intentionally non-invertible (it discards spatial/channel correlations).

\subsubsection{Preprocessing}

Given the scalar temporal sequence $\{\mathbf{s}_i[l]\}_{l=0}^{L-1}$, we form a preprocessed sequence
$\{\mathbf{\bar s}_i[l]\}_{l=0}^{L-1}$ by a deterministic rule:
\begin{equation}
 \mathbf{\bar s}_i[l]=
\begin{cases}
\mathbf{s}_i[l], & \texttt{raw},\\[6pt]
\mathbf{s}_i[l]-\frac{1}{L}\sum_{l'=0}^{L-1}\mathbf{s}_i[l'], & \texttt{demean}.
\end{cases}
\label{eq:app_di_preproc}
\end{equation}
Here $l'$ denotes a dummy index used to compute the temporal mean over the sequence. The \texttt{raw} mode uses the scalar proxy as-is. The \texttt{demean} mode subtracts the temporal mean, thereby
removing the DC component (zero-frequency offset) before taking the DFT.

% =========================================================
\subsection{One-Sided DFT on the Common Grid with Optional Windowing}
\label{app:di_e2}

\subsubsection{Grid and one-sided transform}

We compute spectra on the fixed one-sided grid $\Omega_L$ in~Eq.~\eqref{eq:app_grid_omegaL}. 
For each real-valued sequence $\mathbf{\bar s}_i[l]\in\mathbb{R}$ of length $L$, we apply the temporal DFT operator
$\mathcal{F}_t(\cdot)$ and denote its coefficients by
\begin{equation}
\mathbf{\tilde s}_i[k]
\;\triangleq\;
\big(\mathcal{F}_t(\mathbf{\bar s}_i)\big)(\omega_k),
\qquad
\omega_k=\frac{2\pi k}{L},
\quad k=0,1,\ldots,K.
\label{eq:app_di_dft_op}
\end{equation}
We then use the nonnegative amplitude feature (discarding phase):
\begin{equation}
A_i[k]\;\triangleq\;| \mathbf{\tilde s}_i[k]|\in\mathbb{R}_{\ge 0}.
\label{eq:app_di_amp}
\end{equation}
Because $\bar s_i[l]$ is real, the full DFT is conjugate-symmetric; thus the bins $k=0,\ldots,K$ correspond to the
nonnegative-frequency half. DC ($k=0$) appears once, and the Nyquist bin ($k=K$ when $L$ is even) also appears once,
consistent with the fixed one-sided convention.

\paragraph{Normalization and scale.}
Our implementation uses an unnormalized DFT. Any constant scaling factor applied to
$\tilde s_i[k]$ induces the same global scaling on all $A_i[k]$, which cancels in the DI ratio up to the stabilizer
$\varepsilon$ (see \S\ref{app:di_e3} and \S\ref{app:di_scale}).

\subsubsection{Optional windowing}

By default, estimation uses the implicit rectangular window. To reduce spectral leakage, we also test a Hann-windowed
variant applied \emph{after} preprocessing and \emph{before} the DFT:
\begin{align}
&\mathbf{\bar s}_i^{(w)}[l]\;\triangleq\;w[l]\mathbf{\bar s}_i[l],
\\
&w[l]=\tfrac{1}{2}\Big(1-\cos\frac{2\pi l}{L-1}\Big),
\qquad l=0,\ldots,L-1,
\label{eq:app_di_hann}
\end{align}
and then replace $\mathbf{\bar s}_i[l]$ with $\mathbf{\bar s}_i^{(w)}[l]$ in Eq.~\eqref{eq:app_di_dft_op}. All downstream definitions remain
unchanged and still live on the same $\Omega_L$ grid.

% =========================================================
\subsection{DI Estimator and Normalization}
\label{app:di_e3}

\subsubsection{Train-only class statistics}

All statistics are computed \emph{once} on the stratified training split $\mathcal{D}_{\mathrm{tr}}$ and then frozen.
Let $\mathcal{I}_{\mathrm{tr}}$ index the training samples and define
\begin{align}
&N_{\mathrm{tr}}\triangleq|\mathcal{I}_{\mathrm{tr}}|,
\\
&N_{c,\mathrm{tr}}\triangleq\big|\{i\in\mathcal{I}_{\mathrm{tr}}:y_i=c\}\big|,
\\
&\pi_c\triangleq\frac{N_{c,\mathrm{tr}}}{N_{\mathrm{tr}}}.
\label{eq:app_di_priors}
\end{align}
For each one-sided bin $k\in\{0,\ldots,K\}$, define the class-conditional mean of the amplitude feature and the
mixture mean:
\begin{align}
\mu_c[k]
&\triangleq
\frac{1}{N_{c,\mathrm{tr}}}
\sum_{\substack{i\in\mathcal{I}_{\mathrm{tr}}\\ y_i=c}}
A_i[k],
\label{eq:app_di_mu_ck}\\[4pt]
\bar{\mu}[k]
&\triangleq
\sum_{c=1}^{\mathcal{C}}\pi_c\,\mu_c[k].
\label{eq:app_di_mixture_mean}
\end{align}
Next, define the unbiased within-class variance (well-defined in our stratified splits for all reported datasets, where
$N_{c,\mathrm{tr}}\ge 2$ holds):
\begin{equation}
\mathrm{Var}_c[k]
\triangleq
\frac{1}{N_{c,\mathrm{tr}}-1}
\sum_{\substack{i\in\mathcal{I}_{\mathrm{tr}}\\ y_i=c}}
\big(A_i[k]-\mu_c[k]\big)^2.
\label{eq:app_di_var_ck}
\end{equation}

\subsubsection{Between/within scatters and DI}

Following the symbol table definitions, the between-class and within-class scatters at bin $k$ are
\begin{align}
\mathrm{S_B}[k]
&\triangleq
\sum_{c=1}^{\mathcal{C}}
\pi_c\big(\mu_c[k]-\bar{\mu}[k]\big)^2,
\label{eq:app_di_SB}\\[4pt]
\mathrm{S_W}[k]
&\triangleq
\sum_{c=1}^{\mathcal{C}}
\pi_c\,\mathrm{Var}_c[k].
\label{eq:app_di_SW}
\end{align}
We then define the discriminative index at $\omega_k$ as the stabilized ratio~\cite{fukunaga2013introduction}
\begin{equation}
\mathrm{DI}(\omega_k)
\triangleq
\frac{\mathrm{S_B}[k]}{\mathrm{S_W}[k]+\varepsilon},
\qquad
\varepsilon>0.
\label{eq:app_di_ratio}
\end{equation}
The stabilizer $\varepsilon$ prevents division by near-zero $\mathrm{S_W}[k]$ (e.g., extremely small within-class
variability at some frequencies).

\subsubsection{Normalization on $\Omega_L$}

To compare against in discrete-time frequency-matching Analysis (DFMA), we normalize DI over the fixed one-sided grid:
\begin{align}
\mathrm{DI}_{\mathrm{norm}}(\omega_k)
&\triangleq
\frac{\mathrm{DI}(\omega_k)}{\sum_{k'=0}^{K}\mathrm{DI}(\omega_{k'})}.
\label{eq:app_di_norm}
\end{align}
By construction, $\mathrm{DI}_{\mathrm{norm}}(\omega_k)\ge 0$ and $\sum_{k=0}^{K}\mathrm{DI}_{\mathrm{norm}}(\omega_k)=1$,
so it forms a probability mass function (PMF)~\cite{kokonendji2007discrete} over $\Omega_L$.
This is the exact object used in the DFMA score
$\mathrm{FMS}_{\mathrm{avg}}(\beta)=\sum_{k=0}^{K}\mathrm{DI}_{\mathrm{norm}}(\omega_k)\tilde H(\omega_k;\beta)$.

\paragraph{No leakage.}
To prevent leakage, $\mathrm{DI}(\omega_k)$ and $\mathrm{DI}_{\mathrm{norm}}(\omega_k)$ are computed \emph{only} from
$\mathcal{D}_{\mathrm{tr}}$ and then frozen for all downstream analyses and for all candidate $\beta\in\mathcal{B}$.
In particular, we never compute DI using the test split and never use test-derived DI statistics for any model or
hyperparameter selection.

% =========================================================
\subsection{Fisher-Style Derivation of the Per-Frequency DI}
\label{app:fisher_di_derivation}

\paragraph{Setup: treat each frequency bin as a scalar feature.}
Fix a frequency bin $k$ (equivalently $\omega_k$). Define the scalar random variable
\begin{equation}
X_k \in \mathbb{R},
\end{equation}
whose empirical realizations are $\{A_i[k]\}_{i\in\mathcal{I}_{\mathrm{tr}}}$ on the training split, with class label
$y_i\in\{1,\dots,\mathcal{C}\}$. Throughout this derivation, \emph{all expectations are class-prior weighted} by
$\pi_c$ (defined in Eq.~\eqref{eq:app_di_priors}).

\paragraph{Class means and variances.}
For each class $c$, define the (population) class-conditional mean and variance of $X_k$ as
\begin{align}
&m_c[k] := \mathbb{E}[X_k \mid y=c],
\\
&\sigma_c^2[k] := \mathrm{Var}(X_k \mid y=c),
\label{eq:app_pop_moments}
\end{align}
and define the mixture mean
\begin{align}
m[k] := \sum_{c=1}^{\mathcal{C}} \pi_c\, m_c[k] = \mathbb{E}[X_k].
\label{eq:app_pop_mixmean}
\end{align}
In our implementation, we estimate these from the training data via
\begin{align}
&m_c[k]\ \leadsto\ \mu_c[k],\\
&m[k]\ \leadsto\ \bar\mu[k],\\
&\sigma_c^2[k]\ \leadsto\ \mathrm{Var}_c[k],
\label{eq:app_pop_to_emp}
\end{align}
where $\leadsto$ denotes replacement by the corresponding empirical estimator computed on the training split, and
$\mu_c[k],\bar\mu[k],\mathrm{Var}_c[k]$ are exactly those in Eq.~\eqref{eq:app_di_mu_ck}--Eq.~\eqref{eq:app_di_var_ck}.

\paragraph{Fisher criterion (general form) and its 1D specialization.}
The classical Fisher criterion measures class separability~\cite{pml1Book2022} after a linear projection.
For a $d$-dimensional feature vector $z\in\mathbb{R}^d$ and projection $w\in\mathbb{R}^d$,
one commonly writes the (scalar) objective as the Rayleigh quotient
\begin{equation}
\mathcal{J}(w) := \frac{w^\top S_B w}{w^\top S_W w},
\label{eq:app_fisher_rayleigh}
\end{equation}
where $S_B$ and $S_W$ are between-class and within-class scatter matrices.

In our case, for each fixed bin $k$, the feature is already \emph{one-dimensional}:
\begin{equation}
z := X_k \in \mathbb{R}\quad(d=1),\qquad w\in\mathbb{R}.
\end{equation}
Thus, Eq.~(\ref{eq:app_fisher_rayleigh}) \emph{specializes to a scalar ratio} because
\begin{align}
&w^\top S_B w = w^2\, S_B^{(1\mathrm{D})}[k],\\
&w^\top S_W w = w^2\, S_W^{(1\mathrm{D})}[k],
\end{align}
and hence for any $w\neq 0$,
\begin{equation}
\mathcal{J}(w)
=\frac{w^2 S_B^{(1\mathrm{D})}[k]}{w^2 S_W^{(1\mathrm{D})}[k]}
=\frac{S_B^{(1\mathrm{D})}[k]}{S_W^{(1\mathrm{D})}[k]}.
\label{eq:app_fisher_1d_cancel}
\end{equation}
Therefore, in 1D the Fisher criterion is \emph{projection-invariant} (up to the sign/scale of $w$) and reduces to the
\emph{between/within scatter ratio} of the scalar feature itself.

\paragraph{Between-class scatter in 1D (population form).}
Define the 1D between-class scatter for bin $k$ as the prior-weighted variance of class means:
\begin{equation}
S_B^{(1\mathrm{D})}[k]
:=\sum_{c=1}^{\mathcal{C}}\pi_c\big(m_c[k]-m[k]\big)^2.
\label{eq:app_fisher_sb_pop}
\end{equation}
Expanding the square yields an equivalent closed form:
\begin{align}
S_B^{(1\mathrm{D})}[k]
&=\sum_{c=1}^{\mathcal{C}}\pi_c\big(m_c[k]^2 - 2m_c[k]m[k] + m[k]^2\big) \notag\\
&=\sum_{c=1}^{\mathcal{C}}\pi_c m_c[k]^2 - 2m[k]\sum_{c=1}^{\mathcal{C}}\pi_c m_c[k] + m[k]^2\sum_{c=1}^{\mathcal{C}}\pi_c \notag\\
&=\sum_{c=1}^{\mathcal{C}}\pi_c m_c[k]^2 - 2m[k]\cdot m[k] + m[k]^2 \notag\\
&=\sum_{c=1}^{\mathcal{C}}\pi_c m_c[k]^2 - m[k]^2.
\label{eq:app_fisher_sb_pop_expand}
\end{align}
The second line uses Eq.~\eqref{eq:app_pop_mixmean} and $\sum_c\pi_c=1$.

\paragraph{Within-class scatter in 1D (population form).}
Define the 1D within-class scatter for bin $k$ as the prior-weighted average within-class variance:
\begin{equation}
S_W^{(1\mathrm{D})}[k]
:=\sum_{c=1}^{\mathcal{C}}\pi_c\,\sigma_c^2[k].
\label{eq:app_fisher_sw_pop}
\end{equation}

\paragraph{Empirical estimators and alignment with our notation.}
Replacing population moments by their train-only empirical estimators Eq.~\eqref{eq:app_pop_to_emp}, we obtain
\begin{align}
S_B^{(1\mathrm{D})}[k]
&\ \leadsto\ 
\sum_{c=1}^{\mathcal{C}}\pi_c\big(\mu_c[k]-\bar\mu[k]\big)^2
\ \triangleq\ \mathrm{S_B}[k],
\label{eq:app_fisher_sb_emp_match}\\
S_W^{(1\mathrm{D})}[k]
&\ \leadsto\
\sum_{c=1}^{\mathcal{C}}\pi_c\,\mathrm{Var}_c[k]
\ \triangleq\ \mathrm{S_W}[k],
\label{eq:app_fisher_sw_emp_match}
\end{align}
which is exactly Eq.~\eqref{eq:app_di_SB} and Eq.~\eqref{eq:app_di_SW}. Substituting
Eq.~\eqref{eq:app_fisher_sb_emp_match}--Eq.~\eqref{eq:app_fisher_sw_emp_match} into the 1D Fisher ratio
Eq.~\eqref{eq:app_fisher_1d_cancel} yields the per-frequency Fisher-style discriminability score:
\begin{equation}
\frac{\mathrm{S_B}[k]}{\mathrm{S_W}[k]}.
\label{eq:app_fisher_ratio_emp}
\end{equation}

\paragraph{Numerical stabilizer and the final DI definition.}
To ensure stability when $\mathrm{S_W}[k]$ is near zero (e.g., extremely low intra-class variance at some bins),
we use the standard stabilized ratio
\begin{equation}
\mathrm{DI}(\omega_k)
:=\frac{\mathrm{S_B}[k]}{\mathrm{S_W}[k]+\varepsilon},
\qquad \varepsilon>0,
\label{eq:app_fisher_di_stable}
\end{equation}
which matches Eq.~\eqref{eq:app_di_ratio}. This completes the derivation: \emph{our $\mathrm{DI}$ is precisely the 1D specialization
of the classical Fisher criterion, applied independently to each frequency bin $k$ by treating $A[k]$ as a scalar
feature.}

% =========================================================
\subsection{Validity Regime of DI Scalarization}
\label{app:di_scalarization}

The construction of the discriminative index $\mathrm{DI}(\omega_k)$ relies on a scalar temporal proxy
obtained by averaging each frame $\mathbf{X}_i[l]\in\mathbb{R}^{C\times H\times W}$ over non-temporal axes,
yielding a one-dimensional sequence $\mathbf{s}[n]\in\mathbb{R}^{F}$.
This subsection clarifies the purpose, limitations, and appropriate interpretation of this scalarization.

\paragraph{Purpose and scope.}
The scalar proxy is introduced to obtain a \emph{dataset-level, layout-agnostic diagnostic} of where
discriminative information concentrates along the temporal frequency axis.
By collapsing spatial and channel dimensions, $\mathrm{DI}_{\mathrm{norm}}(\omega_k)$ captures
how strongly different temporal frequencies separate classes \emph{on average} over the training split
$\mathcal{D}_{\mathrm{tr}}$, independent of model architecture.
It is therefore designed to support mechanism--data alignment analysis, rather than to serve as a sufficient statistic
for classification.

\paragraph{Information loss and limitations.}
By construction, scalarization discards spatial structure, inter-channel correlations, and phase relationships.
As a result, $\mathrm{DI}_{\mathrm{norm}}(\omega_k)$ may underestimate discriminative content that is primarily encoded
in spatial configurations whose global mean remains constant over time.
Consequently, the scalar DI should be interpreted as a \emph{coarse summary} of temporal discriminability,
rather than an exhaustive characterization of all task-relevant information.

\paragraph{Why scalar DI remains informative for DFMA.}
DFMA does not rely on the absolute magnitude of $\mathrm{DI}(\omega_k)$, but on its \emph{relative distribution}
over the common grid $\Omega_L$.
Empirically, across the datasets considered, discriminative temporal structure manifests as systematic
low- or mid-frequency concentration that is preserved under scalarization.
This is sufficient for analyzing how the LIF-induced low-pass bias $\tilde H(\omega_k;\beta)$
interacts with the data statistics to produce consistent $\beta$-sweep trends and a stable reference point
$\beta^\dagger$.

\paragraph{Failure modes.}
The scalar DI diagnostic may become unreliable if class discrimination is dominated by
purely spatial cues with weak or flat temporal signatures, or if discriminative information
resides exclusively in fine-grained spatial dynamics that cancel under averaging.
Such cases fall outside the intended scope of DFMA and motivate more structured diagnostics,
which we leave for future work.

Overall, DI scalarization is a deliberate trade-off:
it sacrifices spatial specificity to gain robustness, interpretability, and architectural independence,
making it suitable as a dataset-level frequency diagnostic for mechanism--data alignment,
but not as a replacement for full spatiotemporal modeling.

\subsection{Robustness Checks}
In the main methodology, we use AOPHand as a running example to illustrate the $\mathrm{DI}$ analysis for a representative mmWave dataset. We next continue with the same dataset to examine the robustness of the proposed diagnostics and conclusions. We repeated the full $\mathrm{DI}$ pipeline on the train split while replacing the default frame-wise mean reduction in Eq.~\ref{eq:app_scalarization_mean} with
alternative, parameter-free reductions over non-temporal axes, including root-mean-square (RMS) magnitude (a standard amplitude/energy surrogate~\cite{proakis2007dsp}) and $\ell_1$-mean magnitude proxies. Both alternatives yield highly consistent DI spectra relative to the mean proxy (Spearman $\varrho \approx 0.83$, Jensen--Shannon (JS) divergence~\cite{lin2002divergence} $\approx 0.02$, and peak-bin shift $\le 1$), indicating that the frequency-localization and frequency-matching trends do not hinge on the specific mean aggregation.

% =========================================================
\subsection{Scale Invariance}
\label{app:di_scale}

Before analyzing the interaction between the discriminative spectrum and model dynamics, we establish a basic
invariance property of the proposed $\mathrm{DI}$ estimator. Since mmWave amplitudes may be subject to arbitrary global rescaling
due to calibration, gain control, or preprocessing, it is desirable that the discriminative measure be insensitive to
such transformations. The following definition and proposition formalize the effect of global amplitude rescaling on
the Fisher-style between/within scatter formulation and show that the resulting $\mathrm{DI}$ is scale-invariant up to a
negligible stabilizer term.

\begin{definition}[Global amplitude rescaling]
\label{def:global_rescaling}
For a fixed scalar $\alpha\in\mathbb{R}$, we define a global amplitude rescaling of the per-sample spectra
$\{A_n[k]\}$ by
\begin{equation}
A_n'[k] \triangleq \alpha\,A_n[k],\qquad \forall n\in\mathcal{I}_{\mathrm{tr}},\ \forall k.
\label{eq:global_rescaling}
\end{equation}
\end{definition}

\begin{proposition}[Scale behavior of Fisher-style scatters and $\mathrm{DI}$]
\label{prop:di_scale_invariance}
Under the global rescaling in Definition~\ref{def:global_rescaling}, the train-only class statistics satisfy
\begin{align}
&\mu_c'[k]=\alpha\,\mu_c[k],\\
&\bar\mu'[k]=\alpha\,\bar\mu[k],\\
&\mathrm{Var}_c'[k]=\alpha^2 \mathrm{Var}_c[k],
\label{eq:scale_moments}
\end{align}
and hence the between/within scatters scale as
\begin{align}
&\mathrm{S_B}'[k]=\alpha^2 \mathrm{S_B}[k],\\
&\mathrm{S_W}'[k]=\alpha^2 \mathrm{S_W}[k].
\label{eq:scale_scatters}
\end{align}
Consequently, the stabilized discriminative ratio
\begin{align}
\mathrm{DI}'(\omega_k)
&=\frac{\mathrm{S_B}'[k]}{\mathrm{S_W}'[k]+\varepsilon} \notag \\
&=\frac{\alpha^2 \mathrm{S_B}[k]}{\alpha^2 \mathrm{S_W}[k]+\varepsilon}
=\frac{\mathrm{S_B}[k]}{\mathrm{S_W}[k]+\varepsilon/\alpha^2}
\label{eq:scale_di}
\end{align}
is invariant up to the stabilizer term; in particular, when $\mathrm{S_W}[k]\gg \varepsilon$ (and $\alpha$ is not
extremely small), we have $\mathrm{DI}'(\omega_k)\approx \mathrm{DI}(\omega_k)$.
\end{proposition}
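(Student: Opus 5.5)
The argument is a direct substitution into the train-only estimators of Appendix~\ref{app:di_e3}. We track how each statistic transforms under $A_n'[k]=\alpha A_n[k]$, holding fixed the labels, the index set $\mathcal{I}_{\mathrm{tr}}$, and hence the priors $\pi_c$ and counts $N_{c,\mathrm{tr}}$. Throughout, $k$ is fixed, since every quantity is defined bin-wise.

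\textbf{Step 1: first moments.} By linearity of the class-conditional average in Eq.~\eqref{eq:app_di_mu_ck}, we get $\mu_c'[k]=\frac{1}{N_{c,\mathrm{tr}}}\sum_{y_n=c}\alpha A_n[k]=\alpha\mu_c[k]$. Substituting this into the prior-weighted mixture mean Eq.~\eqref{eq:app_di_mixture_mean} gives $\bar\mu'[k]=\alpha\bar\mu[k]$.

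\textbf{Step 2: second moments.} In Eq.~\eqref{eq:app_di_var_ck} each deviation becomes $A_n'[k]-\mu_c'[k]=\alpha\bigl(A_n[k]-\mu_c[k]\bigr)$. Squaring pulls out $\alpha^2$, so $\mathrm{Var}_c'[k]=\alpha^2\mathrm{Var}_c[k]$; the prefactor $1/(N_{c,\mathrm{tr}}-1)$ is unchanged. The same computation applied to Eq.~\eqref{eq:app_di_SB}, where each term is $\bigl(\alpha(\mu_c[k]-\bar\mu[k])\bigr)^2$, and to the convex combination in Eq.~\eqref{eq:app_di_SW}, gives $\mathrm{S_B}'[k]=\alpha^2\mathrm{S_B}[k]$ and $\mathrm{S_W}'[k]=\alpha^2\mathrm{S_W}[k]$. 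The sign of $\alpha$ plays no role, because only squares appear.

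\textbf{Step 3: the stabilized ratio.} Plugging the scaled scatters into Eq.~\eqref{eq:app_di_ratio} gives $\alpha^2\mathrm{S_B}[k]/(\alpha^2\mathrm{S_W}[k]+\varepsilon)$. For $\alpha\neq0$ we divide numerator and denominator by $\alpha^2$ and obtain the stated form with effective stabilizer $\varepsilon/\alpha^2$. The case $\alpha=0$ must be handled separately: then $\mathrm{DI}'\equiv0$, and the rewritten form in Eq.~\eqref{eq:scale_di} is undefined. The invariance claim should therefore be read for $\alpha\neq0$, which matches the proposition's hypothesis that $\alpha$ is ``not extremely small.''

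\textbf{Step 4: quantifying ``approximately invariant.''} This is the only step needing care, because the proposition states it loosely. I would make it precise by writing
\[
\mathrm{DI}'(\omega_k)-\mathrm{DI}(\omega_k)=\mathrm{S_B}[k]\,\frac{\varepsilon(1-\alpha^{-2})}{(\mathrm{S_W}[k]+\varepsilon/\alpha^2)(\mathrm{S_W}[k]+\varepsilon)}.
\]
The relative error is then bounded by $|1-\alpha^{-2}|\,\varepsilon/\mathrm{S_W}[k]$, which vanishes when $\mathrm{S_W}[k]\gg\varepsilon$ and $\varepsilon/\alpha^2\ll\mathrm{S_W}[k]$. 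This gives the claimed $\mathrm{DI}'\approx\mathrm{DI}$. As a remark, the same bound applied bin-wise shows that $\mathrm{DI}_{\mathrm{norm}}$ in Eq.~\eqref{eq:app_di_norm} is likewise approximately invariant, and exactly invariant in the limit $\varepsilon\to0$.
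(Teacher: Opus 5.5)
Your proposal is correct and follows the paper's proof essentially line by line. Both substitute $A_i'[k]=\alpha A_i[k]$ into the train-only class means, the mixture mean and the unbiased variances, then into the two scatters and the stabilized ratio, with the priors and counts held fixed. Your additions, namely excluding $\alpha=0$ before dividing by $\alpha^2$ and the explicit relative-error bound $|1-\alpha^{-2}|\,\varepsilon/\mathrm{S_W}[k]$, make rigorous the paper's qualitative closing remark that $\varepsilon/\alpha^2$ is negligible when $\mathrm{S_W}[k]\gg\varepsilon$ and $\alpha$ is not extreme.
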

\begin{proof}
By Definition~\ref{def:global_rescaling}, for any training sample index $i\in\mathcal{I}_{\mathrm{tr}}$ and any bin $k$,
we have $A_i'[k]=\alpha A_i[k]$.
Therefore, the class-conditional mean scales as
\begin{align}
\mu_c'[k]
&=\frac{1}{N_{c,\mathrm{tr}}}\sum_{\substack{i\in\mathcal{I}_{\mathrm{tr}}\\ y_i=c}}A_i'[k]
=\frac{1}{N_{c,\mathrm{tr}}}\sum_{\substack{i\in\mathcal{I}_{\mathrm{tr}}\\ y_i=c}}\alpha A_i[k]
=\alpha\,\mu_c[k]. \notag
\end{align}
Similarly, the mixture mean satisfies
\begin{align}
\bar\mu'[k]
&=\sum_{c=1}^{\mathcal{C}}\pi_c\,\mu_c'[k]
=\sum_{c=1}^{\mathcal{C}}\pi_c\,\alpha\mu_c[k]
=\alpha\,\bar\mu[k]. \notag
\end{align}
For the within-class variance, using $\mu_c'[k]=\alpha\mu_c[k]$,
\begin{align}
\mathrm{Var}_c'[k]
&=\frac{1}{N_{c,\mathrm{tr}}-1}\sum_{\substack{i\in\mathcal{I}_{\mathrm{tr}}\\ y_i=c}}\big(A_i'[k]-\mu_c'[k]\big)^2 \notag\\
&=\frac{1}{N_{c,\mathrm{tr}}-1}\sum_{\substack{i\in\mathcal{I}_{\mathrm{tr}}\\ y_i=c}}\big(\alpha A_i[k]-\alpha\mu_c[k]\big)^2
=\alpha^2 \mathrm{Var}_c[k]. \notag
\end{align}
Substituting Eq.~\eqref{eq:scale_moments} into the definitions of $\mathrm{S_B}[k]$ and $\mathrm{S_W}[k]$ yields
\begin{align}
\mathrm{S_B}'[k]
&=\sum_{c=1}^{\mathcal{C}}\pi_c\big(\mu_c'[k]-\bar\mu'[k]\big)^2
=\sum_{c=1}^{\mathcal{C}}\pi_c\big(\alpha\mu_c[k]-\alpha\bar\mu[k]\big)^2
=\alpha^2 \mathrm{S_B}[k], \notag
\end{align}
and
\begin{align}
\mathrm{S_W}'[k]
&=\sum_{c=1}^{\mathcal{C}}\pi_c\,\mathrm{Var}_c'[k]
=\sum_{c=1}^{\mathcal{C}}\pi_c\,\alpha^2 \mathrm{Var}_c[k]
=\alpha^2 \mathrm{S_W}[k]. \notag
\end{align}
This proves Eq.~\eqref{eq:scale_scatters}. Plugging these into the stabilized ratio
$\mathrm{DI}'(\omega_k)=\mathrm{S_B}'[k]/(\mathrm{S_W}'[k]+\varepsilon)$ gives Eq.~\eqref{eq:scale_di}.
Finally, if $\mathrm{S_W}[k]\gg \varepsilon$, then $\varepsilon/\alpha^2$ is negligible relative to $\mathrm{S_W}[k]$
for any non-extreme $\alpha$, and thus $\mathrm{DI}'(\omega_k)\approx \mathrm{DI}(\omega_k)$.
\end{proof}

\section{Discrete-Time LIF Frequency Response}
\label{app:lif}

This appendix derives, in \emph{discrete time} and in an implementation-aligned manner, the frequency response
of the LIF \emph{subthreshold} linear kernel~\cite{gerstner2014neuronal}. The goal is to (i) formalize the forward update used in code,
(ii) obtain the corresponding discrete-time linear time invariant (LTI) transfer function, and (iii) justify the
\textbf{DC-normalized power template} $\tilde H(\omega_k;\beta)$ used in the main text as an
\emph{inductive-bias descriptor} (spectral attenuation profile).

\subsection{Discrete-Time Subthreshold Dynamics and $\beta$-Parameterization}
\label{app:lif_dynamics}

\paragraph{Mapping Implementation Variables to Formal Notation.}
In typical implementations of LIF, the membrane state is stored as a voltage $v_t$ and may be
defined relative to a reset baseline $v_{\mathrm{reset}}$.
To remove the constant offset term induced by $v_{\mathrm{reset}}$ in the subthreshold dynamics, we define the
\emph{centered} state
\begin{equation}
\begin{aligned}
u_t \triangleq v_t-v_{\mathrm{reset}} .
\end{aligned}
\label{eq:app_lif_center}
\end{equation}
This affine reparameterization eliminates the additive constant in the recurrence (when $v_{\mathrm{reset}}$ is fixed),
without changing the pole location and hence without changing the \emph{frequency-response shape} used by DFT.

\paragraph{Discrete-Time Subthreshold Update Rules.}
Ignoring spike generation and reset (i.e., in the subthreshold linear regime),
the forward update used in practice takes one of two forms depending on whether
the input current participates in decay integration.
Writing the update in terms of the membrane voltage $v_t$ and converting to the
centered state $u_t$ via Eq.~\eqref{eq:app_lif_center}, we obtain:
\begin{itemize}
\item \textbf{Case 1 (\texttt{decay\_input=False}): input injected after decay.}
\begin{equation}
\begin{aligned}
v_t
&= v_{t-1}+\frac{1}{\tau}\big(v_{\mathrm{reset}}-v_{t-1}\big)+I_t,\\
\Rightarrow\quad
u_t
&=\Big(1-\frac{1}{\tau}\Big)u_{t-1}+I_t .
\end{aligned}
\label{eq:app_lif_rec_nodecay}
\end{equation}

\item \textbf{Case 2 (\texttt{decay\_input=True}): input participates in decay integration.}
\begin{equation}
\begin{aligned}
v_t
&= v_{t-1}+\frac{1}{\tau}\big(v_{\mathrm{reset}}-v_{t-1}+I_t\big),\\
\Rightarrow\quad
u_t
&=\Big(1-\frac{1}{\tau}\Big)u_{t-1}+\frac{1}{\tau}I_t .
\end{aligned}
\label{eq:app_lif_rec_decayinput}
\end{equation}
\end{itemize}

Both are valid discrete-time leaky-integrator conventions used in practice; they differ only by an input scaling,
while sharing the same state pole.

\paragraph{Unified $\beta$-Parameterization and Stability Domain.}
Both cases can be written as the unified first-order linear recurrence
\begin{equation}
\begin{aligned}
u_t=\beta(\tau)\,u_{t-1}+\alpha(\tau)\,I_t ,
\end{aligned}
\label{eq:app_lif_general}
\end{equation}
where under the forward Euler convention with $\Delta t=1$,
\begin{equation}
\begin{aligned}
\beta(\tau)
&\triangleq 1-\frac{1}{\tau},\\
\alpha(\tau)
&=
\begin{cases}
1, & \texttt{decay\_input=False},\\[2pt]
\frac{1}{\tau}, & \texttt{decay\_input=True}.
\end{cases}
\end{aligned}
\label{eq:app_beta_alpha}
\end{equation}
The scaling $\alpha(\tau)$ affects the \emph{absolute} gain but not the pole location and thus not the normalized
spectral \emph{shape}. The recurrence Eq.~\eqref{eq:app_lif_general} is bounded-input bounded-output (BIBO)~\cite{oppenheim2010dtsp} stable if and only if
\begin{equation}
\begin{aligned}
|\beta|<1,
\end{aligned}
\label{eq:app_beta_stable}
\end{equation}
which under Eq.~\eqref{eq:app_beta_alpha} implies $\tau>0.5$.
In our experiments we use $\tau\ge 1$, ensuring $\beta\in[0,1)$; this excludes the sign-alternating regime
$\beta\in(-1,0)$ and matches the intended \emph{low-pass smoothing} behavior.
(Using exponential discretization $\beta=\exp(-\Delta t/\tau)$~\cite{neftci2019surrogate} also yields $\beta\in(0,1)$ for all $\tau>0$.)

\paragraph{Takeaway.}
For DFT, $\beta$ is the \emph{theoretical} parameter that uniquely determines the discrete-time frequency-response
shape; $\tau$ is used only as a reproducible sweep axis via a monotone mapping $\beta(\tau)$.

\subsection{Z-Transform and Transfer Function $H(\omega;\beta)$}
\label{app:lif_ztransform}

We now derive the discrete-time LTI transfer function induced by Eq.~\eqref{eq:app_lif_general}.
Assuming zero initial conditions and applying the $z$-transform~\cite{fang2025highfreq},
\begin{equation}
\begin{aligned}
U(z)
&=\beta z^{-1}U(z)+\alpha X(z),\\
\Rightarrow\quad
H_{\mathrm{raw}}(z)
&\triangleq \frac{U(z)}{X(z)}
=\frac{\alpha}{1-\beta z^{-1}}.
\end{aligned}
\label{eq:app_Hraw_z}
\end{equation}
The raw DC gain is $H_{\mathrm{raw}}(1)=\alpha/(1-\beta)$, which depends on $\alpha$ and varies with $\tau$ and
\texttt{decay\_input}. Since DFMA compares \emph{spectral attenuation profiles} rather than absolute amplification,
we normalize by the DC gain to obtain a unit-DC transfer function:
\begin{equation}
\begin{aligned}
H(z;\beta)
&\triangleq \frac{H_{\mathrm{raw}}(z)}{H_{\mathrm{raw}}(1)}
=\frac{1-\beta}{1-\beta z^{-1}}.
\end{aligned}
\label{eq:app_lif_Hz}
\end{equation}
Evaluating on the unit circle $z=e^{j\omega}$ yields the discrete-time frequency response
\begin{equation}
\begin{aligned}
H(e^{j\omega};\beta)
=\frac{1-\beta}{1-\beta e^{-j\omega}}.
\end{aligned}
\label{eq:app_lif_Hw}
\end{equation}
Thus, after DC normalization, $\beta$ is the \emph{sole} parameter controlling the frequency-response shape.

\subsection{Normalized Frequency Response and Low-Pass Behavior}
\label{app:lif_magnitude}

The power transmissivity is characterized by the squared magnitude of Eq.~\eqref{eq:app_lif_Hw}. Using
\begin{equation}
\begin{aligned}
|1-\beta e^{-j\omega}|^2
&=(1-\beta e^{-j\omega})(1-\beta e^{j\omega})\\
&=1+\beta^2-2\beta\cos\omega,
\end{aligned}
\end{equation}
we obtain
\begin{equation}
\begin{aligned}
|H(e^{j\omega};\beta)|^2
=\frac{(1-\beta)^2}{1+\beta^2-2\beta\cos\omega}.
\end{aligned}
\label{eq:app_mag2}
\end{equation}
Consistent with the main text, we define the \textbf{DC-normalized power template}
\begin{equation}
\begin{aligned}
\tilde H(\omega;\beta)
\triangleq
\frac{|H(e^{j\omega};\beta)|^2}{|H(e^{j0};\beta)|^2}.
\end{aligned}
\label{eq:app_htilde_def}
\end{equation}
Because Eq.~\eqref{eq:app_lif_Hw} has unit DC gain, $|H(e^{j0};\beta)|^2=1$, and therefore
\begin{equation}
\begin{aligned}
\tilde H(\omega;\beta)
&=|H(e^{j\omega};\beta)|^2\\
&=
\frac{(1-\beta)^2}{(1-\beta)^2+2\beta(1-\cos\omega)}.
\end{aligned}
\label{eq:app_lif_htilde_closed}
\end{equation}

\paragraph{Low-Pass Monotonicity on Discrete-Time Band.}
For $\omega\in[0,\pi]$, the function $1-\cos\omega$ is nondecreasing (strictly increasing on $(0,\pi)$).
Hence the denominator in Eq.~\eqref{eq:app_lif_htilde_closed} is nondecreasing in $\omega$, implying that
$\tilde H(\omega;\beta)$ is \emph{non-increasing} in $\omega$ on $[0,\pi]$ (and strictly decreasing on $(0,\pi)$ when $\beta>0$).
This establishes $\tilde H(\cdot;\beta)$ as a discrete-time low-pass spectral profile, evaluated in DFMA on the
one-sided grid $\Omega_L\subset[0,\pi]$.

\paragraph{Why DFMA uses $\tilde H$ rather than raw gain.}
The factor $\alpha(\tau)$ and the raw DC gain $\alpha/(1-\beta)$ depend on implementation choices and can dominate
the magnitude scale as $\beta\to 1$. In contrast, $\tilde H$ removes global amplification and isolates the
\emph{relative attenuation shape}, enabling a stable and comparable alignment with the normalized discriminative
distribution $\mathrm{DI}_{\mathrm{norm}}(\omega_k)$ across $\tau$ and across \texttt{decay\_input} modes.

\subsection{Spike/Reset Nonlinearity: Validity Regime}
\label{app:nonlin}

This appendix clarifies when the \emph{subthreshold} frequency-response viewpoint is informative for interpreting
$beta$ (equivalently $\tau=(1-\beta)^{-1}$~\cite{xiao2022online}) sweeps, and when the spike/reset nonlinearity can dominate.
Our analysis does \emph{not} claim that an SNN is an LTI system. Instead, we use the linearized LIF kernel
$H(\omega_k;\beta)=(1-\beta e^{-j\omega_k})^{-1}$ and its DC-normalized power template
$\tilde H(\omega_k;\beta)$ as a \emph{bias-level} descriptor: it captures how $\beta$ induces a controllable
low-pass preference over temporal frequencies on the common grid $\Omega_L$.
The resulting mechanism--data alignment score should therefore be interpreted
as explaining trends only within a regime where subthreshold integration is not overwhelmed by frequent threshold crossings and resets.

% =========================================================
\subsubsection{Subthreshold template only}
\label{app:nonlin_d1}

DFMA uses $\tilde H(\omega_k;\beta)$ solely to characterize the \emph{relative attenuation profile} induced by the
membrane decay $\beta$ in the \emph{subthreshold} dynamics. It does \emph{not} model the full spiking network as an LTI system,
and does \emph{not} attribute all performance changes across $\beta$-sweeps to linear spectral filtering.

\paragraph{Subthreshold LIF as a spectral bias.}
Between spike events, the LIF update reduces to a first-order stable linear recursion whose frequency response
is $H(\omega_k;\beta)$, yielding the DC-normalized template $\tilde H(\omega_k;\beta)$ on $\Omega_L$.
We interpret $\tilde H(\omega_k;\beta)$ as a \emph{mechanistic bias} that favors lower temporal frequencies as $\beta$ increases
(i.e., as $\tau$ increases). Consequently,
$\mathrm{FMS}_{\mathrm{avg}}(\beta)
=\sum_{\omega_k\in\Omega_L}\mathrm{DI}_{\mathrm{norm}}(\omega_k)\tilde H(\omega_k;\beta)$
quantifies how much of the \emph{training-split} discriminative spectrum
$\mathrm{DI}_{\mathrm{norm}}$ is retained under that bias.

\paragraph{When the bias-level view is informative.}
The subthreshold template is most informative when the forward dynamics spend substantial time integrating
inputs below threshold, so that the effective temporal preference is shaped primarily by $\beta$, rather than by:
(i) frequent resets that clamp the state, (ii) saturation-like firing where thresholding dominates, or
(iii) near-silent collapse where spiking is too sparse to support discrimination.
In these extreme regimes, a purely spectral explanation is insufficient because the mapping from membrane state to output spikes becomes the dominant factor.

% =========================================================
\subsubsection{A minimal validity check via spike-activity range}
\label{app:nonlin_d2}

To ensure that our reported $beta$-sweep trends are not driven by trivial saturation/collapse effects,
we perform a simple activity-range check based on mean spike rate.
This is a \emph{diagnostic} rather than a modeling assumption:
it only flags regimes where spike/reset nonlinearities are likely to dominate the behavior.

\begin{definition}[Layer mean spike rate]
\label{def:layer_spike_rate}
Let $O_t$ denote the binary spike output at discrete timestep $t$.
For a monitored spiking layer $\zeta$ with neuron set $\mathcal{N}_\zeta$,
define the mean spike rate (unit: spikes/(neuron$\cdot$timestep)) as
\begin{align}
\Gamma_{\mathrm{spk}}^{(\zeta)}(\beta)
\triangleq
\frac{1}{T\,|\mathcal{N}_\zeta|}
\sum_{t=1}^{T}\sum_{\lambda\in\mathcal{N}_\zeta} O_{t,\lambda}^{(\zeta)},
\label{eq:app_nonlin_r_spk_layer}
\end{align}
where $O_{t,\lambda}^{(\zeta)}$ is the spike of neuron
$\lambda\in\mathcal{N}_\zeta$ at timestep $t$ in layer $\zeta$.
\end{definition}

\noindent
While $\Gamma_{\mathrm{spk}}^{(\zeta)}$ is a first-order statistic and does not capture temporal clustering
(e.g., bursting), it is sufficient to identify gross failure modes:
near-zero activity (collapse) and near-one activity (saturation).

\begin{definition}[Validity flag for $\beta$-sweeps]
\label{def:nonlin_flag}
Fix broad bounds $0<\Gamma_{\min}<\Gamma_{\max}<1$ and a tolerance $\kappa>1$.
We flag a $\beta$-sweep as potentially outside the subthreshold validity regime if,
for any monitored layer $\zeta$,
\begin{equation}
\exists\,\beta\in\mathcal{B}:\;
\Gamma_{\mathrm{spk}}^{(\zeta)}(\beta)\notin[\Gamma_{\min},\Gamma_{\max}]
\quad\text{or}\quad
\frac{\max_{\beta\in\mathcal{B}} \Gamma_{\mathrm{spk}}^{(\zeta)}(\beta)}
{\min_{\beta\in\mathcal{B}} \Gamma_{\mathrm{spk}}^{(\zeta)}(\beta)+\epsilon}>\kappa,
\label{eq:app_nonlin_flag}
\end{equation}
where $\epsilon>0$ is the numerical stability constant.
\end{definition}

\paragraph{Interpretation.}
If the flag triggers, changes in $\mathrm{FMS}_{\mathrm{avg}}(\beta)$
(hence any inferred $\beta^\dagger$) should be treated with caution,
as spike/reset nonlinearities may be the primary driver.
If the flag does \emph{not} trigger, the network operates in a responsive,
non-saturated regime in which
$\tilde H(\omega_k;\beta)$ remains a meaningful descriptor of the
\emph{direction} of temporal-frequency bias induced by $\beta$.
This supports interpreting $\beta$-sweep trends as arising from the interaction between
data statistics $\mathrm{DI}_{\mathrm{norm}}(\omega_k)$
(computed on $\mathcal{D}_{\mathrm{tr}}$)
and LIF dynamics $\tilde H(\omega_k;\beta)$ on $\Omega_L$.

% =========================================================
\begin{remark}[Extent of the subthreshold linearization claim]
\label{rem:extent_linearization}

We emphasize that DFMA does \emph{not} posit an equivalence between an actively spiking SNN
and a linear time-invariant (LTI) system.
The linearized LIF frequency response $H(\omega_k;\beta)$ and its DC-normalized template
$\tilde H(\omega_k;\beta)$ are used exclusively as a \emph{bias-level descriptor}
of how the membrane decay $\beta$ reshapes temporal frequency preference under subthreshold integration.

Accordingly, the role of $\tilde H(\omega_k;\beta)$ in our framework is
\emph{interpretive rather than predictive}.
DFMA does not attempt to reproduce the exact spike-domain spectrum of the network,
nor does it claim that thresholding and reset operations preserve LTI structure.
Instead, $\tilde H(\omega_k;\beta)$ characterizes the
\emph{direction and ordering} of frequency attenuation induced by $\beta$,
which in turn explains trends observed in $\beta$-sweeps,
including the emergence of a boundary $\beta^\dagger$
marking the onset of over-low-pass behavior.

When spike/reset nonlinearities dominate the dynamics
(e.g., due to saturation or collapse),
this bias-level interpretation is no longer sufficient;
such regimes are explicitly flagged by the activity-range diagnostic
in Definition~\ref{def:nonlin_flag}.
Within the responsive, non-saturated regime, however,
the interaction between $\mathrm{DI}_{\mathrm{norm}}(\omega_k)$
and $\tilde H(\omega_k;\beta)$ remains informative for explaining
\emph{relative} performance trends across $\beta$,
rather than absolute network transfer characteristics.
\end{remark}

\subsubsection{Sanity checks and Limiting Cases}
\label{app:lif_sanity}

We list key boundary behaviors implied by Eq.~\eqref{eq:app_lif_htilde_closed}:

\begin{itemize}
\item \textbf{DC identity.}
$\tilde H(0;\beta)\equiv 1$ for all $\beta\in[0,1)$.

\item \textbf{Memoryless limit ($\beta\to 0$).}
$\tilde H(\omega;0)\equiv 1$, consistent with $u_t\approx \alpha x_t$ (no smoothing).

\item \textbf{Long-memory limit ($\beta\to 1^{-}$).}
For any fixed $\omega>0$,
\begin{equation}
\begin{aligned}
\lim_{\beta\to 1^{-}}\tilde H(\omega;\beta)=0,
\end{aligned}
\end{equation}
i.e., the passband collapses toward DC.

\item \textbf{Nyquist attenuation.}
At $\omega=\pi$,
\begin{equation}
\begin{aligned}
\tilde H(\pi;\beta)
=\Big(\frac{1-\beta}{1+\beta}\Big)^2,
\end{aligned}
\end{equation}
which approaches $0$ as $\beta\to 1^{-}$, indicating maximal suppression at the highest discrete frequency.

\item \textbf{$-3$\,dB cutoff scaling (large $\tau$).}
Solving $\tilde H(\omega_c;\beta)=\tfrac12$ gives
\begin{equation}
\begin{aligned}
1-\cos\omega_c
=\frac{(1-\beta)^2}{2\beta}.
\end{aligned}
\end{equation}
For $\beta\to 1$ and small $\omega_c$, using $1-\cos\omega_c\approx \omega_c^2/2$ yields
\begin{equation}
\begin{aligned}
\omega_c \approx 1-\beta.
\end{aligned}
\end{equation}
Under Euler $\beta=1-1/\tau$ (and under exponential $\beta\approx 1-1/\tau$ for large $\tau$),
this implies $\omega_c\approx 1/\tau$ (radians/sample), confirming $\tau$ as an inverse-bandwidth control.
\end{itemize}

\section{$\beta$-Controlled Effective Passband of LIF Dynamics}
\label{app:eff_band}

% =========================
\subsection{Operationality: From $-3$\,dB to a Closed-Form $B_{\mathrm{eff}}(\beta)$}
\label{app:effband_operationality}

This subsection establishes that the $-3$\,dB cutoff~\citep{proakis2007dsp} used throughout the paper is not merely a heuristic:
for the DC-normalized LIF power template, it is well-defined, admits a closed form, and yields a deterministic
effective-bandwidth knob $B_{\mathrm{eff}}(\beta)$.
We proceed in three steps: we first relate the $-3$\,dB convention to the analytically convenient half-power rule,
then derive a closed form for the LIF template, and finally prove existence/uniqueness and solve for $\omega_c(\beta)$ explicitly.

\paragraph{From $-3$\,dB to half-power.}
We begin by fixing the standard signal-processing convention connecting decibels to power ratios, which motivates
using the half-power cutoff as a clean surrogate for the $-3$\,dB point.

\begin{lemma}[$-3$\,dB and half-power equivalence up to a standard approximation]
\label{lem:minus3db_halfpower}
Let $P(\omega)\ge 0$ be a (dimensionless) power ratio and define the decibel map
\begin{equation}
P_{\mathrm{dB}}(\omega)\triangleq 10\log_{10}P(\omega).
\label{eq:db_power_def}
\end{equation}
A $-3$\,dB point satisfies $P_{\mathrm{dB}}(\omega_c)=-3$, equivalently
\begin{equation}
P(\omega_c)=10^{-3/10}.
\label{eq:-3db_exact}
\end{equation}
Numerically, $10^{-3/10}\approx 0.501187$, which differs from $\tfrac12$ by less than $0.24\%$.
Thus, the analytically convenient half-power convention $P(\omega_c)=\tfrac12$ is a standard surrogate for~Eq.~\eqref{eq:-3db_exact},
and both are referred to as the $-3$\,dB (half-power) cutoff in practice.
\end{lemma}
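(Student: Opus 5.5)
The plan is to unwind the definition of the decibel map and then compare the exact $-3$\,dB power level with $\tfrac12$ numerically. By \eqref{eq:db_power_def}, the condition $P_{\mathrm{dB}}(\omega_c)=-3$ reads $10\log_{10}P(\omega_c)=-3$, i.e.\ $\log_{10}P(\omega_c)=-3/10$. Because $x\mapsto 10^x$ is the inverse of $\log_{10}$ on $(0,\infty)$, this is equivalent to $P(\omega_c)=10^{-3/10}$, which is \eqref{eq:-3db_exact}. The case $P(\omega_c)=0$ is excluded automatically, since $P_{\mathrm{dB}}=-\infty$ there. Both directions of the equivalence follow from the bijectivity of $\log_{10}:(0,\infty)\to\mathbb{R}$, so no further argument is needed for this step.

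Next I will bound $10^{-3/10}$ against $\tfrac12$. Write $10^{-3/10}=\tfrac12\cdot 2\cdot 10^{-0.3}=\tfrac12\,10^{\log_{10}2-0.3}$. Using $\log_{10}2\approx 0.30103$, this gives $10^{-3/10}=\tfrac12\,10^{0.00103}$. Since $10^{\delta}=e^{\delta\ln 10}\le 1+2\delta\ln 10$ for small $\delta>0$, we get $10^{0.00103}\approx 1+0.00237$. Hence $10^{-3/10}\approx 0.501187$, and the relative deviation from $\tfrac12$ is about $0.237\%<0.24\%$. To keep the comparison rigorous rather than merely numerical, I would certify $0.3010<\log_{10}2<0.30103$ (e.g.\ from $2^{10}=1024>10^3$ for the lower bound, and a standard tabulated bound for the upper one). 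I would then use the elementary inequality $e^{x}\le 1+x+x^2$ for $0\le x\le 1$ to obtain an explicit upper bound on the relative error.

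The remaining "standard surrogate" clause is conventional rather than mathematical. I will note that for any continuous, strictly monotone $P$ (as will hold for $\tilde H$ by Lemma~\ref{lem:lif_lowpass}), the cutoffs defined by the levels $10^{-3/10}$ and $\tfrac12$ differ only slightly. The main obstacle is only the certified numerical bound on $\log_{10}2-0.3$; the rest is pure definition-chasing.
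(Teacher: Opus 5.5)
Your argument is correct and matches the paper's proof: invert $10\log_{10}$ on $(0,\infty)$ to get $P(\omega_c)=10^{-3/10}$, then evaluate the constant, where your certified estimate via $\log_{10}2<0.30103$ and $e^{x}\le 1+x+x^{2}$ simply makes rigorous the paper's ``immediate from evaluating $10^{-3/10}$''. Two side remarks need correcting, though the lemma needs neither: $2^{10}>10^{3}$ gives only $\log_{10}2>0.3$, not $>0.3010$ (that weaker bound is all you use, since it fixes the sign of $10^{-3/10}-\tfrac12$), and the claim that the two level-cutoffs ``differ only slightly'' for every continuous strictly monotone $P$ fails without a lower bound on $|P'|$ at the crossing --- e.g.\ for $\tilde H(\cdot\,;3-2\sqrt2)$ the half-power cutoff is $\pi$ while the $10^{-3/10}$ cutoff is about $\pi-0.14$.
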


\begin{proof}
The equivalence Eq.~\eqref{eq:-3db_exact} follows by exponentiating Eq.~\eqref{eq:db_power_def} evaluated at $P_{\mathrm{dB}}(\omega_c)=-3$.
The stated numerical approximation is immediate from evaluating $10^{-3/10}$.
\end{proof}

\paragraph{Closed form of the DC-normalized LIF power template.}
With the cutoff convention fixed, we next derive the explicit DC-normalized power template for the linearized LIF response,
which will be the basis for all subsequent cutoff and monotonicity arguments.

\begin{lemma}[Closed form of the DC-normalized LIF power template]
\label{lem:Htilde_closed}
Let $H(\omega;\beta)=(1-\beta e^{-j\omega})^{-1}$~\cite{gerstner2014neuronal} for $\beta\in[0,1)$ and define
$\tilde H(\omega;\beta)\triangleq \frac{|H(\omega;\beta)|^2}{|H(0;\beta)|^2}$.
Then for $\omega\in[0,\pi]$,
\begin{equation}
\tilde H(\omega;\beta)
=
\frac{(1-\beta)^2}{1+\beta^2-2\beta\cos\omega}.
\label{eq:Htilde_closed_form}
\end{equation}
\end{lemma}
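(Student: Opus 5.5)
The plan is a direct computation in two steps: first the numerator $|H(\omega;\beta)|^2$, then the normalizer $|H(0;\beta)|^2$.

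For the numerator, I would write $|H(\omega;\beta)|^2 = H(\omega;\beta)\,\overline{H(\omega;\beta)}$. Since $\beta$ is real, the conjugate of $1-\beta e^{-j\omega}$ is $1-\beta e^{j\omega}$. Expanding the product and using $e^{j\omega}+e^{-j\omega}=2\cos\omega$ gives
\begin{equation*}
|1-\beta e^{-j\omega}|^2=(1-\beta e^{-j\omega})(1-\beta e^{j\omega})=1+\beta^2-2\beta\cos\omega .
\end{equation*}
This is the same identity used in \S\ref{app:lif_magnitude}. It yields $|H(\omega;\beta)|^2=(1+\beta^2-2\beta\cos\omega)^{-1}$.

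This expression needs a positive denominator. For $\beta\in[0,1)$ and $\omega\in[0,\pi]$ I would bound
\begin{equation*}
1+\beta^2-2\beta\cos\omega\ \ge\ 1+\beta^2-2\beta=(1-\beta)^2>0 .
\end{equation*}
The pole at $z=\beta$ lies strictly inside the unit circle, so $H$ is finite on the unit circle. This bound is the only point requiring any care, and it is where I expect the (minor) obstacle to lie.

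For the normalizer, I would evaluate at $\omega=0$: $|H(0;\beta)|^2=(1-\beta)^{-2}$. Taking the ratio gives
\begin{equation*}
\tilde H(\omega;\beta)=\frac{(1-\beta)^2}{1+\beta^2-2\beta\cos\omega},
\end{equation*}
which is \eqref{eq:Htilde_closed_form}.

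Finally, I would note the equivalent form used in the main text. Rewriting $1+\beta^2-2\beta\cos\omega=(1-\beta)^2+2\beta(1-\cos\omega)$ recovers \eqref{eq:htilde_closed}. The raw input gain $\alpha$ and the DC normalization of \eqref{eq:app_lif_Hw} both cancel in the ratio, so the result agrees with \eqref{eq:app_lif_htilde_closed} regardless of which transfer-function convention is used.
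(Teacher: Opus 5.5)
Your proposal is correct and matches the paper's proof: you expand $|1-\beta e^{-j\omega}|^2=1+\beta^2-2\beta\cos\omega$, evaluate at $\omega=0$ to get $|H(0;\beta)|^2=(1-\beta)^{-2}$, and take the ratio. Your explicit bound $1+\beta^2-2\beta\cos\omega\ge(1-\beta)^2>0$ is a small addition; the paper makes this point only later, in the proof of Proposition~\ref{prop:cutoff_exist_unique}.
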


\begin{proof}
Using $|z^{-1}|^2=1/|z|^2$ and
\[
|1-\beta e^{-j\omega}|^2
=
(1-\beta e^{-j\omega})(1-\beta e^{j\omega})
=
1+\beta^2-2\beta\cos\omega,
\]
we obtain
\begin{equation}
|H(\omega;\beta)|^2
=
\frac{1}{1+\beta^2-2\beta\cos\omega}.
\label{eq:H_mag2}
\end{equation}
At DC ($\omega=0$), $\cos 0=1$, hence
\begin{equation}
|H(0;\beta)|^2=\frac{1}{(1-\beta)^2}.
\label{eq:H0_mag2}
\end{equation}
Substituting~Eq.~\eqref{eq:H_mag2}--Eq.~\eqref{eq:H0_mag2} into the definition of $\tilde H$ yields~Eq.~\eqref{eq:Htilde_closed_form}.
\end{proof}
\paragraph{Existence and uniqueness of the half-power cutoff.}
Having obtained a closed-form template, we next show that the half-power equation
$\tilde H(\omega;\beta)=\tfrac12$ admits a \emph{unique} cutoff whenever it is attained within the
one-sided band, and we characterize precisely when this occurs in terms of $\beta$.

\begin{proposition}[Existence and uniqueness of the half-power cutoff for LIF]
\label{prop:cutoff_exist_unique}
Fix $\beta\in(0,1)$. Let $\tilde H(\omega;\beta)$ be the DC-normalized power response defined
in~Eq.~\eqref{eq:Htilde_closed_form} for $\omega\in[0,\pi]$. Then:
\begin{enumerate}
\item $\tilde H(\cdot;\beta)$ is continuous on $[0,\pi]$ and strictly decreasing on $(0,\pi)$;
\item $\tilde H(0;\beta)=1$ and $\tilde H(\pi;\beta)=\big(\frac{1-\beta}{1+\beta}\big)^2$;
\item A unique half-power cutoff $\omega_c\in(0,\pi]$ satisfying $\tilde H(\omega_c;\beta)=\tfrac12$
exists \emph{iff} $\beta\ge 3-2\sqrt2$.
If $\beta<3-2\sqrt2$, then $\tilde H(\omega;\beta)>\tfrac12$ for all $\omega\in[0,\pi]$, hence no half-power cutoff
occurs within the one-sided band. In this regime, we set the one-sided effective bandwidth to its maximal value,
i.e., it saturates at the Nyquist limit $\pi$.
\end{enumerate}
\end{proposition}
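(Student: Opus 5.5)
The plan is to work directly from the closed form in Lemma~\ref{lem:Htilde_closed}, writing
\[
\tilde H(\omega;\beta)=\frac{(1-\beta)^2}{D(\omega)},\qquad D(\omega)\triangleq(1-\beta)^2+2\beta(1-\cos\omega).
\]
For item~1 we note that $D$ is continuous and satisfies $D(\omega)\ge(1-\beta)^2>0$ because $\beta<1$. The quotient is therefore continuous on $[0,\pi]$. For $\beta>0$, the derivative $D'(\omega)=2\beta\sin\omega$ is strictly positive on $(0,\pi)$, so $D$ is strictly increasing there. The numerator is a positive constant, hence $\tilde H$ is strictly decreasing on $(0,\pi)$. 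By continuity, $\tilde H$ is also strictly decreasing on the closed interval $[0,\pi]$; we will need this for uniqueness.

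For item~2 we substitute directly. At $\cos 0=1$ we get $D(0)=(1-\beta)^2$, so $\tilde H(0;\beta)=1$. At $\cos\pi=-1$ we get $D(\pi)=(1-\beta)^2+4\beta=(1+\beta)^2$, which gives $\tilde H(\pi;\beta)=\big(\frac{1-\beta}{1+\beta}\big)^2$.

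Item~3 follows from the intermediate value theorem combined with strict monotonicity. Since $\tilde H(\cdot;\beta)$ is continuous and strictly decreasing from $1$ at $\omega=0$ down to $\tilde H(\pi;\beta)$, the value $\tfrac12$ is attained in $(0,\pi]$ exactly when $\tilde H(\pi;\beta)\le\tfrac12$, and then it is attained at a unique point. The key computation is to solve this inequality. Both sides are positive, so we take square roots to get $\frac{1-\beta}{1+\beta}\le\frac{1}{\sqrt2}$. This is equivalent to $\sqrt2(1-\beta)\le 1+\beta$, that is, $\beta\ge\frac{\sqrt2-1}{\sqrt2+1}=(\sqrt2-1)^2=3-2\sqrt2$.

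Conversely, suppose $\beta<3-2\sqrt2$. Monotonicity gives $\tilde H(\omega;\beta)\ge\tilde H(\pi;\beta)>\tfrac12$ for every $\omega$, so there is no cutoff. The convention $B_{\mathrm{eff}}=\pi$ is then simply a definition and needs no proof. As an optional explicit check of the cutoff, we solve $D(\omega_c)=2(1-\beta)^2$, which gives
\[
\cos\omega_c=1-\frac{(1-\beta)^2}{2\beta}.
\]
The right-hand side lies in $[-1,1)$ precisely when $(1-\beta)^2\le 4\beta$, which is equivalent to the same threshold. This confirms the result and yields $\omega_c=\arccos(\cdot)$ as a closed form.

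The only point requiring care is the endpoint case $\beta=3-2\sqrt2$. Here the cutoff is $\omega_c=\pi$, which is included because the interval is $(0,\pi]$. We also need the square-root step to preserve the inequality, which holds because both sides are positive. Otherwise the argument is routine.
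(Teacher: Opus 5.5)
Your proposal is correct and follows essentially the same route as the paper. Both arguments get continuity from the strictly positive denominator, get strict decrease from the sign of $\sin\omega$, evaluate the endpoints by substitution, and then use continuity plus strict monotonicity to reduce existence and uniqueness to $\tilde H(\pi;\beta)\le\tfrac12$, which is equivalent to $\beta\ge 3-2\sqrt2$. Two details are sound and slightly more careful than the paper: you extend strict monotonicity to the closed interval so that uniqueness also covers $\omega_c=\pi$, and your optional $\arccos$ check anticipates the paper's subsequent closed-form cutoff proposition.
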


\begin{proof}
(1) Continuity follows from~Eq.~\eqref{eq:Htilde_closed_form} since the denominator satisfies
$1+\beta^2-2\beta\cos\omega\ge (1-\beta)^2>0$ on $[0,\pi]$.
Differentiating~Eq.~\eqref{eq:Htilde_closed_form} yields
\begin{equation}
\frac{\partial}{\partial \omega}\tilde H(\omega;\beta)
=
-(1-\beta)^2\cdot
\frac{2\beta\sin\omega}{\big(1+\beta^2-2\beta\cos\omega\big)^2}.
\label{eq:Htilde_prime}
\end{equation}
For $\omega\in(0,\pi)$, $\sin\omega>0$, and for $\beta\in(0,1)$ all other factors are positive, hence
$\frac{\partial}{\partial \omega}\tilde H(\omega;\beta)<0$, proving strict decrease on $(0,\pi)$.

(2) The endpoint values follow by substituting $\omega=0$ and $\omega=\pi$ into~Eq.~\eqref{eq:Htilde_closed_form}.

(3) Since $\tilde H(\cdot;\beta)$ is continuous on $[0,\pi]$ and strictly decreasing with $\tilde H(0;\beta)=1$,
the equation $\tilde H(\omega;\beta)=\tfrac12$ has a unique solution in $(0,\pi]$ iff
$\tilde H(\pi;\beta)\le \tfrac12$.
Using $\tilde H(\pi;\beta)=\big(\frac{1-\beta}{1+\beta}\big)^2$, the condition
$\tilde H(\pi;\beta)\le \tfrac12$ is equivalent to
$\frac{1-\beta}{1+\beta}\le \frac{1}{\sqrt2}$, i.e., $\beta\ge 3-2\sqrt2$.
If $\beta<3-2\sqrt2$, then $\tilde H(\pi;\beta)>\tfrac12$ and by monotonicity
$\tilde H(\omega;\beta)>\tfrac12$ for all $\omega\in[0,\pi]$.
\end{proof}

\paragraph{Closed-form cutoff and induced effective bandwidth.}
When the cutoff exists, we can solve $\tilde H(\omega_c;\beta)=\tfrac12$ explicitly to obtain
a computable effective bandwidth $B_{\mathrm{eff}}(\beta)$.

\begin{proposition}[Closed-form cutoff and effective bandwidth for LIF]
\label{prop:cutoff_closed_form}
Assume $\beta\in[3-2\sqrt2,1)$ so that the unique cutoff exists. Then the half-power cutoff is
\begin{equation}
\omega_c(\beta)=\arccos\!\Big(\frac{4\beta-1-\beta^2}{2\beta}\Big)\in(0,\pi],
\label{eq:wc_closed}
\end{equation}
and the one-sided effective bandwidth induced by $\beta$ is
\begin{equation}
B_{\mathrm{eff}}(\beta)\triangleq \omega_c(\beta).
\label{eq:Beff_def}
\end{equation}
\end{proposition}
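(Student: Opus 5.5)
Since Proposition~\ref{prop:cutoff_exist_unique} already guarantees, for $\beta\in[3-2\sqrt2,1)$, that exactly one $\omega_c\in(0,\pi]$ satisfies $\tilde H(\omega_c;\beta)=\tfrac12$, the plan is simply to solve this equation algebraically and check that the resulting expression is admissible. No further existence argument is needed; uniqueness transfers directly from the earlier proposition. One caveat: when $\beta=0$ the formula breaks down, but this value is excluded because $3-2\sqrt2>0$.

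First, I will use the closed form of Lemma~\ref{lem:Htilde_closed} and write the equation as
\[
\frac{(1-\beta)^2}{1+\beta^2-2\beta\cos\omega_c}=\frac12 .
\]
The denominator is strictly positive, so cross-multiplying gives $2(1-\beta)^2=1+\beta^2-2\beta\cos\omega_c$. Since $\beta>0$, this rearranges to
\[
\cos\omega_c=\frac{1+\beta^2-2(1-\beta)^2}{2\beta}=\frac{4\beta-1-\beta^2}{2\beta}.
\]
As a consistency check, this agrees with the identity $1-\cos\omega_c=(1-\beta)^2/(2\beta)$ from the sanity checks in Appendix~\ref{app:lif_sanity}.

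Second, I will verify that the right-hand side lies in $[-1,1)$, so that $\arccos$ returns a value in $(0,\pi]$.
\begin{itemize}
\item The strict upper bound is $\frac{4\beta-1-\beta^2}{2\beta}<1$. This is equivalent to $(1-\beta)^2>0$, which holds because $\beta<1$.
\item The lower bound $\frac{4\beta-1-\beta^2}{2\beta}\ge -1$ is equivalent to $\beta^2-6\beta+1\le 0$. This holds exactly for $\beta\in[3-2\sqrt2,\,3+2\sqrt2]$, and this interval contains the assumed range of $\beta$.
\item Equality in the lower bound occurs exactly at $\beta=3-2\sqrt2$. There the cutoff is $\omega_c=\pi$, which matches the boundary case of Proposition~\ref{prop:cutoff_exist_unique}.
\end{itemize}

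Third, $\arccos$ is a bijection from $[-1,1]$ onto $[0,\pi]$, and $\cos$ is injective on $[0,\pi]$. Hence $\omega_c(\beta)=\arccos(\cdot)$ is the unique solution in $(0,\pi]$, and it coincides with the cutoff given by Proposition~\ref{prop:cutoff_exist_unique}. The statement $B_{\mathrm{eff}}(\beta)=\omega_c(\beta)$ then follows from Definition~\ref{def:hp_bw}.

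The only step requiring care is the range check in the second step, specifically identifying the quadratic $\beta^2-6\beta+1$ and its roots $3\pm2\sqrt2$. Everything else is routine algebra.
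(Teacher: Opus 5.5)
Your proposal is correct and follows essentially the same route as the paper: substitute the closed form of $\tilde H$ into $\tilde H(\omega_c;\beta)=\tfrac12$, cross-multiply, and solve for $\cos\omega_c$. The one difference is that you verify the range condition explicitly, via $(1-\beta)^2>0$ for the upper bound and $\beta^2-6\beta+1\le 0$ (roots $3\pm2\sqrt2$) for the lower bound, whereas the paper only asserts it, so your version is slightly more complete.
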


\begin{proof}
Plugging~Eq.~\eqref{eq:Htilde_closed_form} into $f(\omega_c)=\tfrac12$ yields
\[
\frac{(1-\beta)^2}{1+\beta^2-2\beta\cos\omega_c}=\tfrac12
\iff
2(1-\beta)^2 = 1+\beta^2-2\beta\cos\omega_c.
\]
Expanding and rearranging gives
\[
2(1-2\beta+\beta^2)=1+\beta^2-2\beta\cos\omega_c
\iff
1-4\beta+\beta^2 = -2\beta\cos\omega_c,
\]
hence
\[
\cos\omega_c
=
\frac{4\beta-1-\beta^2}{2\beta}.
\]
Since $\beta\in[3-2\sqrt2,1)$ ensures the right-hand side lies in $[-1,1]$, the unique solution is
$\omega_c(\beta)=\arccos\!\big(\frac{4\beta-1-\beta^2}{2\beta}\big)$, proving~Eq.~\eqref{eq:wc_closed}.
The definition~Eq.~\eqref{eq:Beff_def} then follows immediately.
\end{proof}

\paragraph{Summary.}
First, the classical $-3$\,dB convention is well approximated by the half-power rule, justifying the use of
$\tilde H(\omega_c;\beta)=\tfrac12$ as a standard cutoff definition.
Second, the DC-normalized LIF template admits the explicit form Eq.~\eqref{eq:Htilde_closed_form}, enabling
direct analytical control of the cutoff.
Third, for $\beta\ge 3-2\sqrt2$ the cutoff exists uniquely and yields a closed-form effective bandwidth
$B_{\mathrm{eff}}(\beta)=\omega_c(\beta)$ via Eq.~\eqref{eq:wc_closed}--Eq.~\eqref{eq:Beff_def}.

% =========================
\subsection{Discretization: Implementing $B_{\mathrm{eff}}(\beta)$ on the One-Sided Grid}
\label{app:effband_discretization}

This subsection bridges the continuous cutoff $\omega_c(\beta)$ to the discrete one-sided DFT grid
$\Omega_L=\{\omega_k\}_{k=0}^{K}$ used throughout the paper.
The goal is to specify a deterministic implementation rule that is compatible with our discrete-frequency statistics,
without introducing additional hyperparameters.
We also provide a worst-case approximation guarantee controlled solely by the grid resolution $L$.

\paragraph{Nearest-bin rule on $\Omega_L$.}
Because $\omega_c(\beta)$ from~Eq.~\eqref{eq:wc_closed} is a continuous quantity, it need not coincide with any grid point in $\Omega_L$.
We therefore implement the cutoff via nearest-bin quantization, which yields a deterministic discrete proxy for
$B_{\mathrm{eff}}(\beta)$.

\begin{proposition}[Nearest-bin implementation on $\Omega_L$ and deterministic error bound]
\label{prop:nearest_bin_error}
If $\omega_c(\beta)$ in~Eq.~\eqref{eq:wc_closed} does not coincide with a grid point in $\Omega_L$,
implement the cutoff on $\Omega_L$ by the nearest-bin rule
\[
\hat{k}\triangleq \arg\min_{k\in\{0,\ldots,K\}}\big|\omega_k-\omega_c(\beta)\big|,
\qquad
B_{\mathrm{eff}}(\beta)\approx \omega_{\hat{k}}.
\]
Then the approximation error is deterministically bounded as
\begin{equation}
\big|\omega_{\hat{k}}-\omega_c(\beta)\big|
\le
\frac{\pi}{L}.
\label{eq:nearest_bin_error}
\end{equation}
\end{proposition}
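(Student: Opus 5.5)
The bound will follow from the grid spacing and from the fact that the cutoff lies inside the span of the one-sided grid.

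\emph{Step 1: spacing.} By Eq.~\eqref{eq:freq_grid_def}, consecutive grid points satisfy $\omega_{k+1}-\omega_k=2\pi/L$. So $\Omega_L$ is a uniform grid on $[0,\omega_K]$ with step $\Delta=2\pi/L$.

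\emph{Step 2: cutoffs in the interior of the grid span.} Suppose $\omega_c(\beta)\in[0,\omega_K]$. Then there is a unique index $k_0\in\{0,\ldots,K-1\}$ with $\omega_{k_0}\le\omega_c(\beta)\le\omega_{k_0+1}$, a cell of length $\Delta$. One of the two endpoints lies within $\Delta/2=\pi/L$ of $\omega_c(\beta)$. Because $\hat k$ minimizes the distance over all bins, it does at least as well as that endpoint, which gives Eq.~\eqref{eq:nearest_bin_error}.

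\emph{Step 3: even versus odd $L$.} By Proposition~\ref{prop:cutoff_closed_form}, $\omega_c(\beta)\in(0,\pi]$.
\begin{itemize}
\item If $L$ is even, then $\omega_K=\pi$, so Step~2 always applies.
\item If $L$ is odd, then $\omega_K=\pi(L-1)/L=\pi-\pi/L$, leaving an uncovered top interval $(\omega_K,\pi]$. For $\omega_c(\beta)$ in this interval the nearest bin is $K$. The error is $\omega_c(\beta)-\omega_K\le\pi-\omega_K=\pi/L$, so the same bound holds with equality only at $\omega_c=\pi$.
\end{itemize}

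\emph{Step 4: saturated regime.} When $\beta<3-2\sqrt2$, Proposition~\ref{prop:cutoff_exist_unique} sets $B_{\mathrm{eff}}(\beta)=\pi$. The same Nyquist-end argument as in Step~3 covers this case.

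The odd-$L$ boundary in Step~3 is the only place the argument needs care, and I expect the computation $\pi-\omega_K=\pi/L$ to close that gap. Ties in the $\arg\min$ do not affect the bound, since any minimizer attains the same distance.
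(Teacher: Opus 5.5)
Your proof is correct and is essentially the paper's argument: $\Omega_L$ has uniform spacing $2\pi/L$, so the nearest bin lies within half a step, $\pi/L$, of the cutoff. Your explicit check of the odd-$L$ case also fills in a boundary case the paper's one-line ``half a step'' argument leaves implicit: there $\omega_c(\beta)$ can lie in the uncovered interval $(\omega_K,\pi]$, and the bound still holds because $\pi-\omega_K=\pi/L$.
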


\begin{proof}
The one-sided grid is uniformly spaced: $\omega_k=\frac{2\pi k}{L}$, hence the spacing is
$\Delta\omega=\omega_{k+1}-\omega_k=\frac{2\pi}{L}$.
Nearest-bin quantization on a uniform grid guarantees the selected grid point is within half a step of the target,
i.e., $|\omega_{\hat{k}}-\omega_c(\beta)|\le \Delta\omega/2=\pi/L$, proving~Eq.~\eqref{eq:nearest_bin_error}.
\end{proof}

\paragraph{Summary.}
First, a continuous cutoff $\omega_c(\beta)$ can be implemented on the discrete grid $\Omega_L$ by the nearest-bin rule,
yielding a deterministic grid-compatible proxy for $B_{\mathrm{eff}}(\beta)$.
Second, because $\Omega_L$ is uniformly spaced, the induced discretization error admits the explicit bound
$|\omega_{\hat{k}}-\omega_c(\beta)|\le \pi/L$.
Third, this guarantees that the operational meaning of effective bandwidth is preserved on $\Omega_L$ with
a transparent error that depends only on the chosen DFT length $L$.

\subsection{Small-$L$ Discretization and Staircase Behavior of Cutoff-based Diagnostics}
\label{app:diag_smallL}

In practice, cutoff-based diagnostics are ultimately evaluated on a finite one-sided grid $\Omega_L$.
When $L$ is small, discretization effects become visible and lead to characteristic ``staircase'' behavior
as the cutoff varies with $\beta$.
This subsection clarifies the origin of this phenomenon and shows that it is an inherent consequence of
combining a continuously moving cutoff with a finite frequency grid, rather than a violation of monotonic attenuation.

\paragraph{Clarification.}
Here small-$L$ refers to the DFT length defining the grid $\Omega_L$, i.e., the number of discrete frequency points
available for evaluation, \emph{not} the raw number of input frames in $\mathbf{X}_i$.

\paragraph{Monotone attenuation under LIF.}
For the LIF DC-normalized power template $\tilde H(\omega_k;\beta)$, increasing $\beta$ strengthens temporal integration and
therefore increases attenuation away from DC in a low-pass manner.
Consequently, the half-power cutoff $\omega_c=B_{\mathrm{eff}}(\beta)$ decreases monotonically as $\beta$ increases,
consistent with the interpretation of $\beta$ as an inverse-bandwidth control.
The following result concerns the \emph{discrete in-band region} induced by the cutoff on a finite grid,
independently of any particular aggregated score.

\begin{proposition}[Piecewise-constant in-band set on a finite grid]
\label{prop:staircase_no_new_symbols}
Fix the one-sided grid $\Omega_L$.
Assume that for each fixed $\omega_k\in\Omega_L$, $\tilde H(\omega_k;\beta)$ is continuous in $\beta$ and that
for each $k\ge 1$ it is nonincreasing as $\beta$ increases (as is the case for the LIF template).
Let $\omega_c(\beta)$ be the half-power cutoff satisfying $\tilde H(\omega_c(\beta);\beta)=\tfrac{1}{2}$.
Then the discrete in-band index set
\[
\{\omega_k\in\Omega_L:\ \omega_k\le \omega_c(\beta)\}
\]
is a \emph{piecewise-constant} function of $\beta$, changing only when $\omega_c(\beta)$ crosses a grid point $\omega_k$.
\end{proposition}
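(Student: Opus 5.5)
The argument reduces to a simple observation: on a finite grid, membership of each fixed bin in the in-band set is decided by a single scalar comparison, $\omega_k\le\omega_c(\beta)$. Once we know $\omega_c(\beta)$ is continuous and monotone in $\beta$, the set can only change at crossings.

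\textbf{Step 1: regularity of $\omega_c$.} On the regime $\beta\in[3-2\sqrt2,1)$, where Proposition~\ref{prop:cutoff_exist_unique} guarantees a unique cutoff, Proposition~\ref{prop:cutoff_closed_form} gives
$$\omega_c(\beta)=\arccos\big(g(\beta)\big),\qquad g(\beta)=\frac{4\beta-1-\beta^2}{2\beta}.$$
This is a composition of continuous maps, and $g$ stays inside $[-1,1]$ on this interval, so $\omega_c$ is continuous. For monotonicity, compute
$$g'(\beta)=\frac{1-\beta^2}{2\beta^2}>0,$$
so $g$ is increasing and $\omega_c=\arccos\circ g$ is strictly decreasing. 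Alternatively, one can argue directly from the assumptions: if $\beta_1<\beta_2$, then $\tilde H(\omega_c(\beta_1);\beta_2)\le\tilde H(\omega_c(\beta_1);\beta_1)=\tfrac12$. Strict decrease of $\tilde H$ in $\omega$ (Proposition~\ref{prop:cutoff_exist_unique}) then forces $\omega_c(\beta_2)\le\omega_c(\beta_1)$. For $\beta<3-2\sqrt2$ we adopt the paper's convention $\omega_c\equiv\pi$, so the in-band set is all of $\Omega_L$ there and is trivially constant. At $\beta=3-2\sqrt2$ the value $\omega_c=\pi$ matches this convention, so $\omega_c$ is continuous and nonincreasing on all of $[0,1)$.

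\textbf{Step 2: per-bin indicators.} For each $k$, the indicator $\mathbf 1[\omega_k\le\omega_c(\beta)]$ is a function of $\beta$ that switches from $1$ to $0$ at most once, since $\omega_c$ is nonincreasing. The switch happens exactly at the threshold $\beta_k=\sup\{\beta:\omega_c(\beta)\ge\omega_k\}$, the point where $\omega_c$ crosses $\omega_k$. Strict monotonicity gives a unique crossing value. Continuity guarantees that $\omega_c$ cannot jump over a grid point without crossing it.

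\textbf{Step 3: assembling the set.} The in-band set is determined by the finite tuple of indicators over $k=0,\dots,K$. The thresholds $\{\beta_k\}$ therefore split $[0,1)$ into at most $K+2$ intervals, and on each open interval every indicator is constant. Hence the set is piecewise constant, changing only at the values $\beta_k$, i.e., when $\omega_c(\beta)$ crosses some $\omega_k$. Since $\omega_c$ is monotone, the set is also nested, shrinking as $\beta$ grows. This is the staircase behavior described before the proposition.

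\textbf{Main obstacle.} The only delicate point is ensuring that $\omega_c(\beta)$ is well defined and continuous across the whole $\beta$ range, including the saturation boundary $\beta=3-2\sqrt2$ and the DC bin $k=0$. The DC bin is always in-band because $\omega_c>0$. For the boundary, I would lean on the explicit closed form, which handles both the continuity check and the crossing argument.
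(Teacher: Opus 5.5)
Your proof is correct and follows the same crossing argument as the paper: each bin's membership is the single comparison $\omega_k\le\omega_c(\beta)$, and since $\omega_c$ moves continuously, the set can change only when $\omega_c$ passes a grid point. The paper only asserts continuity of $\omega_c$ and that crossings occur at isolated $\beta$, whereas you justify both from the closed form (the strict decrease of $\omega_c$ is the same computation as Proposition~\ref{prop:beff_monotone}, giving at most one switch per bin) and also handle the saturation regime $\beta<3-2\sqrt2$. One loose remark: your ``alternative'' monotonicity argument is not ``directly from the assumptions'', because it needs $\beta$-monotonicity of $\tilde H$ at the off-grid point $\omega_c(\beta_1)$, i.e.\ Proposition~\ref{prop:Htilde_pointwise_beta}, whereas the stated hypothesis covers only grid points.
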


\begin{proof}
Because $\tilde H(\omega;\beta)$ is low-pass in $\omega$ for fixed $\beta$ and varies continuously with $\beta$ for each fixed
$\omega_k$, the half-power solution $\omega_c(\beta)$ moves continuously as $\beta$ varies.
However, the membership condition $\omega_k\le \omega_c(\beta)$ can change only when $\omega_c(\beta)$ crosses a discrete grid
point $\omega_k$, which occurs at isolated values of $\beta$.
Between such crossing events, the set of indices satisfying $\omega_k\le \omega_c(\beta)$ remains unchanged, hence it is
piecewise constant in $\beta$.
\end{proof}

\paragraph{Implication for cutoff-based summaries.}
Any cutoff-based diagnostic defined as an aggregation over the in-band set
(e.g., summing a fixed PMF over $\{\omega_k\le \omega_c(\beta)\}$)
inherits this piecewise-constant structure and therefore exhibits a staircase dependence on $\beta$.
In sufficiently strong low-pass regimes where $\omega_c(\beta)<\omega_1$, the in-band set reduces to $\{\omega_0\}$,
and such diagnostics degenerate to DC-only contributions.

% =========================
\subsection{Cutoff-Based Diagnostics and Monotonicity in $\beta$}
\label{app:effband_diagnostics_monotonicity}

This subsection provides two complementary interpretability results that connect the effective bandwidth to our
discrete discriminative spectrum $\mathrm{DI}_{\mathrm{norm}}$.
First, we define a cutoff-based in-band region on $\Omega_L$ and a cumulative-mass diagnostic that serves as a
hard-threshold surrogate for $\mathrm{FMS}_{\mathrm{avg}}(\beta)$.
Second, we prove monotonicity properties---both for $B_{\mathrm{eff}}(\beta)$ and pointwise for $\tilde H(\omega;\beta)$---that
formalize $\beta$ as an inverse-bandwidth control parameter.

\paragraph{Discrete half-power in-band region on $\Omega_L$.}
With $\omega_c(\beta)$ well-defined (Section~\ref{app:effband_operationality}) and discretized (Section~\ref{app:effband_discretization}),
we can deterministically specify the corresponding in-band bins on the one-sided grid.

\begin{definition}[Half-power in-band set on $\Omega_L$]
\label{def:inband_set}
Fix $\beta\in[3-2\sqrt2,1)$ and let $\omega_c(\beta)$ be the half-power cutoff in~Eq.~\eqref{eq:wc_closed}.
The corresponding \emph{in-band} set on the one-sided grid $\Omega_L=\{\omega_k\}_{k=0}^{K}$ is
\begin{equation}
\Omega_{\le \omega_c}
\;\triangleq\;
\{\omega_k\in\Omega_L:\ \omega_k\le \omega_c(\beta)\}.
\label{eq:inband_set}
\end{equation}
\end{definition}

\paragraph{Non-emptiness.}
The next proposition guarantees that the in-band region is always well-defined on $\Omega_L$ (it always contains DC),
which is useful when interpreting cumulative quantities on the grid.

\begin{proposition}[Non-emptiness of the in-band set]
\label{prop:inband_nonempty}
For any $\beta\in[3-2\sqrt2,1)$, the set $\Omega_{\le \omega_c}$ in~Eq.~\eqref{eq:inband_set} is nonempty.
\end{proposition}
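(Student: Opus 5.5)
The plan is to exhibit an explicit element of $\Omega_{\le\omega_c}$, namely the DC bin $\omega_0=0$, which belongs to $\Omega_L$ for every $L\ge 1$ since $k=0\in\{0,\ldots,K\}$ with $K=\lfloor L/2\rfloor\ge 0$. It then suffices to check the membership inequality $\omega_0\le\omega_c(\beta)$, i.e., $0\le\omega_c(\beta)$.

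For this I will invoke Proposition~\ref{prop:cutoff_exist_unique}: for $\beta\in[3-2\sqrt2,1)$ the half-power cutoff exists, is unique, and lies in $(0,\pi]$. Alternatively, I can use the closed form of Proposition~\ref{prop:cutoff_closed_form}, $\omega_c(\beta)=\arccos\big(\tfrac{4\beta-1-\beta^2}{2\beta}\big)$, whose argument lies in $[-1,1]$ on this range, and $\arccos$ takes values in $[0,\pi]$. Either route gives $\omega_c(\beta)\ge 0=\omega_0$, hence $\omega_0\in\Omega_{\le\omega_c}$ and the set is nonempty. In fact $\omega_c(\beta)>0$ strictly, because $\tilde H(0;\beta)=1\neq\tfrac12$, so DC is always strictly in-band.

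There is no real obstacle. The only point needing care is the endpoint $\beta=3-2\sqrt2$, where $\omega_c=\pi$ and the $\arccos$ argument equals $-1$. I will note that the range of $\arccos$ is closed, so the argument is unaffected. I will also add a remark connecting this to the staircase discussion in Proposition~\ref{prop:staircase_no_new_symbols}: in the strong low-pass regime $\omega_c(\beta)<\omega_1$, the set reduces exactly to $\{\omega_0\}$, so the bound is tight.
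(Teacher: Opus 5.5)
Your proof is correct and matches the paper's: both exhibit the DC bin $\omega_0=0\in\Omega_L$ and use $\omega_c(\beta)\in(0,\pi]$, from Proposition~\ref{prop:cutoff_exist_unique}, to conclude $\omega_0\le\omega_c(\beta)$. Your endpoint check at $\beta=3-2\sqrt2$ (where $\omega_c=\pi$) and your tightness remark are accurate, but the conclusion does not need them.
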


\begin{proof}
By construction, $\omega_0=0\in\Omega_L$. Moreover, by DC normalization
$\tilde H(\omega_0;\beta)=\tilde H(0;\beta)=1$ for all $\beta\in(0,1)$.
Since $\omega_0=0\le \omega_c(\beta)$ whenever $\omega_c(\beta)\in(0,\pi]$, we have $\omega_0\in\Omega_{\le \omega_c}$, hence the set
is nonempty.
\end{proof}

\paragraph{Monotonicity of $B_{\mathrm{eff}}(\beta)$ as an inverse-bandwidth control.}
To formalize $\beta$ as an inverse-bandwidth knob, we show that the induced cutoff (hence the effective bandwidth)
shrinks strictly as $\beta$ increases.

\begin{proposition}[Strict monotonicity of $B_{\mathrm{eff}}(\beta)$ in $\beta$]
\label{prop:beff_monotone}
Assume $\beta\in[3-2\sqrt{2},1)$ so that the unique half-power cutoff $\omega_c(\beta)\in(0,\pi]$
exists and satisfies $\tilde H(\omega_c(\beta);\beta)=\tfrac{1}{2}$.
Then the induced one-sided effective bandwidth
$B_{\mathrm{eff}}(\beta)\triangleq \omega_c(\beta)$
is strictly decreasing in $\beta$, i.e.,
\begin{equation}
\frac{d}{d\beta}B_{\mathrm{eff}}(\beta) < 0.
\label{eq:beff_strict_decreasing}
\end{equation}
\end{proposition}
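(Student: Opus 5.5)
The plan is to differentiate the closed form from Proposition~\ref{prop:cutoff_closed_form} directly. There the cutoff is written as $\omega_c(\beta)=\arccos g(\beta)$, and I will first rewrite its argument in a form that is easier to differentiate:
\begin{equation*}
g(\beta)\triangleq\frac{4\beta-1-\beta^2}{2\beta}=2-\frac{1}{2\beta}-\frac{\beta}{2}.
\end{equation*}
Its derivative is
\begin{equation*}
g'(\beta)=\frac{1}{2\beta^2}-\frac12=\frac{1-\beta^2}{2\beta^2},
\end{equation*}
which is strictly positive for every $\beta\in(0,1)$.

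\textbf{Chain rule.} Since $\arccos$ is strictly decreasing on $[-1,1]$ with derivative $-(1-x^2)^{-1/2}$ on $(-1,1)$, I obtain
\begin{equation*}
\frac{d}{d\beta}B_{\mathrm{eff}}(\beta)=-\frac{g'(\beta)}{\sqrt{1-g(\beta)^2}}<0 .
\end{equation*}
This holds wherever $|g(\beta)|<1$. Two range checks are needed, both taken from the proof of Proposition~\ref{prop:cutoff_closed_form}.
\begin{itemize}
\item $g(\beta)<1$: this is equivalent to $(1-\beta)^2>0$, which is true for $\beta<1$.
\item $g(\beta)=-1$: this happens exactly when $\beta^2-6\beta+1=0$, i.e.\ at $\beta=3-2\sqrt2$, where $\omega_c=\pi$.
\end{itemize}
So on the open interval $(3-2\sqrt2,1)$ the derivative is finite and strictly negative.

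\textbf{The endpoint $\beta=3-2\sqrt2$ (main obstacle).} Here $\sqrt{1-g^2}=0$, so the formula breaks down. The one-sided derivative from the right is $-\infty$ rather than a finite negative number. I will handle this in two steps.
\begin{itemize}
\item Strict decrease on the closed interval $[3-2\sqrt2,1)$ follows directly: it is the composition of the strictly increasing map $g$ with the strictly decreasing map $\arccos$, and this needs no differentiability.
\item The inequality \eqref{eq:beff_strict_decreasing} will then be stated on the interior, with the right derivative at the endpoint read as $-\infty<0$.
\end{itemize}

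\textbf{Cross-check by the implicit function theorem.} As a second argument, I will apply the implicit function theorem to $F(\omega,\beta)=\tilde H(\omega;\beta)-\tfrac12$. This gives
\begin{equation*}
\omega_c'(\beta)=-\frac{\partial_\beta\tilde H}{\partial_\omega\tilde H}.
\end{equation*}
Both partial derivatives are negative on the relevant range.
\begin{itemize}
\item $\partial_\omega\tilde H<0$ on $(0,\pi)$, by \eqref{eq:Htilde_prime}.
\item $\partial_\beta\tilde H<0$ for $\omega\in(0,\pi]$, by Lemma~\ref{lem:lif_lowpass}(iii). Concretely, the log-derivative of $(1-\beta)^2/D$ with $D=1+\beta^2-2\beta\cos\omega$ has numerator $-2(1+\beta)(1-\cos\omega)<0$.
\end{itemize}
Hence $\omega_c'(\beta)<0$. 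This argument makes the mechanism transparent: a larger leak lowers the response at each frequency, and since $\tilde H$ decreases in $\omega$, the half-power crossing must move toward DC. It also shows again that the endpoint $\omega_c=\pi$, where $\sin\omega=0$, is the only place where the derivative degenerates.
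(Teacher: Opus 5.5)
Your proposal is correct and matches the paper's proof essentially step for step: rewrite $x(\beta)=2-\frac{1}{2\beta}-\frac{\beta}{2}$, note $x'(\beta)=\frac{1-\beta^2}{2\beta^2}>0$, and conclude $\omega_c'(\beta)=-x'(\beta)/\sqrt{1-x(\beta)^2}<0$ on $(3-2\sqrt2,1)$. Your extra handling of the endpoint $\beta=3-2\sqrt2$ is a worthwhile tightening, since the paper's argument silently covers only the open interval; there the right derivative is $-\infty$, and strict decrease still follows from composing monotone maps. The implicit-function cross-check is correct but unnecessary.
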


\begin{proof}
From~Eq.~\eqref{eq:wc_closed}, the cutoff admits the closed form
\begin{equation}
\omega_c(\beta)=\arccos\!\Big(x(\beta)\Big),
\qquad
x(\beta)\triangleq \frac{4\beta-1-\beta^2}{2\beta}.
\label{eq:wc_x_def}
\end{equation}
For $\beta\in[3-2\sqrt2,1)$, we have $x(\beta)\in[-1,1]$, so $\omega_c(\beta)$ is well-defined.
Moreover, for any $\beta\in(0,1)$ we can rewrite $x(\beta)$ as
\begin{equation}
x(\beta)
=
2-\frac{1+\beta^2}{2\beta}
=
2-\frac{1}{2\beta}-\frac{\beta}{2}.
\label{eq:x_simplify}
\end{equation}
Differentiating~Eq.~\eqref{eq:x_simplify} gives
\begin{equation}
x'(\beta)=\frac{1}{2\beta^2}-\frac{1}{2}
=
\frac{1-\beta^2}{2\beta^2}.
\label{eq:x_prime}
\end{equation}
Hence $x'(\beta)>0$ for all $\beta\in(0,1)$.
Next, differentiating $\omega_c(\beta)=\arccos(x(\beta))$ yields
\begin{equation}
\omega_c'(\beta)
=
-\frac{x'(\beta)}{\sqrt{1-x(\beta)^2}}.
\label{eq:wc_prime}
\end{equation}
For $\beta\in(3-2\sqrt2,1)$ the denominator is strictly positive, so $\omega_c'(\beta)<0$.
Since $B_{\mathrm{eff}}(\beta)\triangleq \omega_c(\beta)$, this proves~Eq.~\eqref{eq:beff_strict_decreasing}.
\end{proof}

\paragraph{Pointwise monotonic attenuation in $\beta$.}
Beyond the cutoff itself, we can strengthen the monotonicity statement to the full template:
for any fixed non-DC frequency, increasing $\beta$ strictly decreases the retained (DC-normalized) power.

\begin{proposition}[Pointwise monotonicity of $\tilde H(\omega;\beta)$ in $\beta$]
\label{prop:Htilde_pointwise_beta}
Fix any $\omega\in(0,\pi]$. For all $\beta\in(0,1)$, the DC-normalized LIF power template
\[
\tilde H(\omega;\beta)=\frac{(1-\beta)^2}{1+\beta^2-2\beta\cos\omega}
\]
is strictly decreasing in $\beta$, i.e.,
\begin{equation}
\frac{\partial}{\partial \beta}\tilde H(\omega;\beta) < 0.
\label{eq:Htilde_pointwise_decreasing}
\end{equation}
Moreover, at DC we have $\tilde H(0;\beta)=1$ for all $\beta\in(0,1)$.
\end{proposition}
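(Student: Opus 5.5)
The plan is a direct calculus argument on the closed form from Lemma~\ref{lem:Htilde_closed}. Fix $\omega\in(0,\pi]$ and set $a\triangleq 1-\cos\omega$, so $a\in(0,2]$ is a constant with respect to $\beta$. Rewrite the denominator as $D(\beta)\triangleq 1+\beta^2-2\beta\cos\omega=(1-\beta)^2+2\beta a$, which matches the form in Eq.~\eqref{eq:htilde_closed}. This gives $\tilde H(\omega;\beta)=N(\beta)/D(\beta)$ with $N(\beta)=(1-\beta)^2$. As noted in the proof of Proposition~\ref{prop:cutoff_exist_unique}, $D(\beta)\ge(1-\beta)^2>0$, so the quotient rule applies and the sign of $\partial_\beta\tilde H$ equals the sign of $N'D-ND'$.

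The main step is simplifying $N'D-ND'$. I have $N'=-2(1-\beta)$ and $D'=-2(1-\beta)+2a$. Substituting,
\[
N'D-ND'=-2(1-\beta)\big[(1-\beta)^2+2\beta a\big]-(1-\beta)^2\big[-2(1-\beta)+2a\big].
\]
The cubic terms $\mp 2(1-\beta)^3$ cancel, leaving
\[
N'D-ND'=-4\beta a(1-\beta)-2a(1-\beta)^2=-2a(1-\beta)(1+\beta).
\]
Therefore
\[
\frac{\partial}{\partial\beta}\tilde H(\omega;\beta)=-\frac{2a(1-\beta^2)}{D(\beta)^2}.
\]
For $\beta\in(0,1)$ and $\omega\in(0,\pi]$, the factors $a$ and $1-\beta^2$ are both strictly positive, so the derivative is strictly negative. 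This proves Eq.~\eqref{eq:Htilde_pointwise_decreasing}. The same expression is also negative at $\beta=0$, which is consistent with Lemma~\ref{lem:lif_lowpass}(iii).

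An alternative I would keep as a cross-check is to write $1/\tilde H=1+2a\beta/(1-\beta)^2$. The map $\beta\mapsto\beta/(1-\beta)^2$ is strictly increasing on $[0,1)$, since its derivative is $(1+\beta)/(1-\beta)^3>0$. So $1/\tilde H$ strictly increases in $\beta$, and $\tilde H$ strictly decreases, which avoids the quotient-rule algebra altogether. The DC claim is immediate: at $\omega=0$ we have $a=0$, so $D=(1-\beta)^2=N$ and $\tilde H(0;\beta)=1$.

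The only real obstacle is the algebraic cancellation in the numerator. The reciprocal form sidesteps this, so I would present that as the primary argument and give the derivative formula as the quantitative statement.
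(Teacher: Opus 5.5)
Your proposal is correct and follows the paper's route. The paper also applies the quotient rule to $n(\beta)=(1-\beta)^2$ and $d(\beta)=1+\beta^2-2\beta\cos\omega$, factors the numerator as $-(1-\beta)(1+\beta)(1-\cos\omega)$, obtains exactly your formula $\partial_\beta\tilde H=-2(1-\beta^2)(1-\cos\omega)/d(\beta)^2$, and checks the DC claim by the same substitution $\cos 0=1$.

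Your reciprocal shortcut $1/\tilde H=1+2a\beta/(1-\beta)^2$ is a tidier alternative that the paper does not use. If you make it primary, add the chain-rule step $\partial_\beta\tilde H=-\tilde H^2\cdot 2a(1+\beta)/(1-\beta)^3$. That step gives the pointwise inequality $\partial_\beta\tilde H<0$ the statement asks for, which is slightly more than strict monotonicity alone.
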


\begin{proof}
Fix $\omega\in(0,\pi]$ and define
\[
n(\beta)\triangleq (1-\beta)^2,
\qquad
d(\beta)\triangleq 1+\beta^2-2\beta\cos\omega,
\]
so that $\tilde H(\omega;\beta)=n(\beta)/d(\beta)$.
Since $d(\beta)\ge (1-\beta)^2>0$ for $\beta\in(0,1)$, $\tilde H$ is differentiable and
\begin{equation}
\frac{\partial}{\partial\beta}\tilde H(\omega;\beta)
=
\frac{n'(\beta)d(\beta)-n(\beta)d'(\beta)}{d(\beta)^2}.
\label{eq:Htilde_beta_deriv_quotient}
\end{equation}
Compute
\[
n'(\beta)=2(\beta-1),
\qquad
d'(\beta)=2\beta-2\cos\omega = 2(\beta-\cos\omega).
\]
Substituting into~Eq.~\eqref{eq:Htilde_beta_deriv_quotient} and factoring out $2/d(\beta)^2$ yields
\begin{equation}
\frac{\partial}{\partial\beta}\tilde H(\omega;\beta)
=
\frac{2}{d(\beta)^2}
\Big(
(\beta-1)d(\beta) - (1-\beta)^2(\beta-\cos\omega)
\Big).
\label{eq:Htilde_beta_deriv_core}
\end{equation}
Let the bracketed term be $T(\beta)$. Using $\beta-1=-(1-\beta)$, we rewrite
\begin{align}
T(\beta)
&=
-(1-\beta)d(\beta) - (1-\beta)^2(\beta-\cos\omega)
\nonumber\\
&=
-(1-\beta)\Big(d(\beta) + (1-\beta)(\beta-\cos\omega)\Big).
\label{eq:T_factor1}
\end{align}
It remains to simplify the second factor:
\begin{align}
d(\beta) + (1-\beta)(\beta-\cos\omega)
&=
\big(1+\beta^2-2\beta\cos\omega\big) + \big(\beta-\beta^2-\cos\omega+\beta\cos\omega\big)
\nonumber\\
&=
1+\beta-\cos\omega-\beta\cos\omega
\nonumber\\
&=
(1+\beta)(1-\cos\omega).
\label{eq:T_factor2}
\end{align}
Combining~Eq.~\eqref{eq:T_factor1}--Eq.~\eqref{eq:T_factor2} gives
\begin{equation}
T(\beta)=-(1-\beta)(1+\beta)(1-\cos\omega).
\label{eq:T_closed}
\end{equation}
For $\beta\in(0,1)$ we have $(1-\beta)>0$ and $(1+\beta)>0$.
For $\omega\in(0,\pi]$, $\cos\omega<1$, hence $(1-\cos\omega)>0$.
Therefore $T(\beta)<0$, and since $d(\beta)^2>0$,~Eq.~\eqref{eq:Htilde_beta_deriv_core} implies
$\frac{\partial}{\partial\beta}\tilde H(\omega;\beta)<0$, proving~Eq.~\eqref{eq:Htilde_pointwise_decreasing}.
Finally, at DC we have $\cos 0=1$, so from~Eq.~\eqref{eq:Htilde_closed_form}:
\[
\tilde H(0;\beta)=\frac{(1-\beta)^2}{1+\beta^2-2\beta}=\frac{(1-\beta)^2}{(1-\beta)^2}=1.
\]
\end{proof}

\section{Dataset Details and Corresponding Preprocessing Protocol}
\label{app:datasets}
We evaluate on four mmWave point-cloud benchmarks summarized in Table~\ref{tab:datasets_summary}.
\textit{AOPHand}~\citep{zafar2023tiawr6843aop} (TI AWR6843AOP) is an antenna-on-package gesture dataset with 6 classes.
\textit{mmFiT}~\citep{tiwari2023mmfit} (TI IWR1642) contains 7 fitness-activity classes; following prior practice, we use only its databank subset.
\textit{Pantomime}~\citep{palipana2021pantomime} (TI IWR1443) provides 20 mid-air gesture classes collected across multiple environments; we use only the OPEN environment and remove class ID=27, which contains a single sample.
\textit{MMActivity}~\citep{singh2019radhar} (TI IWR1443BOOST) is an indoor activity dataset with 5, and we use only the sample data.
For all datasets, the resulting train/test sample counts under our seed-controlled splits are reported in Table~\ref{tab:datasets_summary}.

% Requires: \usepackage{booktabs} \usepackage{array}
\begin{table*}[htbp]
\centering
\scriptsize
\setlength{\tabcolsep}{4.0pt}
\renewcommand{\arraystretch}{1.15}
\caption{Summary of the mmWave datasets used in our experiments.}
\label{tab:datasets_summary}
\begin{tabular}{l
                >{\raggedright\arraybackslash}p{2.65cm}
                >{\raggedright\arraybackslash}p{2.65cm}
                >{\raggedright\arraybackslash}p{2.65cm}
                >{\raggedright\arraybackslash}p{2.65cm}
                >{\raggedright\arraybackslash}p{2.65cm}}
\toprule
\textbf{Item} & \textbf{AOPHand}  & \textbf{mmFiT} & \textbf{Pantomime} & \textbf{MMActivity} \\
\midrule
\textbf{Platform} & TI AWR6843AOP  & TI IWR1642 & TI IWR1443 & TI IWR1443BOOST \\
\textbf{Category Number} & 6  & 7 & 20 & 5 \\
% \textbf{Category Names} &
% Circular; Button press; Up; Down; Swipe R; Swipe L &
% walk; sit; stand; pick up; bend; etc. &
% Dumbbell; Shoulder Press; Squat; Lateral Raise; Boxing; Right Tricep; Left Tricep &
% one-hand; bimanual; linear; circular gestures &
% Walking; Jumping; Jumping Jacks; Squats; Boxing \\
\textbf{Training Samples} & 2016  & 2475 & 2480 & 160 \\
\textbf{Testing Samples}  & 505    & 619  & 711  & 40  \\

\bottomrule
\end{tabular}
\end{table*}

\paragraph{Unified point-cloud preprocessing.}
To enable fair and reproducible comparisons across datasets with heterogeneous sequence lengths, frame rates, and point densities, we map each raw mmWave recording to a fixed-size point-cloud tensor of shape $(F_{\max}, P_{\max}, D)$.
Specifically, we set $F_{\max}=4$ frames and $P_{\max}=64$ points per frame, while preserving the original temporal ordering and per-point structure.

\emph{Frame alignment.}
We first group points by their frame indices.
If a recording contains more than $F_{\max}$ frames, we uniformly subsample $F_{\max}$ frames to retain a temporally representative sequence; if it contains fewer than $F_{\max}$ frames, we zero-pad the missing frames with empty point sets.
Unless otherwise stated, we do not perform temporal interpolation.

\emph{Point alignment.}
Within each retained frame, we enforce a fixed point budget $P_{\max}$.
If the frame contains more than $P_{\max}$ points, we uniformly subsample $P_{\max}$ points; otherwise, we zero-pad to $P_{\max}$ points.
This yields a consistent point dimension while avoiding bias toward densely populated frames.

\emph{Per-point features and normalization.}
Each point is represented by $D=4$ channels $\{x, y, z, v\}$, where $v$ denotes the radial velocity.
We compute per-channel mean and standard deviation over the training split and store these statistics in the preprocessed files.
During data loading, we apply channel-wise normalization using these statistics, except for MMActivity where normalization is disabled to preserve the dataset's native scale.

\emph{Uniform subsampling.}
When subsampling frames/points, we use equally spaced indices to cover the full temporal span (or the full set of points) of the original recording.

\emph{Relation to prior pipelines.}
Our framing--resampling procedure is consistent with common mmWave point-cloud preprocessing practice~\citep{palipana2021pantomime}.

\paragraph{Per-frame feature shaping.}
To ensure reproducibility and fair comparisons across heterogeneous backbones,
we use a single, deterministic input construction.
After preprocessing, each mmWave frame is represented as a feature vector
$\mathbf{f}_i[l]\in\mathbb{R}^{256}$.
We embed it into $\mathbb{R}^{4096}$ by appending zeros:
$\tilde{\mathbf{f}}_i[l] = [\mathbf{f}_i[l]^\top,\mathbf{0}_{3840}^\top]^\top$,
i.e., the original 256 features occupy the first 256 entries and the remaining
entries are zero-padded.
We then reshape $\tilde{\mathbf{f}}_i[l]$ in row-major order into a
single-channel $64{\times}64$ map
$\tilde{\mathbf{X}}_i[l]\in\mathbb{R}^{1\times64\times64}$.
Equivalently, $\mathbf{f}_i[l]$ fills the top-left $16{\times}16$ block and all
other spatial locations are zero.

Stacking $L$ frames yields
$\tilde{\mathbf{X}}_i\in\mathbb{R}^{L\times1\times64\times64}$.
Unless otherwise specified, we fix $L=4$ for all datasets and models.
For 2D CNN backbones (LeNet/VGG/ResNet), we apply the same 2D network to each
frame with shared weights and aggregate predictions by temporal averaging of
logits:
$\mathbf{z}_i=\frac{1}{L}\sum_{l=0}^{L-1}\mathbf{z}_i[l]$~\cite{palipana2021pantomime}.
We do not treat $L$ as input channels and do not use 3D convolutions.
For recurrent backbones, the per-step input is the original sequence
$\{\mathbf{f}_i[l]\}_{l=0}^{L-1}$ (256-D), and for transformer backbones, tokens
are extracted from $\tilde{\mathbf{X}}_i[l]$ via patchification.

% \paragraph{Per-frame feature shaping.}
% To enable fair comparisons across heterogeneous architectures, all models consume the same input tensor shape. After preprocessing, each mmWave frame $\mathbf{X}_i[l]$ is represented as a 256-dimensional feature vector.
% We zero-pad it to 4096 dimensions and reshape it into a $64{\times}64$ grid, yielding a single-channel image-like per frame.
% As a result, each sample is represented as $ \mathbf{X}_i \in \mathbb{R}^{L \times 1 \times 64 \times 64}$
% where $L$ is the number of frames in the input sequence. Unless otherwise specified, we fix the temporal aggregation length to
% $L=4$ for all models and datasets. This representation is compatible with CNN backbones (2D spatial operators), RNN backbones (treating the $64{\times}64$ map or its flattened form as the per-step input), and transformer backbones (treating patches as tokens), without introducing model-specific feature engineering.

\section{Baseline Specification and Training Protocol}
\label{app:training}

This appendix documents the baseline model families, naming conventions,
a unified input representation, and the training configuration used for all
experiments.
Our goal is to ensure that all reported differences primarily reflect
architectural inductive biases, rather than ad-hoc per-model tuning or
inconsistent preprocessing.

%------------------------------------------------
\subsection{Baseline families and naming conventions}
\label{app:baseline_naming}

We evaluate a broad suite of ANN and SNN baselines spanning convolutional,
recurrent, and transformer-style architectures that are commonly adopted in
wireless sensing pipelines~\citep{yang2023sensefi}.
Model names follow standard conventions within each family.

\paragraph{VGG-style CNNs.}
We include \textit{VGG9} and \textit{VGG16} following the VGG design
principles~\citep{simonyan2015vgg}.
The numeric suffix (e.g., $9$ or $16$) denotes the total number of
\emph{learnable layers} (convolutional + fully connected) in the canonical
VGG formulation.
Larger suffixes correspond to deeper stacks of convolutional blocks and thus
higher representational capacity.

\paragraph{ResNet-family CNNs.}
We use \textit{ResNet18}, \textit{ResNet50}, and \textit{ResNet101}, where the
suffix denotes the standard depth of the canonical ResNet variants.
Deeper models increase capacity by adding residual blocks while maintaining
optimization stability via identity skip connections.

\paragraph{Transformer-based models.}
For transformer-style architectures (e.g., ViT), \emph{depth} is specified by the number of stacked encoder blocks. In this study, we set depth equal to 1 since multi-clocks induce underfitting problem~\cite{yang2023sensefi}. Unless otherwise stated, other components (e.g., normalization, MLP ratio, and attention type) follow the default settings of the corresponding baseline implementation.

\paragraph{Recurrent and hybrid baselines.}
Recurrent baselines include RNN, GRU, LSTM, and BiLSTM, as well as hybrid
CNN--RNN models (e.g., CNN--GRU) that first extract per-frame features and
then aggregate them temporally using a recurrent unit.
Unless otherwise stated, we use a single recurrent layer to avoid
introducing additional depth-related confounders.

% For reproducibility, detailed implementation and configuration specifications for the aforementioned baselines are provided in Appendix~\ref{app:code}.

%------------------------------------------------
\subsection{Training protocol and settings}
\label{app:training_details}

\paragraph{Optimization.}
All models are trained for 150 epochs using Adam with initial learning rate
$10^{-3}$ and a cosine learning-rate schedule.
We use batch size 64 and set weight decay to zero.
To reduce randomness, we report results averaged over three random seeds
$\{41,42,43\}$ using identical data splits and the same training/validation/
evaluation protocol for all models.

\paragraph{configuration.}
All spiking baselines use LIF neurons with firing threshold
$v_{\mathrm{th}}=1.0$, membrane time constant $\beta=0.5$, reset potential $v_{\mathrm{reset}}=0$, and input decay enabled.
We employ an arctangent surrogate gradient for backpropagation through spikes, and detach the reset operation from the computation graph to improve training stability. Unless explicitly varied in ablations, these neuron and training settings are
kept fixed across all SNN baselines to isolate architectural effects. For ANNs and non-spiking temporal models, the temporal axis corresponds to $L$ frames. For SNNs, we simulate LIF dynamics for $T$ discrete timesteps; when a frame-based encoding is used, the input sample is presented over $T$ steps and predictions are obtained by aggregating spike-based outputs over time. Unless stated otherwise, we use $L=4$ input frames and simulate SNNs with $T=4$ timesteps in all experiments. In the mechanistic analysis of frequency matching, we use $L=16$ input frames and simulate SNNs with $T=16$ timesteps.
%------------------------------------------------

\paragraph{Architectural conventions.}
Unless explicitly stated, remaining details (e.g., normalization, activation,
residual aggregation, and readout strategy) follow canonical baseline
implementations.
This design choice avoids introducing configuration artifacts and ensures that
observed performance differences are attributable to the model families
themselves rather than bespoke hyper-parameter tuning.

\section{Measurement}
\label{app:measure}

\subsection{Accuracy}
\label{app:measure_acc}

We report top-1 classification accuracy (\%) on the test split of each dataset.
All results are averaged over three random seeds using the same
training, validation, and evaluation protocol for ANN and SNN models.
For SNNs, accuracy is computed from spike-based outputs accumulated over $T$
timesteps.

\subsection{Model Size and Parameter Count}
\label{app:measure_params}

Model size is quantified by the total number of learnable parameters.
For ANN baselines, this includes all convolutional, fully connected, and
normalization layers.
For SNNs, we count identical architectural parameters, as spiking neuron
dynamics (e.g., membrane potential updates and thresholding) do not introduce
additional learnable weights.

\subsection{Operation Count (\#OPs)}
\label{app:measure_ops}

We report the number of operations per sample (\#OPs) as a hardware-agnostic
measure of computational complexity.
For ANN models, \#OPs correspond to the total number of multiply--accumulate
(MAC) operations required for a single forward pass.
For SNNs, \#OPs are computed by counting synaptic accumulate (AC) operations
triggered by spikes across all timesteps.
Specifically, an AC is counted whenever a presynaptic spike contributes to a
postsynaptic membrane potential update.
All \#OPs are reported in millions (M) and are averaged over the test set. In our setting, the first layer consumes real-valued inputs and is therefore counted as dense MACs,
while subsequent spiking layers are counted as spike-driven ACs.

\subsection{Theoretical Energy Estimation}
Following the standard operation-level proxy widely adopted for efficiency analysis of ANNs and SNNs~\citep{horowitz2014computing,yao2023attentionsnn}, 
we estimate inference-time \emph{compute} energy by weighting operation counts with per-operation energy constants.
This proxy is designed for \emph{relative} comparisons and algorithmic insights, rather than absolute chip-level power modeling.

\subsubsection{ANN energy (MAC-based).}
For ANNs, each floating-point operation is treated as a multiply-and-accumulate (MAC).
Let $\mathrm{FLOPs}_\ell$ denote the number of MACs in layer $\ell$ (equivalently, $\#\mathrm{OP}_\ell$ for ANNs).
The theoretical compute energy of layer $\ell$ is
\begin{equation}
\mathcal{E}^{\mathrm{ANN}}_\ell
\;=\;
E_{\mathrm{MAC}} \cdot \mathrm{FLOPs}_\ell,
\label{eq:energy_ann_layer}
\end{equation}
and the per-sample energy is obtained by summing over layers,
$\mathcal{E}^{\mathrm{ANN}}=\sum_{\ell}\mathcal{E}^{\mathrm{ANN}}_\ell$.

\subsubsection{SNN energy (spike-driven AC-based).}
For SNNs, synaptic computation is predominantly event-driven and consists of spike-triggered \emph{accumulate} (AC) operations.
Let $T$ denote the number of simulation timesteps.
Under our frame-aligned coding scheme, we set $T=4$, the resampled temporal length used throughout the paper.

Let $\gamma_\ell(\tau)\in[0,1]$ denote the dataset-averaged \emph{presynaptic} spike rate entering layer $\ell$.
For $\ell\ge 2$, we set $\gamma_\ell(\tau) := r_{\mathrm{spk}}^{(\ell-1)}(\tau)$, where
$r_{\mathrm{spk}}^{(\ell-1)}(\tau)$ is the mean fraction of neurons that emit a spike at each timestep in layer $\ell-1$.

\subsubsection{First-layer convention and total operation count.}
Many algorithmic analyses treat all SNN layers as AC-only.
In our mmWave setting, however, the input is real-valued, and the first layer is therefore computed densely using MACs.
Consistent with our operation counting scheme
(\#OP $=$ FLOPs for ANN; \#OP $=$ FLOPs$_{\ell=1}$ $+$ SOPs$_{\ell\ge2}$ for SNN),
the total SNN compute energy is
\begin{equation}
\mathcal{E}^{\mathrm{SNN}}
\;=\;
E_{\mathrm{MAC}}\cdot \mathrm{FLOPs}_1
\;+\;
\sum_{\ell=2}^{N_{\mathrm{lyr}}} E_{\mathrm{AC}}\cdot \mathrm{SOPs}_\ell,
\label{eq:energy_snn_total}
\end{equation}
where $N_{\mathrm{lyr}}$ denotes the number of network layers.

\subsubsection{Operation energy constants and unit conversion.}
We assume MAC and AC operations are implemented using a $45\,\mathrm{nm}$ CMOS process~\citep{horowitz2014computing,yao2023attentionsnn}, with
\begin{equation}
E_{\mathrm{MAC}} = 4.6~\mathrm{pJ},
\qquad
E_{\mathrm{AC}}  = 0.9~\mathrm{pJ}.
\end{equation}
For reporting, note that $1~\mathrm{J}=10^{12}~\mathrm{pJ}$ and $1~\mu\mathrm{J}=10^{6}~\mathrm{pJ}$, so
$\mathcal{E}[\mu\mathrm{J}] = \mathcal{E}[\mathrm{pJ}] / 10^{6}$.

\subsubsection{Limitations of the proxy.}
\label{app:measure_limit}
This energy estimate is intentionally simplified.
It ignores data movement and memory hierarchy effects, control overhead, spike generation and reset costs,
as well as hardware-specific implementation details.
Moreover, GPU execution does not faithfully realize event-driven benefits,
since binary spikes are typically still represented and processed as floating-point tensors.
Substantial latency or throughput gains generally require neuromorphic or spike-optimized hardware backends.
Finally, this proxy does \emph{not} rely on any subthreshold or LTI approximation of neuron dynamics;
it only uses empirically measured spike rates to estimate spike-triggered synaptic activity.

\subsection{End-to-End Inference Latency}
\label{app:measure_latency}

We measure \emph{single-sample} inference latency on real edge devices with batch
size $1$ under inference mode (no gradient tracking).
To ensure a consistent and hardware-aware comparison between ANN and SNN
implementations, we measure latency on a \emph{fixed tail subnetwork} that is
shared by all models.
\subsubsection{Latency Definition}
\paragraph{Tail-only latency definition.}
Following our deployment implementation, we exclude the network stem and time
only the \emph{tail} of the model, defined as the subgraph from the first pooling
layer (\texttt{pool1}) to the final classifier output~\cite{reddi2020mlperf,kang2017neurosurgeon}.
Concretely, the stem---including the initial convolution, normalization, and
activation (for ANNs) or the corresponding spiking stem and state reset (for
SNNs)---is executed once per run to produce the intermediate activation, but its
runtime is not included in the reported latency.
The measured latency therefore reflects the computation from \texttt{pool1}
through all subsequent layers and the final network readout.

For SNNs, the reported latency includes the full spike-based computation within
the tail subnetwork, including temporal aggregation across timesteps and the
final readout layer, but excludes any task-level output decoding such as softmax
or label selection.
The same convention is applied to ANN baselines, where latency is measured up to
the pre-softmax logits.

For a given input sample $\mathbf{X}_i$, we denote the measured tail latency as
\begin{equation}
t_{\mathrm{tail}}(\mathbf{X}_i).
\end{equation}

\paragraph{Rationale for tail-only measurement.}
We adopt a tail-only latency definition for two practical reasons.
First, on neuromorphic hardware such as Darwin3~\cite{ma2024darwin3}, the initial layers operate on
real-valued inputs and are executed on the host processor for input adaptation
and spike encoding, as floating-point convolutions and normalizations are not
natively supported on-chip.
This input interface is therefore system- and implementation-dependent, and does
not reflect the intrinsic efficiency of spike-based inference on the
neuromorphic device.
Second, excluding the stem removes device-specific front-end effects and enables
a fair and hardware-aware comparison by measuring only the shared subnetwork that
can be efficiently executed across ANN and SNN platforms.

\subsubsection{Network time for ANN and SNN}
For ANN models, the tail latency corresponds to a single forward pass through the
tail subnetwork:
\begin{equation}
t_{\mathrm{tail}}^{\mathrm{ANN}}(\mathbf{X}_i)
\;=\;
t_{\mathrm{forward}}(\mathbf{X}_i).
\end{equation}

For SNN models simulated over $T$ discrete timesteps, the tail latency includes
the full spike-based temporal simulation and readout over all timesteps, and can
be expressed as
\begin{equation}
t_{\mathrm{tail}}^{\mathrm{SNN}}(\mathbf{X}_i;T)
\;=\;
\sum_{t=1}^{T} t_{\mathrm{step}}(\mathbf{X}_i,t),
\label{eq:latency_snn_T}
\end{equation}
where $t_{\mathrm{step}}(\mathbf{X}_i,t)$ denotes the per-timestep computation time
within the tail, including synaptic accumulation, neuron state updates, and
readout computation.
This formulation explicitly captures the temporal nature of SNN inference and
ensures that ANN and SNN latencies are compared under an identical tail-level
protocol.

\subsubsection{Timing protocol}
To obtain reliable wall-clock measurements, we adopt the following protocol. First, we perform several warm-up runs to amortize one-time overheads such as kernel loading, memory allocation, and runtime initialization; unless otherwise stated, warm-up is enabled. Second, for devices with asynchronous execution (e.g., GPU-accelerated platforms), explicit synchronization is enforced immediately before and after the timed region to prevent underestimation of execution time.
After warm-up, latency is recorded over $M$ repeated inference runs and summarized by the empirical average
\begin{equation}
\bar t_{\mathrm{tail}}
\;\triangleq\;
\frac{1}{M}\sum_{m=1}^{M} t_{\mathrm{tail}}^{(m)},
\label{eq:latency_avg}
\end{equation}
where $t_{\mathrm{tail}}^{(m)}$ denotes the $m$-th measured tail latency.

All latency measurements use identical implementation settings for ANN and SNN baselines on the same device, including batch size, threading configuration, and numerical precision. Detailed device specifications and measurement configurations are reported in Appendix~\ref{app:device_latency}.

\section{Supplementary Validation Experiments on Accuracy}
\label{app:all_snn_acc}

To assess the generality of our findings, we further evaluate SNNs with advanced architectures, including \textit{SEW-ResNet}~\cite{fang2021deep}, \textit{MS-ResNet}~\cite{hu2024advancing1}, \textit{Spikformer}~\cite{zhou2023spikformer}, \textit{SDT}~\cite{yao2024spikedriven}, \textit{QKFormer}~\cite{zhou2024qkformer}, and \textit{SpikingResFormer}~\cite{shi2024spikingresformer}. Table~\ref{tab:acc_params_ann_snnlif} summarizes the accuracy--capacity trade-off of ANN and SNN baselines. Across all four datasets, SNN backbones consistently surpass ANNs at comparable parameter budgets, suggesting that spike-based temporal dynamics provide a more suitable inductive bias for mmWave sequences than purely frame-based processing. Under the same unified preprocessing and training/evaluation protocol (mean over three seeds), SpikingLeNet already improves over the per-dataset best ANN by \textbf{6.22\%} relative accuracy on average, and the strongest SNN improves over the strongest ANN by \textbf{17.50\%} on average, without increasing capacity beyond the reported parameter counts. We attribute this advantage to the inherent low-pass temporal filtering and dynamic integration of LIF neurons, which better aligns with the discriminative spectral structure of mmWave signals. MMActivity is a notable exception: its smaller sample size reduces the benefit of larger models, making simpler backbones comparatively strong. Finally, spiking attention models offer an attractive accuracy--compactness operating point, e.g., SpikFormer attains $91.29\%$ on AOPHand with $2.57$M parameters, and QKFormer achieves comparable performance with only $2.41$M parameters, indicating that lightweight spiking transformers can be markedly more parameter-efficient than large CNNs. 

\paragraph{Recurrent baselines: capacity and fairness.}
To ensure a fair comparison across heterogeneous backbones, we adopt a \emph{unified} recurrent configuration and training protocol across \emph{all} datasets and do not retune the RNN/GRU/LSTM architectures on a per-dataset basis.
Each recurrent model consumes the same per-frame input representation (a $64{\times}64$ map, flattened to a vector per timestep) and uses a lightweight recurrent core followed by a linear classifier.
While this unified setting yields relatively low accuracy on mmFiT (Table~1), the recurrent baselines are not trivially under-powered: under the \emph{same} configuration, GRU achieves the best ANN accuracy on Pantomime, LSTM reaches $72.47\%$ on Pantomime, and a vanilla RNN attains $48.33\%$ on MMActivity.
These cross-dataset results indicate that the recurrent implementations are functional and competitive under our shared protocol, and that the mmFiT outcome is more consistent with a representation--model mismatch (or dataset-specific temporal dynamics) than with an intentionally under-parameterized baseline.

\paragraph{Critical-Difference Diagram.}
Figure~\ref{fig:cd_lenetlif_ann} reports the critical-difference (CD) diagram over 12 evaluation blocks (4 datasets $\times$ 3 seeds), where methods are ordered by average rank (lower is better) under the Friedman--Nemenyi protocol.
A clear separation emerges between spiking and non-spiking baselines: top positions are consistently occupied by LIF-based SNNs.
In particular, \textbf{SpikFormer} achieves the best overall rank ($\bar r\!\approx\!3.54$), followed by \textbf{SpikingResFormer} ($\bar r\!\approx\!4.00$) and \textbf{MS-ResNet50} ($\bar r\!\approx\!4.04$); these models form the leading clique in CD plot, indicating statistically indistinguishable performance at the chosen significance level.
Importantly, even the lightweight \textbf{SpikingLeNet} (mean accuracy $\sim77.7\%$, $\bar r\!\approx\!8.92$) consistently outranks the strongest ANN baselines (e.g., VGG/ResNet families with $\bar r\!\approx\!12$--$14$), demonstrating that the benefit is not solely attributable to architectural scaling.
In contrast, several ANN sequence models (e.g., RNN/LSTM/BiLSTM) exhibit substantially worse average ranks, reflecting poorer robustness across datasets and seeds.
Overall, these supplementary results corroborate that mmWave recognition benefits from the spiking temporal processing bias induced by LIF neurons, rather than capacity alone, and motivate our subsequent analyses on efficiency (energy/\#OPs) and on how frequency-domain alignment explains the observed accuracy gains of SNNs over ANNs.

\begin{table*}[htbp]
\centering
\caption{Overall accuracy (\%) and parameter count (M) of ANN and SNN baselines across datasets. Results are reported as mean values over \emph{three} runs with different random seeds.
\colorbox{bestcolor}{\textbf{Bold}} / \colorbox{invertbestcolor}{\textbf{Bold}} denote the best accuracy, and \colorbox{secondcolor}{\underline{Underline}} / \colorbox{invertsecondcolor}{\underline{Underline}} denote the second-best accuracy, for ANN-based and SNN-based baselines, respectively, on each dataset.} % \textcolor{bestcolor}{\rule{1em}{1.5ex}}/\textcolor{secondcolor}{\rule{1em}{1.5ex}} corresponds to
\label{tab:acc_params_ann_snnlif}
\resizebox{\textwidth}{!}{%
\begin{tabular}{lrrrrrrrr}
\toprule
\multicolumn{1}{l}{\multirow{2.5}{*}{\textbf{Baseline}}}& \multicolumn{2}{c}{\textbf{AOPHand}} & \multicolumn{2}{c}{\textbf{mmFiT}} & \multicolumn{2}{c}{\textbf{Pantomime}} & \multicolumn{2}{c}{\textbf{MMActivity}} \\
\cmidrule(lr){2-3} \cmidrule(lr){4-5} \cmidrule(lr){6-7} \cmidrule(lr){8-9}
 & \textbf{Accuracy (\%)} & \textbf{Params (M)} & \textbf{Accuracy (\%)} & \textbf{Params (M)} & \textbf{Accuracy (\%)} & \textbf{Params (M)} & \textbf{Accuracy (\%)} & \textbf{Params (M)} \\
\midrule
MLP & $69.70_{\pm0.99}$ & $4.327$ & $66.51_{\pm2.02}$ & $4.327$ & $66.81_{\pm0.57}$ & $4.328$ & $60.83_{\pm2.89}$ & $4.327$ \\
LeNet & $60.86_{\pm1.68}$ & $4.191$ & $62.36_{\pm0.58}$ & $4.192$ & $61.83_{\pm0.42}$ & $4.196$ & $59.17_{\pm3.82}$ & $4.191$ \\
VGG9 & \cellcolor{bestcolor} $\mathbf{74.39}_{\pm0.61}$ & $31.603$ & $69.36_{\pm2.15}$ & $31.605$ & $72.63_{\pm0.46}$ & $31.622$ & \cellcolor{bestcolor}$\mathbf{70.00}_{\pm4.33}$ & $31.601$ \\
VGG16 & $67.92_{\pm6.02}$ & $70.313$ & $65.43_{\pm0.58}$ & $70.315$ & $71.60_{\pm1.88}$ & $70.331$ & $62.50_{\pm4.33}$ & $70.311$ \\
ResNet18 & $72.47_{\pm1.30}$ & $11.173$ &\cellcolor{secondcolor} \cellcolor{secondcolor}$\underline{71.94}_{\pm1.29}$ & $11.174$ & $72.63_{\pm0.59}$ & $11.178$ & $61.67_{\pm6.29}$ & $11.173$ \\
ResNet50 & \cellcolor{secondcolor} \cellcolor{secondcolor}$\underline{72.54}_{\pm0.80}$ & $23.514$ & $71.84_{\pm1.97}$ & $23.516$ & \cellcolor{secondcolor}$\underline{73.90}_{\pm0.43}$ & $23.532$ & $61.67_{\pm2.89}$ & $23.512$ \\
ResNet101 & $69.70_{\pm0.72}$ & $42.506$ & \cellcolor{bestcolor}$\mathbf{72.64}_{\pm0.25}$ & $42.508$ & $73.85_{\pm0.29}$ & $42.525$ & $64.17_{\pm1.44}$ & $42.504$ \\
RNN & $26.40_{\pm9.25}$ & \cellcolor{bestcolor}$\mathbf{0.026}$ & $22.56_{\pm5.93}$ &\cellcolor{bestcolor} $\mathbf{0.026}$ & $20.84_{\pm0.88}$ & \cellcolor{bestcolor}$\mathbf{0.027}$ & $48.33_{\pm1.44}$ & \cellcolor{bestcolor}$\mathbf{0.025}$ \\
GRU & $67.52_{\pm5.73}$ & \cellcolor{secondcolor}$\underline{0.075}$ & $14.11_{\pm0.38}$ & \cellcolor{secondcolor}$\underline{0.075}$ & \cellcolor{bestcolor}$\mathbf{75.45}_{\pm0.99}$ & \cellcolor{secondcolor}$\underline{0.076}$ & $47.50_{\pm2.50}$ & \cellcolor{secondcolor}$\underline{0.075}$ \\
LSTM & $20.71_{\pm1.12}$ & $0.100$ & $13.14_{\pm0.52}$ & $0.100$ & $72.47_{\pm3.29}$ & $0.101$ & $54.17_{\pm7.64}$ & $0.100$ \\
BiLSTM & $46.14_{\pm1.40}$ & $0.200$ & $12.98_{\pm0.89}$ & $0.200$ & $73.67_{\pm1.39}$ & $0.203$ & $45.83_{\pm3.82}$ & $0.200$ \\
CNN-GRU & $61.98_{\pm9.44}$ & $0.463$ & $67.80_{\pm0.49}$ & $0.463$ & $72.77_{\pm2.36}$ & $0.464$ & $65.00_{\pm2.50}$ & $0.463$ \\
ViT & $21.39_{\pm3.31}$ & $2.176$ & $36.40_{\pm5.65}$ & $2.176$ & $42.16_{\pm6.07}$ & $2.178$ & \cellcolor{secondcolor}$\underline{65.83}_{\pm5.77}$ & $2.175$ \\
\midrule
SpikingLeNet & $83.70_{\pm4.24}$ & $4.191$ & $73.67_{\pm1.55}$ & $4.192$ & $78.31_{\pm0.50}$ & $4.196$ & \cellcolor{invertsecondcolor}$\underline{75.00}_{\pm6.61}$ & $4.191$ \\
SpikingVGG9 & $88.51_{\pm0.40}$ & $31.603$ & $75.93_{\pm1.82}$ & $31.605$ & $82.77_{\pm1.06}$ & $31.622$ & \cellcolor{invertbestcolor}$\mathbf{75.83}_{\pm6.29}$ & $31.601$ \\ 
SpikingVGG16 & $86.86_{\pm0.60}$ & $70.313$ & $70.65_{\pm2.51}$ & $70.315$ & $82.82_{\pm0.99}$ & $70.331$ & $65.83_{\pm2.89}$ & $70.311$ \\
SEW-ResNet18 & $88.97_{\pm0.64}$ & $11.173$ & $78.57_{\pm1.21}$ & $11.174$ & $84.93_{\pm0.37}$ & $11.178$ & $66.67_{\pm3.82}$ & $11.173$ \\
SEW-ResNet50 & $86.01_{\pm1.02}$ & $23.514$ & $76.20_{\pm1.17}$ & $23.516$ & $85.02_{\pm1.10}$ & $23.532$ & $65.83_{\pm1.44}$ & $23.512$ \\
MS-ResNet18 & $90.96_{\pm0.31}$ & $11.178$ & $85.08_{\pm0.73}$ & $11.178$ & \cellcolor{invertsecondcolor}$\underline{88.50}_{\pm0.53}$ & $11.182$ & $65.83_{\pm3.82}$ & $11.177$ \\
MS-ResNet50 & $89.51_{\pm0.69}$ & $23.514$ & \cellcolor{invertsecondcolor}$\underline{85.41}_{\pm0.83}$ & $23.516$ & \cellcolor{invertbestcolor}$\mathbf{89.11}_{\pm0.64}$ & $23.532$ & $67.50_{\pm4.33}$ & $23.512$ \\
SpikFormer & \cellcolor{invertsecondcolor} $\underline{91.29}_{\pm1.58}$ & \cellcolor{invertsecondcolor}$\underline{2.566}$ & $83.58_{\pm0.52}$ & \cellcolor{invertsecondcolor}$\underline{2.567}$ & $87.19_{\pm0.99}$ & \cellcolor{invertsecondcolor}$\underline{2.569}$ & \cellcolor{invertbestcolor}$\mathbf{75.83}_{\pm1.44}$ & \cellcolor{invertsecondcolor}$\underline{2.566}$ \\
SDT & $88.31_{\pm0.52}$ & $6.081$ & $82.87_{\pm1.32}$ & $6.081$ & $86.10_{\pm0.99}$ & $6.084$ & $69.17_{\pm3.82}$ & $6.080$ \\
QKFormer & $90.17_{\pm0.58}$ & \cellcolor{invertbestcolor}$\mathbf{2.414}$ & $83.36_{\pm0.70}$ & \cellcolor{invertbestcolor}$\mathbf{2.414}$ & $87.14_{\pm0.82}$ &\cellcolor{invertbestcolor}$\mathbf{2.416}$ & $66.67_{\pm2.89}$ & \cellcolor{invertbestcolor}$\mathbf{2.414}$ \\
% SpikingMLP & $79.67_{\pm1.02}$ & 5.251 & $75.12_{\pm0.84}$ & 5.252 & $78.08_{\pm0.29}$ & 5.260 & $61.67_{\pm1.44}$ & 5.250 \\
SpikingResFormer & \cellcolor{invertbestcolor} \cellcolor{invertbestcolor}$\mathbf{92.74}_{\pm0.64}$ & $17.310$ & \cellcolor{invertbestcolor}$\mathbf{86.38}_{\pm0.74}$ & $17.310$ & $88.45_{\pm0.74}$ & $17.315$ & $65.83_{\pm3.82}$ & $17.309$ \\
\bottomrule
\end{tabular}
}
\end{table*}

\begin{figure*}[!t]
  \centering
  \includegraphics[width=\textwidth]{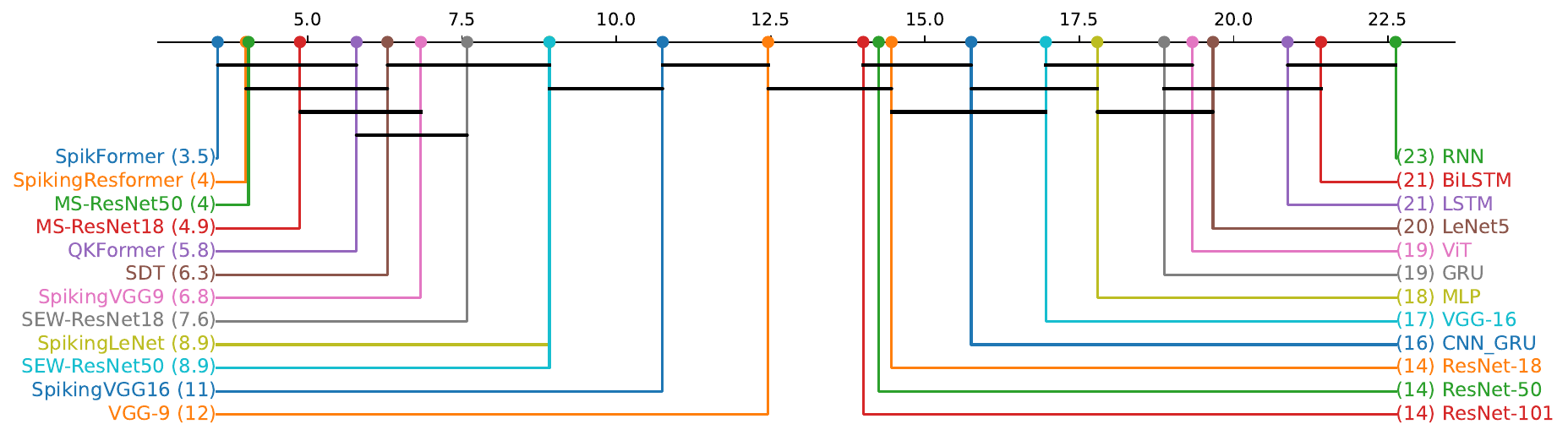}
  \caption{Critical-difference (CD) diagram for ANN baselines vs. SNN baselines. Average ranks are computed over dataset$\times$seed blocks (lower is better); thick horizontal bars denote groups that are not significantly different under a Nemenyi post-hoc test at $\alpha=0.05$.}
  \label{fig:cd_lenetlif_ann}
\end{figure*}

\section{Supplementary Energy-Efficiency Experiments}
\label{app:all_snn_energy}
Table~\ref{tab:energy_ops_ann_snn} summarizes dataset-averaged inference energy and operation counts under a unified measurement protocol, revealing two consistent trends. First, ANN baselines exhibit a largely capacity-driven scaling where energy closely tracks \#OPs: lightweight temporal models (e.g., RNN/GRU/LSTM) remain the most compute-frugal, whereas deep CNNs (VGG/ResNet-101) incur substantially higher \#OPs and energy; moreover, for a fixed ANN architecture, the reported values are nearly invariant across datasets, as the forward graph and tensor shapes are fixed once the input specification is set. Second, SNN+LIF models deliver pronounced efficiency gains relative to comparable architectural families: for instance, SpikingLeNet operates at $1.45$--$2.53\,\mu\text{J}$ with $\approx 0.94$--$1.43$\,M \#OPs, versus $251.08\,\mu\text{J}$ and $54.58$\,M for LeNet (about $99\times$ lower energy and $\sim 52\times$ fewer operations), and it is also lower than the most efficient ANN baseline (RNN at $7.35\,\mu\text{J}$ and $1.60$\,M). Within the SNN group, efficiency spans a clear frontier: compact spiking CNNs are the most energy-minimal, while modern spiking backbones (e.g., SEW-ResNet18 and QKFormer) occupy a low-to-mid regime ($\approx 28$--$42\,\mu\text{J}$ with $\approx 31$--$36$\,M), and heavier spiking CNNs/transformer-style models are costlier. Finally, the mild cross-dataset variation observed for SNNs (e.g., SpikingLeNet ranging from $1.45$ to $2.53\,\mu\text{J}$) suggests that data-dependent spike activity affects the realized energy, highlighting that SNN efficiency is jointly governed by architecture and activation sparsity rather than by compute alone.

\begin{table}[!t]
\centering
\scriptsize
\setlength{\tabcolsep}{2.6pt}
\renewcommand{\arraystretch}{1.08}
\caption{Energy ($\mu$J) and \#OPs (M) of ANN baselines and SNN+LIF models aggregated over datasets (averaged over three seeds).
\textbf{Bold} denotes the lowest (best), and \underline{Underline} denotes the second-lowest, within ANN-based and SNN-based baselines, respectively, on each dataset (see Appendix~\ref{app:measure}).}
\label{tab:energy_ops_ann_snn}
\resizebox{0.8\columnwidth}{!}{%
\begin{tabular}{lrrrrrrrr}
\toprule
\multirow{2}{*}{\textbf{Baseline}}
& \multicolumn{2}{c}{\textbf{AOPHand}}
& \multicolumn{2}{c}{\textbf{mmFiT}}
& \multicolumn{2}{c}{\textbf{Pantomime}}
& \multicolumn{2}{c}{\textbf{MMActivity}} \\
& Energy & \#OPs & Energy & \#OPs & Energy & \#OPs & Energy & \#OPs \\
\midrule
MLP      & $19.90$ & $4.33$  & $19.90$ & $4.33$  & $19.91$ & $4.33$  & $19.90$ & $4.33$ \\
LeNet    & $251.08$ & $54.58$ & $251.08$ & $54.58$ & $251.10$ & $54.59$ & $251.08$ & $54.58$ \\
VGG9     & $3352.70$ & $728.85$ & $3352.71$ & $728.85$ & $3352.79$ & $728.87$ & $3352.69$ & $728.85$ \\
VGG16    & $6017.25$ & $1308.10$ & $6017.26$ & $1308.10$ & $6017.34$ & $1308.12$ & $6017.24$ & $1308.10$ \\
ResNet18 & $655.24$ & $142.44$ & $655.22$ & $142.44$ & $655.24$ & $142.44$ & $655.22$ & $142.44$ \\
ResNet50 & $1522.02$ & $330.87$ & $1522.02$ & $330.87$ & $1522.10$ & $330.89$ & $1522.01$ & $330.87$ \\
ResNet101& $2923.68$ & $635.58$ & $2923.69$ & $635.58$ & $2923.76$ & $635.60$ & $2923.67$ & $635.58$ \\
RNN      & $7.35$  & $1.60$  & $7.35$  & $1.60$  & $7.36$  & $1.60$  & $7.35$  & $1.60$  \\
GRU      & $22.20$ & $4.83$  & $22.20$ & $4.83$  & $22.20$ & $4.83$  & $22.20$ & $4.83$  \\
LSTM     & $29.55$ & $6.42$  & $29.55$ & $6.42$  & $29.55$ & $6.42$  & $29.54$ & $6.42$  \\
BiLSTM   & $59.09$ & $12.85$ & $59.10$ & $12.85$ & $59.10$ & $12.85$ & $59.10$ & $12.85$ \\
CNN-GRU  & $128.55$ & $27.95$ & $128.55$ & $27.95$ & $128.56$ & $27.95$ & $128.55$ & $27.95$ \\
ViT      & $82.61$ & $17.96$ & $82.61$ & $17.96$ & $82.62$ & $17.96$ & $82.61$ & $17.96$ \\
\midrule
SpikingLeNet     & $2.53$ & $1.04$ & $2.04$ & $0.94$ & $2.44$ & $0.94$ & $1.45$ & $1.43$ \\
SpikingVGG9      & $266.47$ & $293.87$ & $155.13$ & $170.71$ & $155.79$ & $170.86$ & $302.31$ & $335.52$ \\
SpikingVGG16     & $2044.59$ & $2269.56$ & $1723.84$ & $1913.74$ & $1825.96$ & $2026.68$ & $1348.99$ & $1498.50$ \\
SEW-ResNet18     & $41.89$ & $35.67$ & $39.00$ & $34.74$ & $37.56$ & $30.85$ & $33.46$ & $35.54$ \\
SEW-ResNet50     & $127.72$ & $131.02$ & $141.30$ & $136.16$ & $119.79$ & $122.16$ & $105.72$ & $115.80$ \\
MS-ResNet18      & $66.93$ & $20.99$ & $63.48$ & $19.21$ & $65.38$ & $19.27$ & $49.57$ & $17.51$ \\
MS-ResNet50      & $562.09$ & $137.47$ & $561.30$ & $148.41$ & $561.70$ & $137.14$ & $467.72$ & $115.80$ \\
SpikFormer       & $181.12$ & $200.14$ & $174.28$ & $192.82$ & $188.59$ & $208.43$ & $174.09$ & $193.24$ \\
SDT              & $174.60$ & $155.74$ & $168.11$ & $151.65$ & $160.16$ & $141.06$ & $169.04$ & $152.54$ \\
QKFormer         & $33.17$ & $35.76$ & $30.57$ & $33.14$ & $33.06$ & $35.63$ & $28.77$ & $31.77$ \\
SpikingResFormer & $435.84$ & $473.20$ & $372.92$ & $405.72$ & $473.70$ & $515.34$ & $452.20$ & $500.77$ \\
\bottomrule
\end{tabular}
}
\end{table}

\section{Implementation of Low-Pass Filter for LeNet}
\label{app:low-pass-app}

To isolate the effect of a purely input-agnostic spatial smoothing baseline, we implement an explicit low-pass
filter that operates on each per-frame sensing map $\mathbf{X}_i[l]\in\mathbb{R}^{C\times H\times W}$ before feeding the
sequence $\mathbf{X}_i\in\mathbb{R}^{L\times C\times H\times W}$ into an ANN backbone (e.g., LeNet). The filter is applied
independently to each frame and each channel, i.e., it does not couple different $l$'s and does not use any information
from the class label $y_i$.

\paragraph{Scope of the ablation.}
The explicit low-pass baseline considered in this work is deliberately restricted to a \emph{static, input-agnostic spatial} filter applied independently to each frame $\mathbf{X}_i[l]\in\mathbb{R}^{C\times H\times W}$. Its sole purpose is to test whether uniform attenuation of high spatial-frequency components
within individual frames can explain the observed performance gains. We intentionally avoid introducing an explicit temporal low-pass filter along the frame index $l\in\{0,\ldots,L-1\}$. This choice reflects the nature of the available mmWave datasets: each frame $\mathbf{X}_i[l]$ is not a raw continuous-time radar return, but the output of a fixed sensing and preprocessing pipeline (e.g., frame accumulation, windowing, and clutter suppression)~\cite{tiwari2023mmfit}. Consequently, the temporal axis already represents a rate-limited, discretized observation sequence with task-dependent semantics. Applying an additional temporal low-pass filter at this level would introduce a new, hand-crafted temporal inductive bias and an extra time constant that is not intrinsic to the original data, thereby altering the task semantics and confounding the comparison. We further restrict the spatial filter to the simplest hard radial mask. More sophisticated designs, such as higher-order filters, learned frequency responses, or adaptive spatial kernels, would introduce extra parameters, computational overhead, and dataset-specific tuning, effectively embedding additional inductive bias or learning capacity into the ANN baseline rather than isolating the effect of low-pass smoothing tself~\cite{dwivedi2023benchmarking,wu2023neural,jia2016dynamic,proakis2007dsp}.

In contrast, SNNs with LIF neurons inherently implement a stateful temporal low-pass mechanism through their membrane dynamics, controlled by the decay factor $\beta\in(0,1)$. This mechanism enables temporal integration and event-driven sparsity without introducing extra trainable parameters or preprocessing cost. The present ablation therefore serves as a conservative validity check: it demonstrates that static spatial smoothing alone is insufficient to replicate the ehavior induced by LIF temporal dynamics, especially when task-relevant structure is distributed across the discriminative frequency band $\Omega^\star$.

\paragraph{Radial frequency mask.}
For a given spatial resolution $(H,W)$, we construct a fixed \emph{radial} hard low-pass mask in the
2D spatial-frequency domain. Concretely, we form normalized discrete spatial-frequency coordinates
$(\xi,\eta)\in[-0.5,0.5]^2$, where $[-0.5,0.5]^2\triangleq[-0.5,0.5]\times[-0.5,0.5]$ denotes the Cartesian
product of the normalized frequency axes along the width and height dimensions, respectively.
The coordinates are obtained using $\mathrm{fftfreq}$ followed by $\mathrm{fftshift}$, such that the
zero-frequency (DC) component is centered and $0.5$ corresponds to the Nyquist limit (half the
sampling rate) under this normalization~\cite{hsu2011signals}.

We then define the radial spatial-frequency magnitude as
\begin{equation}
\rho(\xi,\eta)\triangleq \sqrt{\xi^2+\eta^2},
\end{equation}
and choose a cutoff radius $\nu_{\mathrm{lp}}\in[0,0.5]$, expressed as a fraction of the Nyquist limit.
The resulting hard low-pass mask is
\begin{equation}
M_{\mathrm{lp}}(\xi,\eta)\triangleq \Theta\!\big[\nu_{\mathrm{lp}}-\rho(\xi,\eta)\big],
\end{equation}
so that spatial-frequency components with $\rho(\xi,\eta)\le \nu_{\mathrm{lp}}$ are retained, while
higher-frequency components are suppressed.

\paragraph{FFT-domain filtering.}
For each channel map $x\in\mathbb{R}^{H\times W}$ (i.e., one slice of $\mathbf{X}_i[l]$), we apply the
filter in the spatial-frequency domain via the standard two-dimensional discrete Fourier
transform (2D DFT). Here, a 2D DFT is obtained by applying the one-dimensional discrete Fourier transform separately
along the two spatial dimensions $(H,W)$; the plural form DFTs refers only to multiple such transform operations and does not introduce a distinct operator. In practice, $\mathrm{fft2}$ and $\mathrm{ifft2}$ denote the forward and inverse 2D DFTs, while $\mathrm{fftshift}$ and $\mathrm{ifftshift}$ rearrange the frequency components to and from a centered (DC-aligned) representation~\cite{proakis2007dsp}. The filtered output is given by
\begin{equation}
\tilde{x}
=
\Re\!\left\{
\mathrm{ifft2}\!\Big(
\mathrm{ifftshift}\big(
\mathrm{fftshift}(\mathrm{fft2}(x))\odot M_{\mathrm{lp}}
\big)
\Big)
\right\},
\end{equation}
where $\odot$ denotes elementwise multiplication and $\Re\{\cdot\}$ takes the real part after the
inverse transform.
Once constructed for a fixed $(H,W)$ and $\nu_{\mathrm{lp}}$, the same mask $M_{\mathrm{lp}}$ is reused
for all samples $i$, all frames $l\in\{0,\ldots,L-1\}$, and all channels.

\section{Deployment Devices}
\label{app:device_latency}

To assess real-world deployability, we benchmark inference latency across a diverse set of edge platforms, including GPU-accelerated embedded SoCs, a CPU-only single-board computer, and a neuromorphic processor. This selection covers representative deployment scenarios ranging from general-purpose edge AI acceleration to event-driven neuromorphic inference.

Table~\ref{tab:devices} summarizes the exact hardware configurations used in our measurements, including the module SKU and configured \texttt{nvpmodel} power mode for NVIDIA Jetson devices~\cite{nvidia2024Platform}, the Raspberry Pi~\cite{raspi4_datasheet} generation and RAM capacity, and the Darwin3 board and runtime revision. Figure~\ref{fig:device} shows photographs of the evaluated platforms, providing a visual reference for the physical deployment setups and the distinct hardware execution models, particularly the on-chip spike-based processing on the Darwin3 neuromorphic board. All latency benchmarks are implemented using Python-based test scripts that are executed directly on each target device, with wall-clock time recorded on-device. Unless otherwise stated, all latency measurements are conducted with batch size $1$, identical numerical precision, and are averaged over multiple runs. We fix $m=1000$, and the number of warm-up runs is set to $50$ for all latency experiments, and additionally report percentile statistics to characterize runtime variability.

\paragraph{Embedded GPU and CPU platforms.}
The NVIDIA Jetson Orin family (Nano, NX, and AGX) represents modern GPU-accelerated edge SoCs widely used for low-latency and power-constrained inference. For these devices, we fix the power mode using \texttt{nvpmodel} and report the exact configuration in Table~\ref{tab:devices} to ensure reproducibility. Raspberry Pi~4 is included as a CPU-only baseline platform; following standard deployment practice, it is treated as a general-purpose processor without any dedicated neural network accelerator.

\paragraph{Neuromorphic deployment on Darwin3.}

Darwin3 is a neuromorphic processor designed for event-driven spike-based computation. On this platform, only spike-based operations are executed on-chip, including synaptic accumulation, neuron state updates, and spike-based readout. Layers operating on real-valued inputs (e.g., the initial convolutional stem) are executed on the host processor to perform input adaptation and spike encoding, as floating-point operations are not natively supported by the neuromorphic cores. Accordingly, latency measurements on Darwin3 include only the on-chip execution time of the spike-based tail subnetwork, consistent with the tail-only latency definition adopted in Appendix~\ref{app:measure_latency}.

\begin{table}[htbp]
\centering
\scriptsize
\setlength{\tabcolsep}{4.2pt}
\renewcommand{\arraystretch}{1.08}
\caption{Deployment devices used for latency benchmarking.
We report the exact SKU and configured power mode to ensure reproducibility.}
\label{tab:devices}
\begin{tabular}{p{0.38\columnwidth} p{0.17\columnwidth} p{0.19\columnwidth} p{0.18\columnwidth}}
\toprule
\textbf{Device (exact SKU)} & \textbf{CPU} & \textbf{GPU / NPU} & \textbf{Memory / Power mode} \\
\midrule
Raspberry Pi 4 Model B (8GB)
& 4$\times$ Cortex-A72
& CPU-only (no NN accelerator)
& 8GB LPDDR4 / -- \\

Jetson Orin NX 16GB
& 8$\times$ Cortex-A78AE
& Ampere (1024 CUDA / 32 Tensor)
& 16GB LPDDR5 / 25W \\

Jetson AGX Orin 32GB
& 8$\times$ Cortex-A78AE
& Ampere (1792 CUDA / 56 Tensor)
& 32GB LPDDR5 / 30W \\

Jetson Orin Nano Super Dev Kit
& 6$\times$ Cortex-A78AE
& Ampere (1024 CUDA / 32 Tensor)
& 8GB LPDDR5 / 25W \\

Darwin3 board (rev.~X)
& --
& Neuromorphic cores
& on-chip memory \\
\bottomrule
\end{tabular}
\end{table}

\begin{figure*}[t]
  \centering
  \includegraphics[width=0.45\textwidth]{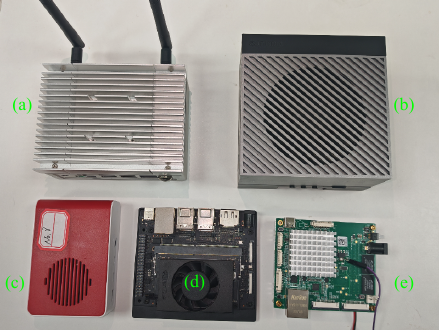}
\caption{Physical deployment devices used for end-to-end latency evaluation:
(a) NVIDIA Jetson Orin NX,
(b) NVIDIA Jetson AGX Orin,
(c) Raspberry Pi 4 (CPU-only),
(d) NVIDIA Jetson Orin Nano,
(e) Darwin3 neuromorphic board.
All devices are evaluated under the runtime configurations reported in
Table~\ref{tab:devices}.}
  \label{fig:device}
\end{figure*}

\section{Discussion}
\label{app:discussion}

\paragraph{More Complicated mmWave Scenarios.}
Beyond controlled laboratory settings, practical mmWave sensing systems operate in environments that are both physically complex and semantically heterogeneous.
Real-world signals are shaped by rich multipath propagation, dynamic occlusions, platform motion, and increasingly, by actively reconfigurable or adversarial elements. For example, recent studies show that intelligent reflecting surfaces (IRS)~\cite{guo2025disrupting}, while beneficial for non-line-of-sight enhancement, can be intentionally manipulated to induce non-stationary, frequency-selective distortions in the received temporal signal, leading to severe sensing failures.
Such effects cannot be modeled as simple additive noise, but rather act as implicit, time-varying spectral reshaping operators. 
A complementary source of complexity arises from semantic interference and cross-modal inconsistency. In autonomous driving and urban scenarios~\cite{wang2024end}, visually realistic but physically non-living objects (e.g., billboards or projections) can strongly confound perception pipelines. Recent work demonstrates that fusing mmWave radar with vision can mitigate such failures by exploiting modality-specific physical signatures (e.g., radar cross section) that remain stable under visual interference. While effective, these end-to-end fusion approaches rely on task- and architecture-specific feature learning, and provide limited interpretability into why certain temporal or spectral components are informative. From the perspective of our framework, both adversarial propagation effects and semantic interference can be unified as transformations of the effective discriminative spectrum observed by the model. Our dataset-level diagnostics and frequency-matching analysis abstract away spatial layouts and sensing modalities, and instead characterize how discriminative mass is distributed over temporal frequencies. This abstraction remains meaningful under complex environments, but also reveals its limitations: highly dynamic scenes may induce multi-band, time-varying spectra for which a single fixed low-pass operating point is suboptimal. These observations motivate future extensions toward adaptive or robust frequency-matching mechanisms—potentially informed by multi-modal cues—while preserving the analysis-driven design principles and energy-efficiency advantages of spiking dynamics.

\paragraph{Method Generality beyond mmWave.}
Although our empirical analysis is instantiated on mmWave sensing, the proposed frequency-aware SNN analysis is not modality-specific.
It applies to any sensing pipeline where (i) the input spectrum contains non-trivial high-frequency noise, while (ii) discriminative cues also reside in the high-frequency band.
Under this regime, our analysis provides a principled way to characterize and control the trade-off between noise suppression and information preservation, rather than relying on ad-hoc low-pass heuristics.
This suggests immediate applicability to other interference-prone modalities (e.g., Radio Frequency (RF)~\cite{yang2022tarf}, acoustic~\cite{lapins2024n2n}, vibration~\cite{pan2025deep},  Wi-Fi Channel State Information (CSI)~\cite{miao2025wifi} and event-like streams~\cite{johnston2025method}), where similar spectral entanglement between noise and task-relevant content is common.

\paragraph{On the Choice of LIF Variants}

Beyond the standard LIF neuron considered in the main paper, we conducted a preliminary study of several LIF variants within the same LeNet backbone and evaluated them across all four mmWave datasets. Figure~\ref{fig:lif_variants} summarizes their classification accuracy. We observe that certain variants, such as PLIF~\cite{fang2021incorporating}, can achieve competitive or even improved performance on specific datasets. However, their behavior is notably less consistent across datasets than vanilla LIF, with accuracy gains on some datasets accompanied by degradation on others. These results suggest that while more flexible neuron dynamics may increase representational capacity, they also introduce stronger dataset dependence. In contrast, the standard LIF exhibits more stable performance, consistent with our analysis that emphasizes principled data--model spectral matching rather than increased neuronal complexity. A comprehensive investigation of alternative spiking neuron models is beyond the scope of this work. Nevertheless, this preliminary evidence highlights an important direction for future research: systematically characterizing the temporal and spectral properties of different LIF variants (e.g., PLIF, GLIF~\cite{yao2022glif}, PSN~\cite{fang2023parallel}), and analyzing how their intrinsic dynamics align with dataset-specific discriminative frequency structures. Such an analysis could further generalize the data--model matching framework developed in this paper and guide informed neuron-model selection for mmWave sensing and other similar datasets.

\begin{figure*}[htbp]
  \centering
  \includegraphics[width=0.9\textwidth]{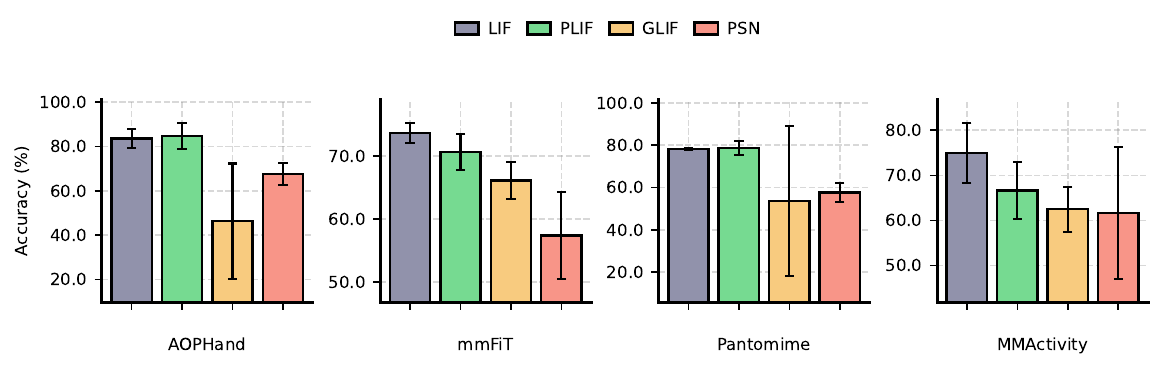}
  \caption{Classification accuracy of LeNet-based SNNs equipped with different spiking neuron variants across four mmWave datasets. While certain variants (e.g., PLIF) achieve strong performance on specific datasets, their accuracy varies substantially across datasets. In contrast, the standard LIF neuron exhibits more consistent performance, supporting the importance of stable data--model spectral matching over increased neuronal complexity.}
  \label{fig:lif_variants}
\end{figure*}

\paragraph{From model-level evaluation to end-to-end deployment.}
Our current system study focuses on model inference and has been validated on five edge devices with consistent latency measurements.
A natural next step is to extend the evaluation boundary to the full sensing-to-decision stack~\cite{jin2024rodar,zhao2023cubelearn}, where practical factors such as mmWave antenna array configuration, signal conditioning, channel decoding, and on-device pre-processing can shift the effective spectrum presented to the network.
In this setting, the proposed analysis can serve as an interface between signal processing choices and SNN dynamics, enabling joint optimization across the front-end and the inference back-end.

\paragraph{Platform outlook and neuromorphic deployment.}
Beyond general-purpose edge hardware, the proposed methodology is compatible with emerging neuromorphic and specialized platforms.
In particular, we have deployed our models on the Darwin3 chip and observed inference latency comparable to ANN deployments on commodity devices, indicating that the gap between SNN deployment and practical latency constraints can be significantly reduced with appropriate software--hardware co-design~\cite{yao2024spike}.
This opens up a broader design space for future work: co-optimizing representation, SNN parameters, and runtime scheduling for memory/latency/energy targets on constrained accelerators.

% \section{Code Availability}
% \label{app:code}

% We release part of our code implementation as supplementary materials in a zip file, and we promise to release the complete version upon publication of our paper. 

%%%%%%%%%%%%%%%%%%%%%%%%%%%%%%%%%%%%%%%%%%%%%%%%%%%%%%%%%%%%%%%%%%%%%%%%%%%%%%%
%%%%%%%%%%%%%%%%%%%%%%%%%%%%%%%%%%%%%%%%%%%%%%%%%%%%%%%%%%%%%%%%%%%%%%%%%%%%%%%

\end{document}